\newcommand{\mc}[1]{\mathcal{#1}}
\newcommand{\hDelta}{{\hat{\Delta}}}
\newcommand{\Dis}{{\text{Dis}}}
\theoremstyle{plain}
\newtheorem{lemma}{Lemma}[section]
\newtheorem{theorem}{Theorem}[section]
\newtheorem{proposition}{Proposition}[section]
\newtheorem{corollary}{Corollary}[section]
\theoremstyle{remark}
\newtheorem{remark}{Remark}[section]
\def\Ebb{\mathbb{E}}
\def\*{\star}
\DeclareMathOperator*{\argmin}{arg\,min}
\DeclareMathOperator*{\argmax}{arg\,max}
\newcommand{\E}{\Ebb}
\newcommand{\Var}{\mathrm{Var}}
\newcommand{\field}[1]{\mathbb{#1}}
\newcommand{\fP}{\field{P}}
\newcommand{\fN}{\field{N}}
\newcommand{\calI}{{\mathcal{I}}}
\newcommand{\calD}{{\mathcal{D}}}
\newcommand{\calE}{{\mathcal{E}}}
\newcommand{\calH}{{\mathcal{H}}}
\newcommand{\calX}{{\mathcal{X}}}
\newcommand{\calZ}{{\mathcal{Z}}}
\newcommand{\order}{\ensuremath{\mathcal{O}}}
\newcommand{\otil}{\ensuremath{\widetilde{\mathcal{O}}}}
\newcommand{\inprod}[1]{ \left\langle {#1} \right\rangle }
\newcommand{\one}{\boldsymbol{1}}
\newcommand{\hout}{h_{\textrm{out}}}
\newcommand{\Ctotal}{C_{\textrm{total}}}
\newcommand{\bCtotal}{\overline{C}_{\textrm{total}}}
\newcommand{\avgR}{\overline{R}}
\newcommand{\Bernouli}{\text{Ber}}
\title{Corruption Robust Active Learning}
\author{%
    Yifang Chen, Simon S. Du,  Kevin Jamieson\\
    Paul G. Allen School of Computer Science \& Engineering\\
    University of Washington, Seattle,WA\\
    \texttt{ \{yifangc, ssdu, jamieson \}@cs.washington.edu}\\
}
\begin{document}

\maketitle
\begin{abstract}
We conduct theoretical studies on streaming-based active learning for binary classification under unknown adversarial label corruptions. 
In this setting, every time before the learner observes a sample, the adversary decides whether to corrupt the label or
not.
First, we show that, in a benign corruption setting (which includes the misspecification setting as a special case),
with a slight enlargement on the hypothesis elimination threshold,
the classical RobustCAL framework can
(surprisingly) achieve nearly the same label complexity guarantee as in the non-corrupted setting. However, this algorithm can fail in the general corruption setting.
To resolve this drawback, we propose a new algorithm which is provably correct without any assumptions on the presence of corruptions.
Furthermore, this algorithm enjoys the minimax label complexity in the non-corrupted setting (which is achieved by RobustCAL) and only requires $\tilde{\mathcal{O}}(C_{\mathrm{total}})$ additional labels in the corrupted setting to achieve $\mathcal{O}(\varepsilon + \frac{C_{\mathrm{total}}}{n})$, where $\varepsilon$ is the target accuracy, $C_{\mathrm{total}}$ is the total number of corruptions and $n$ is the total number of unlabeled samples.



\end{abstract}
\section{Introduction}
An active learning algorithm for binary classification aims to obtain the best hypothesis (classifier) from some given hypothesis set while requesting as few labels as possible. Under some favorable conditions,  active learning algorithms can require exponentially fewer labels then passive, random sampling \citep{hanneke_theory_2014}. 
Active learning is ideally suited for applications where large datasets are required for accurate inference, but the cost of paying human annotators to label a dataset is prohibitively large \citep{joshi,yang2015multi,beluch2018power}.
A bit more formally, for an example space $\calX$ (such as a set of images) and label space $\{0,1\}$ (like whether the image contains a human-made object or not), let $\calH$ be a hypothesis class such that for each $h \in \calH$, we have $h: \calX \to \{0,1\}$. After a certain number of labels are requested, the learner will output a target hypothesis $\hout \in \calH$. In this paper, we consider the streaming setting where at each time $t$ nature reveals $x_t \sim \calD_X$ and an active learning algorithm must make the real-time decision on whether to request the corresponding label $y_t$ or not.
Such a streaming setting of active learning is frequently encountered in online environments such as learning a spam filter or fraud detection (i.e., mark as spam/fraudulent and do not request the label, or send to inbox/expert to obtain a label).

This paper is interested in a setting when the requested label $y_t$ is potentially \emph{corrupted} by an adversary. 
That is, when requesting the label for some example $x_t \in \mc{X}$, if uncorrupted the learner will receive a label drawn according to the ``true'' conditional label distribution, but if corrupted, the learner will receive a label drawn from an \emph{arbitrary} distribution decided by an adversary.  
This setting is challenging because the learner has no a priori knowledge of when or how many corruptions will occur. And if the learner is collecting data adaptively, he may easily be misled into becoming confident in an incorrect belief, collect data based on that belief, and never recover to output an accurate classifier even if the adversary eventually stops serving corrupted labels later. 
This greatly contrasts with the passive setting (when all labels are observed) where as long as the number of corrupts grows sub-linearly over time, the effect of the corruptions will fade and the empirical risk minimizer will converge to an accurate classifier with respect to the uncorrupted labels.

The source of corruptions can come from  automatic labeling, non-expert labeling, and, mostly severely, adaptive data poisoning adversaries.  Particularly, with the rise of crowdsourcing, it is increasingly feasible for such malicious labelers to enter the system (\cite{miller2014adversarial}). 
There have been many prior works that consider robust \emph{offline} training using corrupted labels (i.e., the passive setting) \citep{hendrycks2018using,pmlr-v97-yu19b}.
Correspondingly, related corruption settings have also been considered in online learning \citep{gupta2019better,zimmert2019optimal,wei2020taking} and reinforcement learning \citep{lykouris2020corruption,chen2021improved}. However, there is a striking lack of such literature in the active learning area. 
Existing disagreement-based active learning algorithms nearly achieve the minimax label complexity for a given target accuracy when labels are trusted \citep{hanneke_theory_2014}, but they fail to deal with the case where the labels are potentially corrupted. 


\paragraph{Our contributions: }
In this paper, we study active learning in the agnostic, streaming setting where an unknown number of labels are potentially corrupted by an adversary. 
We begin with the performance of existing baseline algorithms.
\begin{itemize}
    \item Firstly, we analyze the performance of empirical risk minimization (ERM) for the passive setting where all labels are observed, which will output an $\left(\varepsilon + \frac{R^*C_\textrm{total}}{n}\right)$-optimal hypothesis as long as $n \gtrapprox \frac{1}{\epsilon} + \frac{R^*}{\varepsilon^2}$, where $R^*$ is the risk of best hypothesis. This result serves as a benchmark for the following active learning results (Section~\ref{sec: passive learning}).
\end{itemize}
If we assume that the disagreement coefficient, a quantity that characterizes the sample complexity of active learning algorithms~\citep{hanneke_theory_2014}, is some constant, then we obtain the following results for active learning. 
\begin{itemize}
    \item Secondly, we analyze the performance of a standard active learning algorithm called RobustCAL \citep{balcan2009agnostic,dasgupta2007general,hanneke_theory_2014} under a benign assumption on the corruptions (misspecification model is a special case under that assumption).We show that, by slightly enlarging the hypothesis elimintion threshold, this algorithm can achieve almost the same label complexity as in the non-corrupted setting.  That is, the algorithm will output an $\order(\varepsilon + \frac{R^*\Ctotal}{n})$-optimal hypothesis as long as $n \gtrapprox \frac{R^*}{\varepsilon^2} + \frac{1}{\varepsilon} $ with at most $\otil\left(R^*n + \log(n)\right)$ number of labels (Section~\ref{sec: RobustCal}).
    \item Finally and most importantly, in the general corruption case without any assumptions on how corruptions allocated, we propose a new algorithm that matches RobustCAL in the non-corrupted case and only requires $\otil(C_\textrm{total})$ additional labels in the corrupted setting. That is, the algorithm will output an $\left(\varepsilon+ \frac{C_\textrm{total}}{n}\right)$-optimal hypothesis as long as $n \gtrapprox \frac{1}{\varepsilon^2}$  with at most $\otil\left((R^*)^2n + \log(n) + \Ctotal\right)$ number of labels. Besides, this algorithm also enjoys an improved bound under a benign assumption on the corruptions. That is, the algorithm will output an $\left(\varepsilon+ \frac{R^*C_\textrm{total}}{n}\right)$-optimal hypothesis with at most $\otil\left((R^*)^2n + \log(n) + R^*\Ctotal\right)$ number of labels  (Section~\ref{sec: main algo})
\end{itemize} 
Note that $\Ctotal$ can be regarded as a fixed budget or can be increasing with incoming samples. In the latter case, $\Ctotal$ in the second and third result can be different since they may require different $n$. Detailed comparison between these two results will be discussed in corresponding sections.

\paragraph{Related work: } 
For nearly as long as researchers have studied how well classifiers  generalize beyond their performance on a finite labelled dataset, they have also been trying to understand how to minimize the potentially expensive labeling burden.  
Consequently, the field of active learning that aims to learn a classifier using as few annotated labels as possible by selecting examples sequentially is also somewhat mature  \citep{settles2011theories,hanneke_theory_2014}.
Here, we focus on just the agnostic, streaming setting where there is no relationship assumed a priori between the hypothesis class and the example-label pairs provided by Nature. 
More than a decade ago, a landmark algorithm we call RobustCAL was developed for the agnostic, streaming setting and analyzed by a number of authors that obtains nearly minimax performance \citep{balcan2009agnostic,dasgupta2007general,hanneke_theory_2014}. 
The performance of RobustCAL is characterized by a quantity known as the disagreement coefficient that can be large, but in many favorable situations can be bounded by a constant $\theta^*$ which we assume is the case here.  
In particular, for any $\epsilon >0$, once Nature has offered RobustCAL $n$ unlabelled samples, RobustCAL promises to return a classifier with error at most $\sqrt{ R^* \log(|\mc{H}|)/ n} + \log(|\mc{H}|) /n$ and requests just $n R^*  + \theta^* ( \sqrt{n R^* \log(|\mc{H}|)} + \log(|\mc{H}|) )$ labels with high probability. 
Said another way, RobustCAL returns an $\epsilon$-good classifier after requesting just $\theta^* ( (R^*)^2/ \epsilon^2 + \log(1/\epsilon) )$ labels.  
If $\theta$ is treated as an absolute constant, this label complexity is minimax opitmal \citep{hanneke_theory_2014}.
While there exist algorithms with other favorable properties and superior performance under special distributional assumptions (c.f., \cite{zhang2014beyond,koltchinskii2010rademacher,balcan2007margin,balcan2013active,huang2015efficient}), we use RobustCAL as our benchmark in the uncorrupted setting.
We note that since RobustCAL is computationally inefficient for many classifier classes of interest, a number of works have addressed the issue at the cost of a higher label sample complexity \citep{beygelzimer2009importance,beygelzimer2010agnostic,hsu2010algorithms,krishnamurthy2017active} or higher unlabeled sample complexity \citep{huang2015efficient}. 
Our own work, like RobustCAL, is also not computationally efficient but could benefit from the ideas in these works as well. 

To the best of our knowledge, there are few works that address the non-IID active learning setting, such as the corrupted setting of this paper.
Nevertheless, \cite{miller2014adversarial} describes the need for robust active learning algorithms and the many potential attack models.
While some applied works have proposed heuristics for active learning algorithms that are robust to an adversary \citep{deng2018adversarial, pi2016defending}, we are not aware of any that are provably robust in the sense defined in this paper. 
Active learning in crowd-sourcing settings where labels are provided by a pool of a varying quality of annotators, some active learning algorithms have attempted to avoid and down-weight poorly performing annotators, but these models are more stochastic than adversarial \citep{khetan2016achieving}. 
The problem of selective sampling or online domain adaptation studies the setting where $P(Y_t = 1 | X_t = x)$ remains fixed, but $P(X_t=x)$ drifts and the active learner aims to compete with the best online predictor that observes all labels \citep{yang2011active,dekel2012selective,hanneke2021toward,chen2021active}.
Another relevant line of work considers the case where the distribution of the examples drifts over time (i.e., $P(X_t = x)$) \citep{rai2010domain} or the label proportions have changed (i.e, $P(Y_t = 1)$) \citep{zhao2021active}, but the learner is aware of the time when the change has occurred and needs to adapt. 
These setting are incomparable to our own.

Despite the limited literature in active learning, there have been many existing corruption-related works in the related problem areas of multi-arm bandits (MAB), linear bandits and episodic reinforcement learning.  To be specific, for MAB, \citet{gupta2019better} achieves $\tilde{\order}(\sum_{a\neq a^*} \frac{1}{\Delta_a}+KC)$ by adopting a sampling strategy based on the estimated gap instead of eliminating arms permanently. Our proposed algorithm is inspired by this ``soft elimination" technique and requests labels based on the estimated gap of each hypothesis.  Later, \citet{zimmert2019optimal} achieves a near-optimal result $\tilde{\order}\left(\sum_{a\neq a^*} \frac{1}{\Delta_a}+\sqrt{\sum_{a\neq a^*} \frac{C}{\Delta_a}}\right)$ in MAB by using Follow-the-Regularized Leader (FTRL) with Tsallis Entropy. How to apply the FTRL technique in active learning, however, remains an open problem. 
Besides MAB, \cite{lee2021achieving} achieves $\otil\left( \text{GapComplexity} + C\right)$ in stochastic linear bandits. 
We note that the linear bandits papers of  \cite{lee2021achieving} and \cite{camilleri2021highdimensional} both leverage the Catoni estimator that we have repurposed for robust gap estimation in our algorithm. Finally, in the episodic reinforcement learning, \cite{lykouris2020corruption} achieves $\otil\left( C \cdot\text{GapComplexity} + C^2\right)$ in non-tabular RL and \cite{chen2021improved} achieves $\otil\left( \text{PolicyGapComplexity} + C^2\right)$ in tabular RL.



\section{Preliminaries}
\label{sec: preliminary}

\paragraph{General protocol: } A hypothesis class $\mc{H}$ is given to the learner such that for each $h \in \mc{H}$ we have $h : \mc{X} \rightarrow \{0,1\}$. 
Before the start of the game, Nature will draw $n$ unlabeled samples in total. 
At each time $t \in \{1,\dots,n\}$, nature draws $(x_t,y_t) \in \calX \times \{0,1\}$ independently from a joint distribution $\calD_t$, the learner observes just $x_t$ and chooses whether to request $y_t$ or not. Note that in this paper, we assume $\calX$ is countable, but it can be directly extend to uncountable case. Next, We denote the expected risk of a classifier $h \in \mc{H}$ under any distribution $\calD$ as $R_\calD(h) = \E_{x,y\sim \calD}\left( \one\{ h(x) \neq y\}\right)$, the marginalized distribution of $x$ as $\nu$ and probability of $y= 1$ given $x$ and $\calD$ as $\eta^x$. Finally we define $\rho_\calD(h,h') = \E_{x \sim \nu} \one\{h(x) \neq h'(x)\}$.

\paragraph{Uncorrupted model: } 
In the traditional uncorrupted setting, there exists a fixed underlying distribution $\calD_*$ where each $(x_t,y_t)$ is drawn from this i.i.d distribution. Correspondingly, we define the marginalized distribution of $x$ as $\nu_*$ and probability of $y=1$ given $x$ and $\calD_*$ as $\eta_*^x$.

\paragraph{Oblivious adversary model: }
In the corrupted setting, the game at time $t$ is corrupted if  $(x_t,y_t)$ is drawn from some corrupted distribution $\calD_t$ that differs from the base $\calD_*$. 
At the start of the game the adversary chooses a sequence of functions $\eta_t^x : \mc{X} \rightarrow [0,1]$ for all $t \in \{1,\dots,n\}$. 
    The corruption level at time $t$ is measured as
    \begin{align*}
    c_t = \max_{x \in \calI} |\eta_*^x - \eta_t^x|,
\end{align*}
and the amount of corruptions during any time interval $\calI$  as $C_\calI = \sum_{t \in \calI} c_t$. Correspondingly, we define $\Ctotal = C_{[0,n]}$.
Then, Nature draws $x_t \sim \nu_*$ for each $t \in \{1,\dots,n\}$ so that each $x_t$ is independent of whether $y_t$ was potentially corrupted or not. 
One notable case case of the oblivious model is the $\gamma$-misspecification model. In the binary classification setting, it is equivalent to
\begin{align*}
    \eta_t^x = (1-\gamma)\eta_*^x + \gamma \Tilde{\eta}_t^x, \forall x.t.
\end{align*}
where $\Tilde{\eta}_t^x$ can be any arbitrary probability. Such label contamination model can be regarded a special case of corruption where for each $t$, 
\begin{align*}
    c_t = \max_x |\eta_t^x - \eta_*^x| = \gamma \max_x|\eta_*^x - \Tilde{\eta}^x| \leq \epsilon.
\end{align*}



\paragraph{Other notations: } For convenience, we denote $R_{\calD_t}(h)$ as $R_t(h)$, $R_{\calD_*}(h)$ as $R_*(h)$, $\rho_{\calD_t}(h,h') = \rho_t(h,h')$ and $\rho_{\calD_*}(h,h') = \rho_*(h,h')$.  We also define an average expected risk that will be used a lot in our analysis, $\bar{R}_\calI(h) = \frac{1}{|\calI|} \sum_{t \in \calI} R_t(h)$. In addition, we define $h^*= \argmin R_*(h)$, $R^* = R_*(h^*)$ and the gap of the suboptimal classifier $h$ as $\Delta_h = R_*(h) - R^*$. 


\paragraph{Disagreement coefficient: } For some hypothesis class $\calH$ and subset $V \subset \calH$, the region of disagreement is defined as
$
\Dis(V)=\left\{x \in \calX: \exists h, h' \in \calH \text { s.t. } h(x) \neq h'(x)\right\},
$
which is the set of unlabeled examples $x$ for which there are hypotheses in $V$ that disagree on how
to label $x$. Correspondingly, the disagreement coefficient of $h \in \calH$ with respect to a hypothesis class $\calH$ and. distribution $\nu_*$ is defined as
$$
\theta^*(r_0)=\sup_{r \geq r_0} \frac{\mathbb{P}_{X \sim \nu_*}(X \in \Dis(B(h^*, r)))}{r}.
$$

\section{Passive Learning in the Corrupted Setting}
\label{sec: passive learning}
We first analyze the performance of empirical risk minimization (ERM)  for passive learning in the corrupted setting as a benchmark. 

\begin{theorem}[Passive Learning]
\label{them: passive learning}
After $n$ labeled samples, if $\hout = \argmin_{h} \hat{R}_{[1,n]}(h)$ is the empirical risk minimizer, then with probability at least $1-\delta$, we have
\begin{align*}
    R_*(\hout) - R^* \leq \frac{\log(|\calH|/\delta)}{n} +\sqrt{\frac{8R^*\log(|\calH|/\delta)}{n}}+ \frac{8\Ctotal}{n}R^* +  \frac{5\log(|\calH|/\delta)}{n} \frac{1}{(1-\frac{4\Ctotal}{n})^2}, 
\end{align*}
This implies that, as long as $\Ctotal$ is small than some fraction of $n$, e.g., $\Ctotal \leq \frac{n}{8}$, we can obtain
$R_*(\hout) - R^* \leq \varepsilon + \frac{\Ctotal}{n}R^*$
whenever
\begin{align*}
    n \geq \frac{2\log(|\calH|/\delta)}{\varepsilon} + \frac{8R^*\log(|\calH|/\delta)}{\varepsilon^2}
\end{align*}
\end{theorem}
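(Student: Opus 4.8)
The plan is to run the standard agnostic ERM argument but to propagate the corruption through a \emph{relative} (multiplicative) deviation rather than through the crude additive estimate $|\bar{R}_{[1,n]}(h)-R_*(h)|\le \Ctotal/n$; this refinement is exactly what produces a corruption term scaling with $R^*$ instead of a free additive $\Ctotal/n$. Throughout I would write $L=\log(|\calH|/\delta)$ and abbreviate $\hat{R}=\hat{R}_{[1,n]}$, $\bar{R}=\bar{R}_{[1,n]}$. The first observation is that because Nature always draws $x_t\sim\nu_*$ and only the label conditional $\eta_t^x$ is corrupted, the per-sample loss $\one\{h(x_t)\neq y_t\}$ has conditional expectation $R_t(h)$, so $\E[\hat{R}(h)]=\bar{R}(h)$ for every fixed $h$.

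The key corruption estimate comes next. Writing $R_t(h)-R_*(h)=\E_{x\sim\nu_*}[(\eta_t^x-\eta_*^x)(\one\{h(x)=0\}-\one\{h(x)=1\})]$ and subtracting the analogous identity for $h^*$, the resulting integrand is supported on the disagreement set $\{x:h(x)\neq h^*(x)\}$ and bounded there by $2|\eta_t^x-\eta_*^x|\le 2c_t$, giving
\[
\bigl|(R_t(h)-R_t(h^*))-(R_*(h)-R^*)\bigr|\le 2c_t\,\rho_*(h,h^*).
\]
Averaging over $t\in[1,n]$ yields $\bigl|(\bar{R}(h)-\bar{R}(h^*))-\Delta_h\bigr|\le \tfrac{2\Ctotal}{n}\rho_*(h,h^*)$ for all $h$, which localizes the corruption to the disagreement region and is the crucial improvement over the additive bound.

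For concentration, I would set $g_t(h)=\one\{h(x_t)\neq y_t\}-\one\{h^*(x_t)\neq y_t\}\in\{-1,0,1\}$; these are independent across $t$ with $\E[g_t(h)^2]=\rho_*(h,h^*)$, so the variance of their sum is controlled by $\rho_*(h,h^*)$. Bernstein's inequality plus a union bound over $\calH$ gives, with probability $1-\delta$ and simultaneously for all $h$,
\[
(\bar{R}(h)-\bar{R}(h^*))-(\hat{R}(h)-\hat{R}(h^*))\le \sqrt{\tfrac{2\rho_*(h,h^*)L}{n}}+\tfrac{2L}{3n}.
\]
Instantiating at the random $\hout$ and using the ERM optimality $\hat{R}(\hout)\le\hat{R}(h^*)$ together with the step-two bound produces
\[
\Delta_{\hout}\le \sqrt{\tfrac{2\rho_*(\hout,h^*)L}{n}}+\tfrac{2L}{3n}+\tfrac{2\Ctotal}{n}\rho_*(\hout,h^*).
\]

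The final step converts this into a bound in $\Delta_{\hout}$, $R^*$ and $\Ctotal$ alone. I would apply the agnostic triangle inequality $\rho_*(\hout,h^*)\le R_*(\hout)+R^*=\Delta_{\hout}+2R^*$, substitute, move the $\tfrac{2\Ctotal}{n}\Delta_{\hout}$ term to the left (creating the coefficient $1-\tfrac{4\Ctotal}{n}$ after accounting for the $\sqrt{\cdot}$ contribution), split $\sqrt{\tfrac{2(\Delta_{\hout}+2R^*)L}{n}}\le\sqrt{\tfrac{2\Delta_{\hout}L}{n}}+\sqrt{\tfrac{4R^*L}{n}}$, and absorb the first piece via $\sqrt{uv}\le\tfrac{\lambda}{2}u+\tfrac{1}{2\lambda}v$ with $\lambda$ tuned to that coefficient. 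Dividing through then yields the four stated contributions: the estimation term $\sqrt{R^*L/n}$, the lower-order $\tfrac{L}{n}$, the multiplicative corruption term $\tfrac{\Ctotal R^*}{n}$ arising from $\tfrac{2\Ctotal}{n}\cdot 2R^*$, and the $\tfrac{L}{n}(1-\tfrac{4\Ctotal}{n})^{-2}$ term from the inverted coefficient. The ``This implies'' corollary follows because $\Ctotal\le n/8$ keeps $(1-\tfrac{4\Ctotal}{n})^{-2}$ bounded by an absolute constant, so choosing $n\gtrsim L/\varepsilon+R^*L/\varepsilon^2$ drives the estimation and lower-order terms below $\varepsilon$.

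I expect the delicate point to be step two rather than the (routine) self-bounding algebra at the end: using the naive additive corruption bound in place of the $\rho_*$-weighted one would inject an additive $\Ctotal/n$ into the excess risk and lose the $R^*$ multiplier, so the whole gain rests on exploiting that the $x$-marginal is uncorrupted in order to confine the corruption integral to $\{h\neq h^*\}$. The secondary care point is matching Bernstein's variance proxy to $\rho_*(h,h^*)$ (so the corruption and estimation terms both scale with the same quantity) and tuning $\lambda$ so the coefficient of $\Delta_{\hout}$ reproduces the claimed $(1-\tfrac{4\Ctotal}{n})$ factor.
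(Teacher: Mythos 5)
Your proposal is correct and takes essentially the same route as the paper's proof: the same decomposition of $R_*(\hout)-R^*$, the same $\rho_*$-weighted corruption bound (the paper's ``Corruption effects 2'' lemma), Bernstein's inequality with variance proxy $\rho_*(\hout,h^*)$, ERM optimality, and the triangle inequality $\rho_*(\hout,h^*)\le \Delta_{\hout}+2R^*$. The only difference is the closing algebra---you absorb the $\Delta_{\hout}$ contributions via a tuned AM--GM and divide through, whereas the paper writes $\rho_*\le 2\max\{\Delta_{\hout},2R^*\}$ and either reads off the bound (when $\Delta_{\hout}\le 2R^*$) or solves a quadratic inequality (otherwise)---a cosmetic distinction yielding the same result up to constants.
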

\paragraph{Proof Sketch } By using Bernstein inequality and the definition of corruptions, we can get
\begin{align*}
    &R_*(\hout) - R^* \\
    &\leq \frac{4\Ctotal}{n}\max\{R_*(\hout) - R^*, 2R^* \}
        + \sqrt{\frac{4\log(|\calH|/\delta)\max\{R_*(\hout) - R^*, 2R^* \}}{n}} + \frac{\log(|\calH|/\delta)}{n}
\end{align*}
Then we can directly get the result by solving this inequality. We postpone the details into Appendix~\ref{sec(app): passive learning}.

In addition to this result providing a benchmark, this passive learning result also inspires our analysis of RobustCAL in the corrupted setting as we will show in the next section.

\section{Robust CAL in the Corrupted Setting}
\label{sec: RobustCal}


We restate the classical RobustCAL \citep{balcan2009agnostic,dasgupta2007general,hanneke_theory_2014} in Algorithm~\ref{alg:robustcal} with slightly enlargement on the confidence threshold used in the elimination condition (Line~\ref{line: elimination condition}). 
The additional term $\frac{1}{2}\hat{\rho}_t(h,\hat{h}_t)$ ensures robustness because each $(R_t(h) - R_t(h')$ will be corrupted at most $2\rho_*(h,h')c_t$.
In the theorem below we show that, it can achieve the similar label complexity result as in the non-corrupted setting as long as the growth rate of corruptions is at most in a certain fraction of number of unlabeled samples.

\begin{algorithm}[t] 
\caption{RobustCAL (modified the elimination condition)}
\begin{algorithmic}[1] 
\label{alg:robustcal}
\STATE \textbf{Input: } confidence parameter $\delta$
\FOR{ $t=1,2,\ldots,n$ }
\STATE Nature reveals unlabeled data point $x_t$
\IF{ $x_t \in \Dis(V_t)$}
    \STATE Query $y_t$ and set $\hat{l}_t(h) = \one\{h(x_t) \neq y_t\}$ for all $h \in \calH$
\ENDIF
\IF{$\log(t) = \fN$}
    \STATE Set $\hat{L}_t(h) = \frac{1}{t}\sum_{s \in t} \hat{l}_s(h)$ and $\hat{h}_t = \argmin_{h\in V_t} \hat{L}_t(h)$ 
    \STATE Set $\hat{\rho}_t(h,h') = \frac{1}{t}\sum_{t} \one\{ h(x_t) \neq h'(x_t)\}$ and $\beta_t = \log(3\log(t)|\calH|^2/\delta)$
    \STATE Set 
    $V_{t+1} = \left\{h \in V_{\log(t)}: \hat{L}_t(h) - \hat{L}_t(\hat{h}_t)
    \leq \sqrt{\frac{2\beta_t\hat{\rho}_t(h,\hat{h}_t)}{t}} + \frac{3\beta_t}{2t} + \frac{1}{2}\hat{\rho}_t(h,\hat{h}_t)\right\}$
    \label{line: elimination condition}
\ELSE
    \STATE $V_{t+1} = V_t, \beta_{t+1} = \beta_t, \hat{h}_{t+1}$
\ENDIF
\ENDFOR
\STATE \textbf{Output: } $ \argmin_{h\in V_t} \hat{L}_t(h)$
\end{algorithmic}
\end{algorithm} 


\begin{theorem}
\label{prop: robust cal for known C}
Suppose the $C_{[0,t]}\leq \frac{t}{8}$ for all $t \in \{ \log(t) = \fN\}$, for example, the $(1/8)$-misspecification model. Then with high probability as least $1-\delta$, for any $n \geq (\frac{8R^*}{\varepsilon^2} + \frac{22}{\varepsilon})\log(\log(n)|\calH|^2/\delta)$, we have $R_{\hout} - R^* \leq \varepsilon + \order(\frac{R^*\Ctotal}{n})$ with label complexity at most
\begin{align*}
    \order\left(\theta^*(14R^* + 120\frac{\log(\log(n)|\calH|^2/\delta)}{n})\log(\log(n)|\calH|^2/\delta)\left(R^*n + \log(n)\right) \right) 
\end{align*}
\end{theorem}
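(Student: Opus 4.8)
The plan is to run the standard disagreement-based analysis of RobustCAL, but to carry two corruption-specific bias terms through every inequality and show that the enlargement $\tfrac12\hat\rho_t(h,\hat h_t)$ together with the hypothesis $C_{[0,t]}\le t/8$ is exactly enough to neutralize them. First I would fix a high-probability \emph{good event} consisting of two families of concentration statements holding simultaneously for all update times $t$ (those with $\log t\in\fN$) and all $h\in\calH$. The first is a Freedman/martingale-Bernstein bound for the empirical loss gap: because labels are queried only on $\{x_t\in\Dis(V_t)\}$ and $V_t$ is adaptive, $\hat L_t(h)-\hat L_t(\hat h_t)$ is a sum of bounded martingale differences whose conditional variance is controlled by $\hat\rho_t(h,\hat h_t)$, giving $|(\hat L_t(h)-\hat L_t(\hat h_t))-(\bar R_{[1,t]}(h)-\bar R_{[1,t]}(\hat h_t))|\lesssim \sqrt{\beta_t\hat\rho_t(h,\hat h_t)/t}+\beta_t/t$, which is precisely the shape of the first two threshold terms. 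The second is a Bernstein bound for the \emph{uncorrupted} disagreements $\hat\rho_t(h,h')$ around $\rho_*(h,h')$ (legitimate because the $x_t$ are drawn i.i.d.\ from $\nu_*$ regardless of corruption), giving $\rho_*\le 2\hat\rho_t$ once $\rho_*\gtrsim\beta_t/t$. To these I would add the deterministic corruption estimate: from $R_t(h)-R_t(h')=\E_x[(h(x)-h'(x))(1-2\eta_t^x)]$ one gets $|(R_t(h)-R_t(h'))-(R_*(h)-R_*(h'))|\le 2\rho_*(h,h')c_t$, hence $|\bar R_{[1,t]}(h)-\bar R_{[1,t]}(h')-(R_*(h)-R_*(h'))|\le 2\rho_*(h,h')\,C_{[1,t]}/t$.

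Second I would show $h^*\in V_{t+1}$ for all $t$ by induction. Chaining the three bounds gives $\hat L_t(h^*)-\hat L_t(\hat h_t)\le \sqrt{2\beta_t\hat\rho_t(h^*,\hat h_t)/t}+\tfrac{3\beta_t}{2t}+2\rho_*(h^*,\hat h_t)C_{[1,t]}/t$, using $R_*(h^*)\le R_*(\hat h_t)$ to drop the true-risk gap. The enlargement absorbs the corruption bias exactly when $2\rho_*(h^*,\hat h_t)C_{[1,t]}/t\le\tfrac12\hat\rho_t(h^*,\hat h_t)$; substituting $\rho_*\le 2\hat\rho_t$ and $C_{[1,t]}/t\le\tfrac18$ makes the left side $\le\tfrac12\hat\rho_t$, so $h^*$ is never eliminated. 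For correctness I would then compare the output $\hout=\hat h_{t^\dagger}$ (at the last update $t^\dagger\asymp n$) against $h^*$: since $\hat L_{t^\dagger}(\hout)\le\hat L_{t^\dagger}(h^*)$, the same three bounds run in reverse give $\Delta_{\hout}\lesssim\sqrt{\beta_{t^\dagger}\rho_*(\hout,h^*)/t^\dagger}+\beta_{t^\dagger}/t^\dagger+2\rho_*(\hout,h^*)\,\Ctotal/t^\dagger$. Bounding $\rho_*(\hout,h^*)\le\Delta_{\hout}+2R^*$, using $\Ctotal/t^\dagger\le\tfrac18$ to move the $\Delta_{\hout}$-proportional piece to the left, and solving the resulting quadratic yields $\Delta_{\hout}\lesssim \sqrt{\beta R^*/n}+\beta/n+ R^*\Ctotal/n$; the stated condition $n\ge(8R^*/\varepsilon^2+22/\varepsilon)\beta$ forces the first two terms below $\varepsilon$, producing $R_*(\hout)-R^*\le\varepsilon+\order(R^*\Ctotal/n)$.

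Third, for the label complexity I would bound the version-space radius and invoke the disagreement coefficient. The key self-bounding step: for surviving $h$, lower-bounding $\hat L_t(h)-\hat L_t(\hat h_t)\ge \hat L_t(h)-\hat L_t(h^*)\ge \Delta_h-(\text{deviation})-2\rho_*(h,h^*)C_{[1,t]}/t$ and comparing with the survival threshold gives, after writing $r=\rho_*(h,h^*)$ and using $\hat\rho_t(h,h^*)\asymp r$, an inequality of the form $r-2R^*\le \Delta_h\lesssim \sqrt{\beta_t r/t}+\beta_t/t+\big(\tfrac12+\tfrac14\big)r$. Here the $\tfrac12 r$ is the enlargement and the $\tfrac14 r$ is the corruption bias (again via $C_{[1,t]}/t\le\tfrac18$), so the \emph{net} coefficient of $r$ on the right is $\tfrac34<1$; the leftover $\tfrac14 r$ absorbs everything and yields $r\lesssim R^*+\beta_t/t$. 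I would then take $r_s\asymp 14R^*+120\beta_n/n$ as the radius floor, so that $V_s\subseteq B(h^*,r_s)$ and $\mathbb P_{\nu_*}(\Dis(V_s))\le\theta^*(r_s)\,r_s\le\theta^*(14R^*+120\beta_n/n)\,r_s$. Summing the queried-round indicators, a final Freedman bound converts $\sum_s\one\{x_s\in\Dis(V_s)\}$ into $\lesssim\sum_s\mathbb P_{\nu_*}(\Dis(V_s))+\order(\beta_n)$, and the harmonic/geometric sum $\sum_{s\le n} r_s\lesssim R^* n+\beta_n\log n$ produces the claimed $\otil(\theta^*(R^* n+\log n))$.

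The hard part, I expect, will be the self-bounding radius argument of the third step rather than the concentration bookkeeping: one must verify that the enlargement does not stop the version space from contracting to the $\Theta(R^*)$ floor. This is where the $1/8$-fraction assumption is used in an essential and tight way — it guarantees that the enlargement ($\tfrac12 r$) plus the corruption bias ($\le\tfrac14 r$) leave a strictly positive margin ($\tfrac14 r$) for contraction, and tracking the exact constants ($14$, $120$, and the threshold choice $\beta_t=\log(3\log t\,|\calH|^2/\delta)$) through both the ``keep $h^*$'' and ``shrink the radius'' inequalities simultaneously is the delicate calculation. A secondary subtlety is the adaptivity of $\Dis(V_t)$, which forces the martingale (rather than i.i.d.) form of both the loss-gap bound and the final query-count concentration.
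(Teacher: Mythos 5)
Your proposal is correct and follows essentially the same route as the paper's own proof: the same good events (Bernstein bounds for the empirical loss gaps and for $\hat{\rho}_t$ versus $\rho_*$, plus the deterministic corruption-bias bound $|(\bar{R}_t(h)-\bar{R}_t(h'))-(R_*(h)-R_*(h'))|\le 2\rho_*(h,h')C_t/t$), the same induction keeping $h^*\in V_t$ by showing the enlarged threshold absorbs the corruption term under $C_t/t\le 1/8$, the same passive-learning-style quadratic argument for correctness, and the same self-bounding radius argument feeding into the disagreement coefficient at radius $14R^*+120\beta_t/t$. The only cosmetic difference is that the paper handles the threshold's dependence on $\hat{\rho}_t(h,\hat{h}_t)$ via a triangle inequality together with its passive-learning lemma bounding $\Delta_{\hat{h}_t}\lesssim R^*+\beta_t/t$, a bookkeeping step your sketch elides by writing $\hat{\rho}_t(h,h^*)\asymp r$.
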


\begin{remark}
In Appendix~\ref{sec(app): Why vanilla Robust CAL does not work?}, we show the necessity of enlarging the threshold in line~\ref{line: elimination condition} from the original 
\begin{align*}
    V_{t+1} = \left\{h \in V_{\log(t)}: \hat{L}_t(h) - \hat{L}_t(\hat{h}_t)
    \leq o\left( \sqrt{\frac{2\beta_t\hat{\rho}_t(h,\hat{h}_t)}{t}} + \frac{\beta_t}{t} \right)\right\}.
\end{align*}
by giving an counter-example. The counter-example shows that, when $R^* \gg 0$, the best hypothesis will be eliminated under the original condition even the ``$C_{[0,t]}\leq \frac{t}{8}$ for all $t \in \{ \log(t) = \fN\}$" assumption is satisfied.
\end{remark}

\paragraph{Proof Sketch} 
For correctness, it is easy to show by Bernstein inequality. For the sample complexity, Theorem~\ref{them: passive learning} implies that, for any interval $[0,t]$, as long as $C_{[0,t]} \leq \frac{t}{8}$, the learner can always identify hypothesis which are $\order(R^*+\frac{1}{n})$-optimal. Therefore, we get the probability of query as 
\begin{align*}
    \fP\left(x_{t+1} \in \Dis(V_{t+1}) \right)
    \leq \fP\left( \exists h \in V_{t+1}: h(x_t) \neq h^*(x_t), 
    \Delta_h \leq \order\left(R^* + \frac{\beta_t}{t}\right)\right) 
\end{align*}
Then by standard analysis we can connect this disagreement probability with the disagreement coefficient to get the final bound. One thing to note is that, at the first glance $\hat{\rho}_t(h,\hat{h}_t) $ might be much larger than the other two terms since it can goes to $1$, which possibly renders a worse label complexity. Here we give an intuitive explanation on why this threshold is fine: If $\hat{\rho}_t(h,\hat{h}_t) $ is close to the $|R(h) - R(\hat{h}_t)|$, then we can achieve the inequality above by using some self-bounding techniques. If $\hat{\rho}_t(h,\hat{h}_t) $ is close to the $R^*$, then we can directly get some $R^*$-dependent term in the target bound. The full proof is deferred to Appendix~\ref{sec(app): robust cal for known C}. 

\paragraph{Comparison between the modified RobustCal and passive learning: } Assume disagreement coefficient is a constant. In the non-corrupted case, the algorithm achieves the same performance guarantee as the vanilla Robust CAL. In the corrupted case, we still get the same accuracy as in Theorem~\ref{them: passive learning} with at most $\otil(R^*n + \log(n))$ number of labels, which is the same as the non-corrupted case. 

\paragraph{Discussion on the ``$C_{[0,t]} \leq \frac{t}{8}$ for all the $\{t| \log(t) \in \fN\}$" condition: } This condition can be reduced to the $(1/8)$-misspecification model as defined in Section~\ref{sec: preliminary} since $C_\calI \leq \frac{|\calI|}{8}$ for any $\calI$.
But this condition does not contain the case where an adaptive poisoning adversary corrupts all the labels at the earlier stage and stop corrupting later, which still ensures the small total amount of corruptions, but will clearly mislead the algorithm to delete a true best hypothesis $h^*$. 
\textit{ In the Section~\ref{sec: main algo}, we will show a more general result that applies to scenarios beyond $C_{[0,t]} \leq \frac{t}{8}$.
} 

\section{Main algorithm - CALruption}
\label{sec: main algo}


\subsection{Algorithm}
\begin{algorithm}[t] 
\caption{CALruption 
}
\begin{algorithmic}[1] 
\label{alg:main}
\STATE \textbf{Initialize: } $\beta_3= 2\log(\frac{3}{2}\lfloor\log(n) \rfloor |\calH|^2/\delta), \beta_1 = 32*640\beta_3, \beta_2 = \frac{5}{32},\epsilon_i = 2^{-i},N_l = \beta_1 \epsilon_l^{-2}$, $\hDelta_h^{0} = 0, V_1^0 = \calZ$ and $\tau_1= 1,q_l^x = 1$ for all $x \in \calX$ 
\FOR{$t=1,2,\ldots,n$}
\STATE Nature reveals unlabeled data point $x_t$
\STATE Set $Q_t \sim \text{Ber}(q_l^x)$ and request $y_t$ if $Q_t = 1$. \label{line: query with prob}
\STATE Set estimated loss for all $h \in \calH$ as 
    $\hat{\ell}_t(h) = \frac{\one\{h(x_t)\neq y_t\}}{q_l^x} Q_t$
\label{line: loss estimator}
\IF{$t = \tau_l+N_l-1 $ } 
    \STATE Set
        $
        \hat{\rho}_l(h,h') = \frac{1}{N_l} \sum_{t \in \calI_l} \one\{h(x_t) \neq h'(x_t)\} 
        $ for all $h,h' \in \calH$
    \label{line: begin calcualting $q$}
    \label{line: estimate disagreement}
    \STATE For each $(h,h')$, set $W_l^{h,h'} = \text{RobustEstimator}\left(\{ \hat{\ell}_t(h) - \hat{\ell}_t(h')\}_{t \in \calI_l}\right)$, which satisfies that,
    with probability at least $1-\delta$, 
    \begin{align}
        \label{condition: robust estimator}
        |(\hat{R}_l(h)-\hat{R}_l(h')) - W_l^{h,h'}|
        \leq \sqrt{ \frac{10\beta_3\hat{\rho}_l(h,h')}{N_l \min_{x \in \Dis(h,h')} q_l^x}},
    \end{align}
    where $\hat{R}_l(h) = \frac{1}{|\calI_l|}\sum_{t \in \calI_l} \E_{y \sim \Bernouli(\eta_t^{x_t})} \left[\one\{h(x_t) \neq y\}\right]$.
    \label{line: estimate gap 1}
    \STATE Set $\hat{\calD}_l = \argmin_{\calD} \max_{h,h' \in \calH} (R_{\calD}(h)-R_{\calD}(h') - W_l^{h,h'})\sqrt{\frac{\min_{x\in\Dis(h,h')} q_l^x}{\hat{\rho}_l(h,h')}}$
    \label{line: estimate model}
    \STATE Set $\hat{h}_*^l = \argmin_{h \in \calH} \left( R_{\hat{\calD}_l}(h) + \beta_2 \hat{\Delta}_h^{l-1} \right)$
    \label{line: estimate gap 2}
    \STATE Set $\hat{\Delta}_h^l = \max \left\{ \epsilon_l,   R_{\hat{\calD}_l}(h) -\left( R_{\hat{\calD}_l}(\hat{h}_*^l) + \beta_2 \hat{\Delta}_{\hat{h}_*^l}^{l-1} \right) \right\} $
    \label{line: estimate gap 3}
    \STATE Construct $V_{l+1}^i$ for all $i=0, 1,2,\ldots,l$, such that,  
    \begin{align*}
        &\hat{\Delta}_h^l \leq \epsilon_i, \forall h \in V_{l+1}^i \quad \text{ and }
        &\hat{\Delta}_h^l > \epsilon_i, \forall h \notin V_{l+1}^i
    \end{align*}
    Therefore, $V_{l+1}^l \subset V_{l+1}^{l-1} \subset \ldots \subset V_{l+1}^{0}$    
    \label{line: contruct hypothesis layers}
    \STATE Calculate the query probability $q_l^x$ for each $x$  as follows
\begin{align*}
    &\calZ(x) = \left\{ (h,h') \in \calH \mid x \in \Dis(\{h,h'\}) \right\}\\
    &k(h,h',l+1) = \max\{i \mid h,h'\in V_{l+1}^i\} \\
    &q_{l+1}^x = \max_{(h,h') \in \calZ(x)}\frac{\beta_1\hat{\rho}_l(h,h')}{N_{l+1}}\epsilon_{k(h,h',l+1)}^{-2}
\end{align*}    \label{line: end calcualting $q$}
    \STATE Set $\tau_{l+1} = \tau_l+N_l$ and denote the epoch $l$ as $\calI = [\tau_l,\tau_{l+1}-1]$. Set $l \leftarrow l+1$, go to the next epoch
\ENDIF
\ENDFOR
\STATE \textbf{Output: } $ h \in V_l^{l-1}$
\end{algorithmic}
\end{algorithm} 


In this section we describe our new algorithm, CALruption. The pseudo-code is listed in Algorithm~\ref{alg:main}.
Our previous analysis showed that in the agnostic setting the classical RobustCAL may permanently eliminate the best hypothesis due to the presence of corruptions. 
To fix this problem, in our CALruption algorithm, the learner never makes a  ``hard"  decision to eliminate any hypothesis. Instead, it assigns different query probability to each $x$ based on the estimated gap for each hypothesis as shown in line~\ref{line: query with prob} and \ref{line: loss estimator}, which can be regarded as ``soft elimination".
With this step, the key question becomes how to connect the estimated gaps with the query probability $q_l^x$.

We adopt the idea from the BARBAR algorithm proposed by \citet{gupta2019better} which was originally designed for multi-armed bandits (MAB).
Instead of permanently eliminating a hypothesis, the learner will continue pulling each arm with a certain probability defined by its estimated gap. 
However, the original BARBAR algorithm is mainly focused on estimating the reward of each individual arm.
This aligns with its MAB feedback structure, where only the information of the pulled arm will be gained at each time. 
In the active learning setting, we instead focus on the \emph{difference} of the risks of different hypothesis, because each time we request a label, values of all the hypothesis will be updated. Therefore, we implement a more complicated strategy to calculate the query probability at the end of each epoch $l$, as shown from line~\ref{line: begin calcualting $q$} to line~\ref{line: end calcualting $q$}. 

In line~\ref{line: estimate disagreement}, we estimate the disagreement probability for each hypothesis pair $(h,h')$ with an empirical quantity that upper bounds the expectation. In line~\ref{line: estimate gap 1}, instead of estimating the value of each hypothesis, we estimate the gap between each hypothesis pair $(h,h') $, denoted as $W_l^{h,h'},$ by any $\delta$-robust estimator that satisfies eq.~\ref{condition: robust estimator}. 
One example of $\delta$-robust estimator is Catoni estimator \citep{lugosi2019mean}. 
Note that simple empirical estimator will lead to potentially rare but large variance, which has been discussed in Stochastic rounding section in \cite{camilleri2021highdimensional}. 
But what we truly care is the gap between any hypothesis $h$ and the best hypothesis $h^*$. Therefore, inspired by \cite{camilleri2021highdimensional}, we construct such estimation by using $W_l^{h,h'}$ as shown in line~\ref{line: estimate model} to \ref{line: estimate gap 3}. Finally, we divide the hypothesis set into several layers based on the estimated gap and set the query probability for each $x$ based on the hypothesis layers, as shown in line~\ref{line: contruct hypothesis layers} and \ref{line: end calcualting $q$}. 

\begin{remark}
In Line~\ref{line: estimate model}, instead of estimating over all possible distribution $\calD$, we actually just need to estimate $\eta_*^x$ for all $x \in \{x_t\}_{t \in \calI_l}$ and set the corresponding $x$ distribution of $\calD$ as the empirical distribution of $x$ inside $\calI_l$.
\end{remark}

\begin{theorem}[CALruption]
\label{them: main}
With $n \geq 72\varepsilon^{-2}\beta_1$ number of unlabeled samples, with probability at least $1-\delta$ we can get an $h_{out}$ satisfying
\begin{align*}
    R_*(h_{out}) - R^* \leq \varepsilon + 24\frac{\bCtotal}{n}, 
\end{align*}
with label complexity as most
\begin{align*}
    \order\left(\theta^*(R^* + 3\sqrt{\frac{\beta_1}{n}} + \frac{64\bCtotal}{n})\log(\log(n) |\calH|^2/\delta)\left((R^*)^2n + \log(n)(1+\bCtotal)\right) \right) 
\end{align*}
where $\bCtotal = \sum_{l=1}^{\lfloor\log_4(n/\beta_1)\rfloor} C_{\text{epoch } l} \left(R^* \one\{\frac{C_{\text{epoch }l}}{N_l} \leq \frac{1}{32}\} + \one\{\frac{C_{\text{epoch }l}}{N_l} > \frac{1}{32}\}\right)$ and $\beta_1 = 16*640\log(\frac{3}{2}\lfloor\log(n) \rfloor |\calH|^2/\delta)$. Note that epoch l is prescheduled and not algorithm-dependent.
\end{theorem}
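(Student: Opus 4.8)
The plan is to fix a single high-probability \emph{good event} and then derive both the accuracy and the label-complexity guarantees deterministically on that event. First I would take the good event to be the intersection, over every prescheduled epoch $l \le \lfloor \log_4(n/\beta_1)\rfloor$ and every pair $(h,h') \in \calH^2$, of the robust-estimator guarantee~\eqref{condition: robust estimator} together with the one-sided concentration $\rho_*(h,h') \le \hat{\rho}_l(h,h') \lesssim \rho_*(h,h') + \beta_3/N_l$ of the empirical disagreement. A Bernstein bound for each pair and a union bound over the $\order(\log n\cdot |\calH|^2)$ events, absorbed into the definition $\beta_3 = 2\log(\tfrac{3}{2}\lfloor \log n\rfloor|\calH|^2/\delta)$, shows this event holds with probability at least $1-\delta$.

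Next I would convert the robust gap estimates into an error bound for the minimax model estimate $\hat{\calD}_l$. The key deterministic inequality is the corruption bound $|(R_t(h)-R_t(h')) - (R_*(h)-R_*(h'))| \le 2 c_t\, \rho_*(h,h')$, which averaged over epoch $l$ gives $|\bar{R}_l(h)-\bar{R}_l(h') - (R_*(h)-R_*(h'))| \le 2\,\tfrac{C_{\text{epoch }l}}{N_l}\rho_*(h,h')$. Since the empirical $x$-marginal of $\calI_l$ paired with $\eta_*$ is a feasible distribution whose induced gaps differ from $W_l^{h,h'}$ by at most the right-hand side of~\eqref{condition: robust estimator} plus this corruption term, the minimizer $\hat{\calD}_l$ attains objective value comparable to that feasible choice, and undoing the $\sqrt{\min_x q_l^x/\hat{\rho}_l}$ normalization yields, for every pair, $|R_{\hat{\calD}_l}(h)-R_{\hat{\calD}_l}(h') - (R_*(h)-R_*(h'))| \lesssim \sqrt{\beta_3\hat{\rho}_l(h,h')/(N_l \min_x q_l^x)} + \tfrac{C_{\text{epoch }l}}{N_l}\rho_*(h,h')$. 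Substituting the query rule $q_l^x \gtrsim \beta_1\hat{\rho}_l \epsilon_{k(h,h',l)}^{-2}/N_l$ collapses the stochastic part of this error to $\sqrt{\beta_3/\beta_1}\,\epsilon_{k(h,h',l)}$, a constant fraction of the layer scale because $\beta_1 \gg \beta_3$.

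The heart of the argument, and the step I expect to be hardest, is a gap-tracking lemma proved by induction on $l$: on the good event, $\tfrac14\Delta_h - \order(\epsilon_l) - (\text{corruption}) \le \hat{\Delta}_h^l \le 4\Delta_h + \order(\epsilon_l) + (\text{corruption})$, where the corruption contribution is exactly the per-epoch weighting defining $\bCtotal$. The difficulty is that $\hat{\Delta}_h^l$ is defined self-referentially through the anchor $\hat{h}_*^l$ and the previous-epoch terms $\beta_2\hat{\Delta}^{l-1}$, so the induction must simultaneously show the anchor is near-optimal (i.e. $\Delta_{\hat{h}_*^l}$ is small) and propagate the estimate; the case split $\one\{C_{\text{epoch }l}/N_l \le 1/32\}$ versus its complement is what produces the $R^*$-versus-$1$ weighting in $\bCtotal$, since in a lightly corrupted epoch the error scales with $\rho_*(h,h^*)\le \Delta_h + 2R^*$ and is charged to $R^*$, whereas a heavily corrupted epoch is charged in full. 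Once this lemma holds, correctness is immediate: the output lies in $V_l^{l-1}$, so $\hat{\Delta}_{h_{out}}^{l-1}\le \epsilon_{l-1}$, and with $n \ge 72\varepsilon^{-2}\beta_1$ the terminal scale satisfies $\epsilon_{l-1}\lesssim \varepsilon$, giving $R_*(h_{out})-R^* \le \varepsilon + 24\bCtotal/n$.

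Finally I would bound the label complexity by $\sum_l \sum_{t\in\calI_l} q_l^{x_t}$, which concentrates around $\sum_l N_l\,\E_{x\sim\nu_*}[q_l^x]$ via a Freedman bound folded into the good event. For each $x$, $q_l^x$ is supported on the disagreement region of pairs whose smaller estimated gap is at most $\epsilon_{k}$, which by the gap-tracking lemma forces both members into $B(h^*, \order(\epsilon_{k} + R^* + \text{corruption}))$; the disagreement coefficient then bounds the $\nu_*$-mass of this region by $\theta^*(\cdot)\cdot\order(\epsilon_{k}+R^*+\text{corruption})$. Multiplying by the query magnitude $\beta_1\hat{\rho}_l\epsilon_{k}^{-2}/N_l \approx \beta_1(R^*+\epsilon_{k})\epsilon_{k}^{-2}/N_l$ and summing the geometric series in $\epsilon_l = 2^{-l}$ against $N_l = \beta_1\epsilon_l^{-2}$ produces the leading $(R^*)^2 n$ term, the $\log n$ term from the lowest layers, and the additive $\log(n)\bCtotal$ from the corruption radius. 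The remaining work is bookkeeping on the constants so that the accumulated per-epoch corruption terms telescope exactly into the stated $\bCtotal$.
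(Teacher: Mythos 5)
Your proposal is correct and follows essentially the same route as the paper's own proof: the same high-probability events, the same minimax-feasibility argument for $\hat{\calD}_l$ (the paper's Lemma~\ref{lem: est helper}), the same inductive gap-tracking lemma with the per-epoch $R^*$-versus-$1$ corruption weighting that defines $\bCtotal$ (Lemmas~\ref{lem: upper bound of the estimated gap}--\ref{lem: upper and lower bound}), correctness via $h_{out}\in V_L^{L-1}$, and the label-complexity bound via the disagreement coefficient and geometric sums over layers and epochs. The only cosmetic differences are your constants in the gap-tracking inequalities and your optional Freedman step for the realized query count, which the paper skips by bounding the expected number of queries directly.
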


\begin{corollary}
\label{coro: main}
    Suppose the corruptions satisfy $\frac{C_{\text{epoch }l}}{N_l} \leq \frac{1}{32}$ for all epochs, for example, the $(1/32)$-misspecification case, then for any $n \geq 72\varepsilon^{-2}\beta_1$ number of unlabeled samples, with probability at least $1-\delta$ we can get a $h_{out}$ satisfying
\begin{align*}
    R_*(h_{out}) - R^* \leq \varepsilon + 24R^*\frac{\Ctotal}{n}, 
\end{align*}
with label complexity as most
\begin{align*}
    \order\left(\theta^*(R^* + 3\sqrt{\frac{R^*\beta_1}{n}} + \frac{64R^*\Ctotal}{n})\log(\log(n)|\calH|^2/\delta)\left((R^*)^2n + (R^*\Ctotal+1)\log(n)\right) \right) 
\end{align*}
\end{corollary}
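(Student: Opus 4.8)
The plan is to obtain Corollary~\ref{coro: main} as a specialization of Theorem~\ref{them: main} to the benign regime, where the only genuinely new work lies in sharpening the radius at which the disagreement coefficient is evaluated. The clean observation is that the assumption $C_{\text{epoch }l}/N_l \le \tfrac{1}{32}$ for every epoch $l$ makes the indicators in the definition of $\bCtotal$ collapse: $\one\{C_{\text{epoch }l}/N_l \le \tfrac{1}{32}\}=1$ and $\one\{C_{\text{epoch }l}/N_l > \tfrac{1}{32}\}=0$ for all $l$, so that $\bCtotal = R^* \sum_{l} C_{\text{epoch }l} \le R^*\Ctotal$ (the inequality absorbs a possibly incomplete final epoch that does not cover all of $[0,n]$). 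First I would substitute this into the accuracy guarantee of Theorem~\ref{them: main}, which turns $\varepsilon + 24\bCtotal/n$ into $\varepsilon + 24R^*\Ctotal/n$ immediately, under the same budget $n \ge 72\varepsilon^{-2}\beta_1$.

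For the label complexity, the same substitution handles most of the expression: the trailing factor $(R^*)^2 n + \log(n)(1+\bCtotal)$ becomes $(R^*)^2 n + (R^*\Ctotal+1)\log(n)$, and the corruption part $64\bCtotal/n$ of the disagreement-coefficient radius becomes $64R^*\Ctotal/n$, both matching the corollary. The one term that does \emph{not} follow by mere substitution is the statistical part of the radius, where Theorem~\ref{them: main} carries $3\sqrt{\beta_1/n}$ while the corollary claims the sharper $3\sqrt{R^*\beta_1/n}$. To recover this I would return to the step in the proof of Theorem~\ref{them: main} that bounds the measure of the disagreement region of the surviving hypotheses and re-run it under the benign control $C_{\text{epoch }l}/N_l \le \tfrac{1}{32}$. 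The relevant tool is the variance structure of the gap increments: with $Z_t = \one\{h(x_t)\neq y_t\} - \one\{h^*(x_t)\neq y_t\}$ one has $\E[Z_t]=\Delta_h$ and $\E[Z_t^2]=\rho_*(h,h^*)$, so the robust estimator of eq.~\ref{condition: robust estimator} concentrates at the variance rate $\sqrt{\beta_3\,\rho_*(h,h^*)/(N_l \min_{x\in\Dis(\{h,h^*\})}q_l^x)}$. Under the benign per-epoch bound the corruption offset $2\rho_*(h,h^*)\,C_{\text{epoch }l}/N_l$ stays a constant fraction of this variance rate rather than swamping it; combining this with $\rho_*(h,h^*)\le \Delta_h + 2R^*$ and feeding it through the layered query probabilities $q_l^x$ of line~\ref{line: end calcualting $q$} is what lets an $R^*$-weighted scale survive all the way down to the final resolution $\epsilon_L \approx \sqrt{\beta_1/n}$.

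The main obstacle is precisely this radius refinement. One must track the $R^*$-dependence consistently across all epochs $l$ and all hypothesis layers, verifying that for every confusable pair $(h,h^*)$ the interplay between the Bernstein-type accuracy of eq.~\ref{condition: robust estimator}, the benign corruption offset, and the per-layer query rate never reintroduces a bare $\sqrt{\beta_1/n}$ contribution that would dominate $\sqrt{R^*\beta_1/n}$; this is delicate because in the general (non-benign) analysis of Theorem~\ref{them: main} the corruption term is exactly what forces the coarser Hoeffding-type scale. By contrast, I expect the $\bCtotal = R^*\Ctotal$ collapse, the accuracy bound, and the substitution into the trailing label-complexity factor to be routine, so essentially all of the care goes into showing that the benign per-epoch condition upgrades the Hoeffding-type radius of Theorem~\ref{them: main} to its variance-dependent, $R^*$-weighted counterpart.
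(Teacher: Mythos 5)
Your first step --- collapsing $\bCtotal$ to $R^*\sum_l C_{\text{epoch }l} \le R^*\Ctotal$ under the benign condition and substituting into Theorem~\ref{them: main} --- is exactly the paper's (implicit) proof of the corollary, and it correctly yields the accuracy bound $\varepsilon + 24R^*\Ctotal/n$, the trailing factor $(R^*)^2n + (R^*\Ctotal+1)\log(n)$, and the $64R^*\Ctotal/n$ part of the radius. The genuine gap is in how you handle the term $3\sqrt{R^*\beta_1/n}$. You read the corollary's radius as \emph{sharper} than the theorem's $3\sqrt{\beta_1/n}$ and conclude that one must re-run the disagreement-region analysis with variance-sensitive concentration under the benign condition, placing essentially all of the difficulty there. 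But the logic runs the other way: since $\theta^*(r_0)=\sup_{r\ge r_0}\Pbb_{X\sim\nu_*}\left(X\in\Dis(B(h^*,r))\right)/r$ is non-increasing in $r_0$, shrinking the radius argument from $3\sqrt{\beta_1/n}$ to $3\sqrt{R^*\beta_1/n}$ only \emph{increases} the $\theta^*$ factor, so the corollary's label-complexity bound is a \emph{weaker} claim than what direct substitution into the theorem already gives (using $\bCtotal = R^*\Ctotal$ under the paper's convention, stated in its appendix, that the last incomplete epoch carries no corruption). A one-line appeal to this monotonicity closes the proof; no re-derivation is needed.

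Moreover, the re-derivation you sketch cannot succeed. The $\sqrt{\beta_1/n}$ scale in the theorem's radius is not a Hoeffding-versus-Bernstein artifact that benign corruptions could shrink: it is $3\epsilon_{L-1}$, the algorithm's hard gap-resolution floor, coming from the prescheduled epoch lengths $N_l=\beta_1\epsilon_l^{-2}$ and the explicit flooring $\hDelta_h^l=\max\{\epsilon_l,\cdot\}$ in line~\ref{line: estimate gap 3}. The output set $V_L^{L-1}$ consists of all $h$ with $\hDelta_h^{L-1}\le\epsilon_{L-1}$, and the estimation accuracy delivered by the query probabilities is itself only $\Theta(\epsilon_{L-1})$ by design (this is exactly what Lemma~\ref{lem(main): upper and lower bound} certifies), so hypotheses with true gap $\Theta(\epsilon_{L-1})$ survive to the end no matter how small $R^*$ is. Consequently $\max_{h\in V_L^{L-1}}\rho_*(h,h^*)$ is genuinely $2R^*+\Theta(\epsilon_{L-1})$, and no variance-weighted treatment of the estimator in eq.~\ref{condition: robust estimator} can improve this to $2R^*+O\left(\sqrt{R^*}\epsilon_{L-1}\right)$ for this algorithm. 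In short, the step you identify as the main obstacle is both unattainable and unnecessary; the missing ingredient is the elementary monotonicity of $\theta^*$.
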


\paragraph{Comparison with passive learning and the Calruption:} 
Consider the case where $\theta^*(\cdot)$ is a constant. The Corollary~\ref{coro: main} shows that, 
when $\frac{C_{\text{epoch }l}}{N_l} \leq \frac{1}{32}$ for all epochs, 
our algorithm achieves a similar accuracy $\order\left(\varepsilon + \frac{R^*\Ctotal}{n}\right)$ as in the passive learning case, while only requiring $\otil\left((R^*)^2 n + \log(n)(1+R^*\Ctotal) \right)$ number of labels, for $n \gtrapprox \frac{1}{\varepsilon^2}$. So if we set $n = \otil(\frac{1}{\varepsilon^2})$, then the label complexity becomes $\otil\left(\frac{(R^*)^2}{\varepsilon^2} + \log(1/\varepsilon)(1+R^*\Ctotal) \right)$, which matches the minimax label complexity in the non-corrupted case.

Going beyond the $\frac{C_{\text{epoch }l}}{N_l} \leq \frac{1}{32}$ constraint, the general Theorem~\ref{them: main} shows that, for $n \gtrapprox \frac{1}{\varepsilon^2}$, our algorithm achieves an accuracy $\order\left(\varepsilon + \frac{\Ctotal}{n}\right)$ while only requiring $\otil((R^*)^2 n + \log(n) + \Ctotal)$ number of labels no matter how corruptions are allocated. When $R^*$ is some constant, this result becomes similar to the Corollary~\ref{coro: main}. Moreover, we will argue that upper bound $\bCtotal$ by $\Ctotal$ is loose and in many case $\bCtotal$ will be close to $R^*\Ctotal $ instead of $\Ctotal$. We show one example in the paragraph below. 

\paragraph{When is Calruption better than modified Robust CAL? } Consider the case where the adversary fully corrupts some early epoch and then performs corruptions satisfying $\frac{C_{\text{epoch }l}}{N_l} \leq \frac{1}{32}$ for rest epochs. Then the modified Robust CAL will mistakenly eliminate $h^*$ so it can never achieve target result when $\varepsilon < \min_{h \in \calH} \Delta_h$ while Calruption can surely output the correct hypothesis. Moreover, according to Theorem~\ref{them: main}, since the total amount of early stage corruptions are small, so here $\bCtotal$ is close to $R^*\Ctotal$, which implies a similar result as in Corrollary~\ref{coro: main}.

\paragraph{When is Calruption worse then modified Robust CAL ?} Consider the case where the total amount of corruption is, instead of fixed, increasing with incoming unlabeled samples, for example, the misspecification case. Then $\Ctotal$ in modified Robust CAL can be $\order(\frac{R^*}{\varepsilon^2} + \frac{1}{\varepsilon})$ while $\Ctotal$ in CALruption can goes to $\order(\frac{1}{\varepsilon^2})$. Such gap comes from the extra unlabeled sample complexity, which we discuss in the paragraph below.
%

\paragraph{Discussion on the extra unlabeled samples complexity: } 
We note that we require a larger number of unlabeled data than ERM in the passive learning setting. Here we explain the reason.
Consider the version spaces $V_l^{l-1}$ for any fixed epoch $l$. In the non-corrupted setting, this version space serves the similar purpose as the active hypothesis set in Robust CAL. 
In Robust CAL, its elimination threshold is about $\otil\left(\sqrt{\frac{\rho_*(h,h')}{t}} + \frac{1}{t}\right)$ (or $\otil\left(\rho_*(h,h') + \frac{1}{t}\right)$ in our modified version) while in our CALruption, the threshold is about $\otil\left(\sqrt{\frac{1}{t}}\right)$, which is more conservative than the Robust CAL and leads to the extra unlabeled sample complexity. The reason about being conservative here is that we need more samples to weaken the effects of corruptions on our estimation. Whether such extra unlabeled samples complexity is unavoidable remains an open problem. 

\subsection{Proof sketch for Theorem~\ref{them: main}}
Here we provide main steps of the proof and postpone details in Appendix~\ref{sec(app): whole anaylisi for main algo}. 


First we show a key lemma which guarantees the closeness between $\hDelta_h^l$ and $\Delta_h$ for all $l$ and $h$.

\begin{lemma}[Upper bound and lower bound for all estimation]
\label{lem(main): upper and lower bound}
    With probability at least $1-\delta$, for all epoch $l$ and all $h \in \calH$,
    \begin{align*}
      &\hDelta_h^l \leq 2\left( \Delta_h + \epsilon_l + g_l \right),
      &\Delta_h \leq \frac{3}{2}\hDelta_h^l + \frac{3}{2}\epsilon_l + 3g_l,
    \end{align*}
    where 
    $
        g_{l} 
        = \frac{2}{\beta_1}\epsilon_l^2 \sum_{s=1}^l C_s \left(2R^*\one\left\{\frac{2C_{\calI_s}}{N_s}\leq \frac{1}{16}\right\} + \one\left\{\frac{2C_{\calI_s}}{N_s}> \frac{1}{16}\right\}\right).
    $
\end{lemma}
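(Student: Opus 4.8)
The plan is to prove both inequalities simultaneously by induction on the epoch index $l$, conditioning throughout on the single high-probability event on which the robust-estimator guarantee holds for every pair $(h,h')$ in every epoch, together with the empirical-to-population concentration of the $x$-marginal and of $\hat\rho_l$; a union bound over the $O(\log n)$ epochs and $|\calH|^2$ pairs is what produces the $\log(\log(n)|\calH|^2/\delta)$ factor sitting inside $\beta_3$. The base case $l=1$ is immediate: with $\hat\Delta_h^0=0$ and $q_1^x=1$ every label is queried, so the gap estimate incurs only the statistical error and the single corruption term $\tfrac{2C_{\calI_1}}{N_1}(\cdots)$, which is exactly $g_1$.

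For the inductive step I would first control the per-epoch error of the pairwise gap estimate. Expanding $\hat R_l(h)-\hat R_l(h')=\frac1{N_l}\sum_{t\in\calI_l}(1-2\eta_t^{x_t})(\one\{h(x_t)=1\}-\one\{h'(x_t)=1\})$, only the disagreement region contributes, and swapping the corrupted $\eta_t^{x_t}$ for the true $\eta_*^{x_t}$ costs at most $\frac2{N_l}\sum_{t\in\calI_l}c_t\one\{h(x_t)\neq h'(x_t)\}$. Chaining this corruption bias with the robust-estimator bound $|W_l^{h,h'}-(\hat R_l(h)-\hat R_l(h'))|\le\sqrt{10\beta_3\hat\rho_l(h,h')/(N_l\min_x q_l^x)}$ and the $x$-concentration step gives a bound on $|W_l^{h,h'}-(R_*(h)-R_*(h'))|$. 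Since the pair $(\text{empirical }x\text{-marginal},\eta_*)$ is feasible in the minimization defining $\hat\calD_l$, the same bound transfers, yielding uniform control of $e_l(h,h'):=|(R_{\hat\calD_l}(h)-R_{\hat\calD_l}(h'))-(R_*(h)-R_*(h'))|$ by a statistical term plus a corruption term.

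The crux is turning $e_l$ into the additive $(\epsilon_l,g_l)$ form. Substituting the query probabilities $q_l^x=\max_{(h,h')\in\calZ(x)}\beta_1\hat\rho_{l-1}(h,h')\epsilon_{k(h,h',l)}^{-2}/N_l$ collapses the statistical term for a fixed pair to $\sqrt{10\beta_3/\beta_1}\,\epsilon_{k(h,h',l)}$, which is small because $\beta_1\gg\beta_3$; the inductive hypothesis at epoch $l-1$ then bounds the layer index via $\epsilon_{k(h,h',l)}\lesssim\max\{\hat\Delta_h^{l-1},\hat\Delta_{h'}^{l-1}\}\lesssim\Delta_h+\Delta_{h'}+\epsilon_{l-1}+g_{l-1}$, and the $\Delta$-slack on the right-hand sides of the target inequalities absorbs it. For the corruption term I would split on the two regimes in $g_l$: when $2C_{\calI_l}/N_l\le\tfrac1{16}$ the version space is accurate enough that the disagreement mass against $\hat h_*^l$ is self-bounded through $\rho_*(h,h^*)\le\Delta_h+2R^*$ (and its triangle-inequality extension to $\hat h_*^l$), giving the $R^*$-weighted contribution, whereas in the heavily corrupted regime I simply pay the full $C_{\calI_l}/N_l$; this is exactly the indicator dichotomy of $g_l$. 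Finally, the defining relations $\hat h_*^l=\argmin_h(R_{\hat\calD_l}(h)+\beta_2\hat\Delta_h^{l-1})$ and $\hat\Delta_h^l=\max\{\epsilon_l,R_{\hat\calD_l}(h)-R_{\hat\calD_l}(\hat h_*^l)-\beta_2\hat\Delta_{\hat h_*^l}^{l-1}\}$ feed $g_{l-1}$ forward; because $\beta_2=5/32<1$ and $\epsilon_{l-1}=2\epsilon_l$, this propagation contracts, and unrolling it converts the per-epoch corruption terms into the geometric sum $g_l=\frac2{\beta_1}\epsilon_l^2\sum_{s\le l}C_s(\cdots)$. Testing $h^*$ in the argmin for $\hat h_*^l$ controls $\Delta_{\hat h_*^l}$ for the lower bound, and the constants $2$ and $\tfrac32$ emerge from the $\max\{\epsilon_l,\cdot\}$ clipping and from solving the $\beta_2$-self-reference (a $1/(1-\beta_2)$ factor).

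I expect the main obstacle to be propagating the upper and lower bounds simultaneously through the self-referential definition of $\hat\Delta_h^l$ while keeping the corruption accounting tight: one must verify that the contraction from $\beta_2<1$ and the geometric $\epsilon_l$ is strong enough that accumulated corruption does not blow up, and that the self-bounding in the low-corruption regime genuinely produces the $R^*$ factor rather than a crude constant, since that factor is exactly what separates the general Theorem~\ref{them: main} from the benign-case Corollary~\ref{coro: main}.
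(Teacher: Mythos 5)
Your proposal follows essentially the same route as the paper's proof: the same high-probability events with a union bound over pairs and epochs, the same transfer of the robust-estimator error to $R_{\hat{\calD}_l}$ via feasibility of the true model in the min--max defining $\hat{\calD}_l$, the same collapse of the statistical term to $\sqrt{\beta_3/\beta_1}\,\epsilon_{k(h,h',l)}$ via the query-probability lower bound and the version-space layer structure, the same small/large-corruption dichotomy with self-bounding producing the $R^*$ factor, and the same simultaneous induction using the regularized argmin $\hat{h}_*^l$ and testing $h^*$ for the lower bound. The only (immaterial) differences are cosmetic: the paper handles the base case by the trivial bound $\hDelta_h^1 \leq 1 \leq 2\epsilon_1 \cdot 2$ rather than the $q_1^x = 1$ argument, defines $g_l$ upfront and verifies consistency rather than unrolling, and the operative constraint on $\beta_2$ in the upper-bound step is $\beta_2 \geq \tfrac{5}{32}$ (to cancel the carried-forward $\hDelta_{\hat{h}_*^l}^{l-1}$ terms) rather than a $1/(1-\beta_2)$ contraction.
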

Here the $g_l$ term implies that, as long as the total corruption is sublinear in $n$, the misleading effects on the gap estimations will fade when the number of unlabeled samples increasing.

Based on this lemma, we can directly get another useful lemma as follows.
\begin{lemma}
\label{lem(main): upper bound the difference of classifiers}
For all epoch $l$ and layer $j$, we have
$
    \max_{h \in V_l^j} \rho_*(h,h^*)
    \leq 2R^* + 3\epsilon_j + 3g_{l-1}
$
\end{lemma}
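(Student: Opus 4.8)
The plan is to derive this bound directly from the lower-bound half of Lemma~\ref{lem(main): upper and lower bound} together with a standard comparison between the disagreement pseudometric $\rho_*$ and the excess risks. First I would record the elementary fact that for any $h$,
\[
    \rho_*(h,h^*) \le R_*(h) + R_*(h^*) = 2R^* + \Delta_h .
\]
This holds pointwise: whenever $h(x) \neq h^*(x)$, at least one of $h(x),h^*(x)$ must disagree with the label $y$, so $\one\{h(x)\neq h^*(x)\} \le \one\{h(x)\neq y\} + \one\{h^*(x)\neq y\}$; taking expectation over $(x,y)\sim\calD_*$ and using $R_*(h)=R^*+\Delta_h$ gives the claim. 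Thus it suffices to control $\Delta_h$ uniformly over $h \in V_l^j$.

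Next I would unpack the definition of the layer. By the construction in line~\ref{line: contruct hypothesis layers}, membership $h \in V_l^j$ is exactly $\hDelta_h^{l-1} \le \epsilon_j$, while line~\ref{line: estimate gap 3} enforces the floor $\hDelta_h^{l-1} = \max\{\epsilon_{l-1},\dots\} \ge \epsilon_{l-1}$. Hence every $h \in V_l^j$ satisfies $\epsilon_{l-1} \le \hDelta_h^{l-1} \le \epsilon_j$. Now I apply the lower bound of Lemma~\ref{lem(main): upper and lower bound} at epoch index $l-1$, namely $\Delta_h \le \tfrac32\hDelta_h^{l-1} + \tfrac32\epsilon_{l-1} + 3g_{l-1}$, and bound both $\hDelta_h^{l-1} \le \epsilon_j$ and $\epsilon_{l-1} \le \hDelta_h^{l-1} \le \epsilon_j$. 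This yields $\Delta_h \le 3\epsilon_j + 3g_{l-1}$ for every $h \in V_l^j$.

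Finally, combining the two displays gives, for every $h \in V_l^j$,
\[
    \rho_*(h,h^*) \le 2R^* + \Delta_h \le 2R^* + 3\epsilon_j + 3g_{l-1},
\]
and taking the maximum over $h \in V_l^j$ completes the argument. I do not expect any genuine obstacle here, since all the analytic content is already carried by Lemma~\ref{lem(main): upper and lower bound}; the one point requiring a moment's care is invoking the floor $\hDelta_h^{l-1} \ge \epsilon_{l-1}$, which lets the spurious $\tfrac32\epsilon_{l-1}$ term be absorbed into $\tfrac32\epsilon_j$ rather than appearing as a separate scale in the final bound.
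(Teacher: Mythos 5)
Your proof is correct and is essentially the paper's own argument: the same decomposition $\rho_*(h,h^*) \le R_*(h) + R_*(h^*) = 2R^* + \Delta_h$, followed by the lower-bound half of Lemma~\ref{lem(main): upper and lower bound} applied at epoch $l-1$, and then $\hDelta_h^{l-1} \le \epsilon_j$ from the layer definition. The only cosmetic difference is that you justify $\epsilon_{l-1} \le \epsilon_j$ via the floor $\hDelta_h^{l-1} \ge \epsilon_{l-1}$, whereas the paper absorbs this step implicitly into ``the definition of $V_l^j$'' (equivalently, $j \le l-1$ so $\epsilon_j \ge \epsilon_{l-1}$).
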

In the following we first deal with the correctness then then sample complexity.

\textbf{Correctness.} By Lemma~\ref{lem(main): upper and lower bound}, we have
\begin{align*}
    \Delta_{h_{out}}
    \leq \frac{3}{2}\hDelta_{h_{out}}^{L-1} + \frac{3}{2}\epsilon_{L-1} + 3g_{L-1}
    \leq 6\sqrt{\frac{2\beta_1}{n}} + 24\frac{\bar{C}_{total}}{n}.
\end{align*}

\textbf{Sample complexity.}
For any $t \in \calI_l$, recall that $q_l^x = \max_{(h,h') \in \calZ(x)}\frac{\beta_1\hat{\rho}_{l-1}(h,h')}{N_l}\epsilon_{k(h,h',l)}^{-2}$, the probability of $x_t$ being queried ($Q_t = 1$) is

\begin{align*}
    \E[Q_t]
    & \leq 10\frac{\beta_1}{N_l}\sum_{x \in \calX}  P(x_t = x) \max_{h \in V_l^{j_l^x}} \rho_*(h,h^*)\epsilon_{j_l^x}^{-2} + 8\frac{\beta_1}{N_l} \\
    & \leq 10\frac{\beta_1}{N_l}\sum_{x \in \calX}  P(x_t = x) \left( 2R^*\epsilon_{j_l^x}^{-2} + 3\epsilon_{j_l^x}^{-1} + 3g_{l-1}\epsilon_{j_l^x}^{-2} \right) + 8\frac{\beta_1}{N_l} \\
    &\leq 10\frac{\beta_1}{N_l} \sum_{i=0}^{l-1} \left( 2R^*\epsilon_{i}^{-2} + 3\epsilon_{i}^{-1} + 3g_{l-1}\epsilon_{i}^{-2} \right) \fP( x \in   \Dis(V_l^i) )+ 8\frac{\beta_1}{N_l}
\end{align*}
Here $j_l^x$ is some arbitrary mapping from $\calX$ to $[l]$, which is formally defined in detailed version in Appendix~\ref{sec(app): main proof for main algo}. The first inequality comes from the closeness of estimated $\hat{\rho}_l(h,h')$ and the true  $\rho_*(h,h')$, as well as some careful relaxation. The second inequality comes from Lemma~\ref{lem(main): upper bound the difference of classifiers}. 

Now we can use the standard techniques to upper bound $\fP( x \in   \Dis(V_l^i) )$ as follows,
\begin{align*}
    \fP\left(\exists h \in V_l^i: h(x)\neq h^*(x) \right)
    &\leq \fP\left(\exists h \in \calH: h(x)\neq h^*(x), \rho_*(h,h^*) \leq 2R^* + 3\epsilon_i + 3g_{l-1} \right)\\
    &\leq \theta^*(2R^* + 3\epsilon_i + g_{l-1}) \left( 2R^* + 3\epsilon_i + 3g_{l-1}\right)
\end{align*}
where again the first inequality comes from Lemma~\ref{lem(main): upper bound the difference of classifiers}. Again we postpone the full version into Appendix~\ref{sec(app): main proof for main algo}.

Combining the above results with the fact that 
$g_l = \frac{2}{\beta_1}\epsilon_l^2\bar{C}_{l-1} $  and $\bar{C}_{l-1} \leq \sum_{s=1}^{l-1} C_{\calI_s} \leq 2\beta_1 \epsilon_{l-1}^{-2}$, we get the expected number of queries inside a complete epoch $l$ as,
\begin{align*}
    \sum_{t \in \calI_l} \E[Q_t]
    \leq 20\beta_1 \theta^*(2R^* + 3\epsilon_{l-1} + g_{l-1})
      *\left(4(R^*)^2\epsilon_l^{-2} + 12 R^*\epsilon_l^{-1} + \frac{132}{\beta_1}\bar{C}_{l-1} + 10\right)
\end{align*}

Finally, summing over all $L = \lceil \frac{1}{2}\log(n/\beta_1) \rceil$ number of epochs, for any $n$, we can get the target lable complexity.


\section{Conclusion and future works}
In this work we analyzed an existing active learning algorithm in the corruption setting, showed when it fails, and designed a new algorithm that resolve the drawback.
Relative to RobustCAL, our algorithm requires a larger number of unlabeled data. One natural question is to design a corruption robust algorithm which requires the same number of unlabeled data as RobustCAL in the non-corrupted setting.
Another potential question is that, the $\order\left(\varepsilon+\frac{\bCtotal}{n}\right)$ accuracy from Algo.~\ref{alg:main} in the general corruption case is generally worse than $\order\left(\varepsilon+\frac{R^*\Ctotal}{n}\right)$ accuracy of passive learning. Although we state that these two bounds are close in many cases, a question is if there exists an alternative algorithm or analysis that will result a more smooth final bound to interpolate between different corruptions cases.
Finally, it is also an interesting direction to design algorithms that are robust to an even stronger corruption model, e.g., adaptive adversary.

\bibliographystyle{unsrtnat}
\bibliography{ref}

\newpage
\tableofcontents
\appendix
\section{Lemmas related to corruption effects}
Here we states some basic lemmas that will be used all over the proofs.

\begin{lemma}[Corruption effects 1]
\label{lem: corruption related  1}
For any interval $\calI$ and hypothesis $h$, we have
\begin{align*}
    \frac{1}{|\calI|}\sum_{t \in \calI} \left( R_t(h) - R_*(h) \right)
    \leq \frac{C_\calI}{|\calI|}
\end{align*}
\end{lemma}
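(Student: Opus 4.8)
The plan is to reduce the claim to a pointwise (in $x$) comparison of the conditional risks under $\calD_t$ and $\calD_*$, exploiting the fact that under the oblivious adversary model both distributions share the same marginal $\nu_*$ over $x$ and differ only in the conditional label probabilities $\eta_t^x$ versus $\eta_*^x$.

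First I would expand the risk of $h$ at a fixed $x$. Since $y$ is Bernoulli with mean $\eta^x$, the conditional expected loss of $h$ is $h(x)(1-\eta^x) + (1-h(x))\eta^x = \eta^x + h(x)(1 - 2\eta^x)$. Writing this expression for both $\eta_t^x$ and $\eta_*^x$ and subtracting, the terms recombine to $r_t(h,x) - r_*(h,x) = (1 - 2h(x))(\eta_t^x - \eta_*^x)$. Because $h(x) \in \{0,1\}$, the factor $(1-2h(x))$ has absolute value exactly $1$, so $|r_t(h,x) - r_*(h,x)| = |\eta_t^x - \eta_*^x| \le c_t$, using the definition $c_t = \max_x |\eta_t^x - \eta_*^x|$.

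Next, since $R_t(h) = \E_{x \sim \nu_*}[r_t(h,x)]$ and $R_*(h) = \E_{x \sim \nu_*}[r_*(h,x)]$ both integrate against the common marginal $\nu_*$, I would take the expectation over $x$ of the pointwise bound to obtain $R_t(h) - R_*(h) \le \E_{x\sim\nu_*}[c_t] = c_t$. Summing over $t \in \calI$ and dividing by $|\calI|$ then gives $\frac{1}{|\calI|}\sum_{t\in\calI}(R_t(h)-R_*(h)) \le \frac{1}{|\calI|}\sum_{t\in\calI} c_t = \frac{C_\calI}{|\calI|}$, which is exactly the claim.

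There is no serious obstacle here; the lemma is essentially a definitional consequence. The only points deserving care are that the shared marginal $\nu_*$ is what permits comparing the two risks example-by-example, and that the binary structure is what collapses the coefficient $(1-2h(x))$ to magnitude one — in a multiclass or real-valued setting this clean cancellation would fail and one would pick up an extra constant. I would also note that although the statement is one-sided, the same argument in fact yields $|R_t(h)-R_*(h)| \le c_t$, which is the form the later corruption lemmas will presumably invoke.
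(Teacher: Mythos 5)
Your proof is correct and follows essentially the same route as the paper's: both reduce the claim to a pointwise-in-$x$ comparison of the conditional risks (possible because the marginal $\nu_*$ is shared), bound the pointwise difference by $|\eta_t^x - \eta_*^x| \le c_t$, and sum over $t \in \calI$. Your explicit expansion $r_t(h,x)-r_*(h,x) = (1-2h(x))(\eta_t^x-\eta_*^x)$ is just a more concrete rendering of the paper's bound on the difference of the two conditional expectations, so the arguments are interchangeable.
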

\begin{proof}
\begin{align*}
    &\frac{1}{|\calI|}\sum_{t \in \calI} \left(R_t(h) - R_*(h)\right) \\
    & =\E_{x\sim \nu_*}\frac{1}{|\calI|}\sum_{t \in \calI}\left( \E_{y \sim \eta_t^x}\left[ \one\{h(x) \neq y\}\right] - \E_{y \sim \eta_*^x}\left[ \one\{h(x) \neq y\}\right] \right) \\
    & \leq \frac{1}{|\calI|}\sum_{t \in \calI} \max_{x \in \calX}\left( \E_{y \sim \eta_t^x}\left[ \one\{h(x) \neq y\}\right] - \E_{y \sim \eta_*^x}\left[ \one\{h(x) \neq y\}\right] \right) \\
    & \leq \frac{1}{|\calI|}\sum_{t \in \calI} \max_{x \in \calX} |\eta_t^x - \eta_*^x|
    \leq \frac{C_\calI}{|\calI|}
\end{align*}
\end{proof}

\begin{lemma}[Corruption effects 2]
\label{lem: corruption related  2}
For any interval $\calI$ and hypothesis pair $h,h'$, we have
\begin{align*}
    \frac{1}{|\calI|}\sum_{t \in \calI}\left( R_t(h) - R_t(h')\right) - \left( R_*(h) - R_*(h')\right)
    \leq 2\rho_*(h,h')\frac{C_\calI}{|\calI|}
\end{align*}
\end{lemma}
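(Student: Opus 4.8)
The plan is to mimic the proof of the preceding Lemma~\ref{lem: corruption related  1}, but to track the \emph{difference} of the two indicators $\one\{h(x)\neq y\}-\one\{h'(x)\neq y\}$ and exploit that this difference vanishes wherever $h$ and $h'$ agree. The crucial structural fact I would invoke is that, regardless of corruption, Nature always draws $x_t\sim\nu_*$, so $R_t$ and $R_*$ share the same marginal over $x$. This lets me push everything inside a single expectation $\E_{x\sim\nu_*}[\cdot]$, with only the conditional label laws $\eta_t^x$ versus $\eta_*^x$ differing.

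First I would fix $h,h'$ and a time $t$ and compute the per-$x$ integrand $\E_{y\sim\eta_t^x}[\one\{h(x)\neq y\}]-\E_{y\sim\eta_t^x}[\one\{h'(x)\neq y\}]$. Using that $\E_{y\sim\eta_t^x}[\one\{h(x)\neq y\}]$ equals $\eta_t^x$ when $h(x)=0$ and $1-\eta_t^x$ when $h(x)=1$, this difference is $0$ on the agreement set $\{x:h(x)=h'(x)\}$ and equals $s(2\eta_t^x-1)$ on the disagreement set $\{x:h(x)\neq h'(x)\}$, where the sign $s\in\{+1,-1\}$ is determined solely by the direction of disagreement at $x$ and is independent of $t$. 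Subtracting the corresponding uncorrupted quantity (same sign $s$, with $\eta_*^x$ in place of $\eta_t^x$) collapses the constant $-1$ terms and makes the per-$x$ integrand equal to $s\cdot 2(\eta_t^x-\eta_*^x)\,\one\{h(x)\neq h'(x)\}$, whose absolute value is at most $2c_t\,\one\{h(x)\neq h'(x)\}$ by the definition $c_t=\max_{x}|\eta_t^x-\eta_*^x|$.

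Taking $\E_{x\sim\nu_*}$ of this integrand then yields $\bigl(R_t(h)-R_t(h')\bigr)-\bigl(R_*(h)-R_*(h')\bigr)\leq 2c_t\,\rho_*(h,h')$ for each $t$, since $\E_{x\sim\nu_*}[\one\{h(x)\neq h'(x)\}]=\rho_*(h,h')$ by definition. Averaging over $t\in\calI$, using that $R_*(h)-R_*(h')$ is constant in $t$ and that $\sum_{t\in\calI}c_t=C_\calI$, gives the claimed bound $2\rho_*(h,h')\,C_\calI/|\calI|$. There is no substantive obstacle here; the argument is a direct computation. The only two points worth handling carefully are the factor of $2$ (the error difference involves $2\eta_t^x-1$, so perturbing $\eta_t^x$ by $c_t$ shifts it by $2c_t$) and the restriction of the integrand to the disagreement region, which is precisely the mechanism that converts the pointwise corruption budget $c_t$ into the $\rho_*(h,h')$-weighted bound.
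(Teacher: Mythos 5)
Your proof is correct and follows essentially the same route as the paper's: both exploit that the marginal $\nu_*$ is uncorrupted to write the difference as a single expectation over $x$, restrict the integrand to the disagreement set $\{x : h(x)\neq h'(x)\}$, bound it pointwise by $2c_t$, and average over $t\in\calI$ to obtain $2\rho_*(h,h')C_\calI/|\calI|$. If anything, your explicit per-$x$ computation of the sign term $s(2\eta_t^x-1)$ is tidier than the paper's intermediate step, which loosely routes the pointwise bound through $\max_{h\in\calH}(R_t(h)-R_*(h))$.
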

\begin{proof}
\begin{align*}
    &\frac{1}{|\calI|}\sum_{t \in \calI}\left( R_t(h) - R_t(h')\right) - \left( R_*(h) - R_*(h')\right)\\
    &=\E_x \left[ \frac{1}{|\calI|}\sum_{t \in \calI}\left( \E_{t \sim \eta_t^x}\left[ \one\{h(x) \neq y\} - \one\{h'(x) \neq y\}\right] - \E_{t \sim \eta_*^x}\left[ \one\{h(x) \neq y\} - \one\{h'(x) \neq y\}\right]\right)\right] \\
    &= \E_x \left[  \frac{\one\{h(x) \neq h'(x)\}}{|\calI|}\sum_{t \in \calI}\left( \E_{t \sim \eta_t^x}\left[ \one\{h(x) \neq y\} - \one\{h'(x) \neq y\}\right] - \E_{t \sim \eta_*^x}\left[ \one\{h(x) \neq y\} - \one\{h'(x) \neq y\}\right]\right)\right]\\
    & \leq \rho_*(h,h') \left( \frac{2}{|\calI|} \sum_{t \in \calI} \max_{h \in \calH}\left(R_t(h) - R_*(h) \right) \right)\\
    &\leq 2\rho_*(h,h') \frac{C_\calI}{|\calI|}
\end{align*}

\end{proof}
\section{Analysis for Passive Learning: Proof of Theorem~\ref{them: passive learning}}
\label{sec(app): passive learning}

With probability at least $1-\delta$, we have for any $n$ samples,
\begin{align*}
    &R_*(\hout) - R^* \\
    &\leq \left( (R_*(\hout) - R^*) - (\bar{R}_{[1,n]}(\hout) -\bar{R}_{[1,n]}(h^*) )\right) + \left(\bar{R}_{[1,n]}(\hout) -\bar{R}_{[1,n]}(h^*)\right)\\
    & \leq 2\frac{\Ctotal}{n}\rho_*(\hout,h^*)
        + \left(\bar{R}_{[1,n]}(\hout) -\bar{R}_{[1,n]}(h^*)\right) \\
    & \leq 2\frac{\Ctotal}{n}\rho_*(\hout,h^*)
        + \left(\hat{R}_{[1,n]}(\hout) -\hat{R}_{[1,n]}(h^*)\right)
        + \sqrt{\rho_*(\hout,h^*)\frac{4\log(|\calH|/\delta)}{n}} + \frac{\log(|\calH|/\delta)}{n}\\
    &\leq 4\frac{\Ctotal}{n}\max\{R_*(\hout) - R^*, 2R^* \}
        + \sqrt{\max\{R_*(\hout) - R^*, 2R^* \}\frac{4\log(|\calH|/\delta)}{n}} + \frac{\log(|\calH|/\delta)}{n}
\end{align*}
where the second step can from our definition of corruptions and fact that $\nu_*$ is not corrupted (see Lemma~\ref{lem: corruption related  2} for details), third inequality comes from the Bernstein inequality and the last inequality comes from the definition of $\hout$ and the fact $\rho_*(h,h') \leq 2\max\{R_*(h)-R^*, 2R^*\}$. Now if $2R^* \geq R_*(\hout) - R^*$, then we directly get the target result. Otherwise, by solving the quadratic inequality, we have
\begin{align*}
    R_*(\hout) - R^* 
    \leq \frac{5\log(|\calH|/\delta)}{n} \frac{1}{(1-\frac{4\Ctotal}{n})^2}
\end{align*}

\section{Analysis for Robust CAL}

\subsection{Proof of Theorem~\ref{prop: robust cal for known C}}
\label{sec(app): robust cal for known C}

For convenient, for all subscripts $[0,t]$, we simply write as subscript $t$.

We first state a key lemma that is directly inspired by Theorem~\ref{them: passive learning}.
\begin{lemma}
\label{lem: passive learning}
For any $t$ that $\log(t) = \fN$, under the assumption of this theorem, as long as $h^* \in V_t$, we have
\begin{align*}
    R_*(\hat{h}_t) - R^* 
    &\leq \frac{22\log(|\calH|/\delta)}{t} +  4\frac{C_{t}}{n}R^* + \sqrt{R^* \frac{8\log(|\calH|/\delta)}{t}} \\
    &\leq  \frac{22\log(|\calH|/\delta)}{t} +  \frac{R^*}{2} + \sqrt{R^* \frac{8\log(|\calH|/\delta)}{t}} \quad \text{(By assumption on $C_t$) }\\
    &\leq \frac{26\log(|\calH|/\delta)}{t} + R^* \quad \text{(By the fact $\sqrt{AB} \leq \frac{A+B}{2}$) }
\end{align*}
\end{lemma}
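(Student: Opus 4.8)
The plan is to reduce this lemma to the passive-learning guarantee already proved: I would show that the version-space empirical risk minimizer $\hat{h}_t$ obeys exactly the bound that Theorem~\ref{them: passive learning} establishes for the passive ERM, with the horizon $n$ replaced by the current time $t$ and $\Ctotal$ replaced by $C_{[0,t]} = C_t$. Once the first displayed line is in hand, the second follows by invoking the standing assumption $C_t \le t/8$ and the third by a single application of $\sqrt{AB}\le\tfrac12(A+B)$. The only genuinely new ingredient is that restricting label queries to the disagreement region $\Dis(V_s)$ does not distort the empirical comparison between any two hypotheses that both lie in the version space.

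First I would fix the high-probability event. For each fixed pair $(h,h^*)$ the per-round differences $\one\{h(x_s)\neq y_s\} - \one\{h^*(x_s)\neq y_s\}$ are independent across $s$, since nature draws the $(x_s,y_s)$ independently and these summands are deterministic functions of $(x_s,y_s)$ that do not depend on the algorithm's query decisions. A Bernstein bound therefore applies, and a union bound over $h\in\calH$ and over the logarithmically many checkpoints $\{t:\log t\in\fN\}$ yields the event, of probability at least $1-\delta$, on which every concentration inequality used below holds.

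The crux is the following identity. Because the version spaces are nested and $h^*\in V_t$ by hypothesis, any $h\in V_t$ satisfies $h,h^*\in V_s$ for every $s\le t$; hence on any non-queried round (where $x_s\notin\Dis(V_s)$) all hypotheses of $V_s$ agree on $x_s$, so $h(x_s)=h^*(x_s)$ and that summand vanishes. Consequently the queried-label quantity $\hat{L}_t(h)-\hat{L}_t(h^*)$ equals the full-sample empirical-risk difference $\hat{R}_{[1,t]}(h)-\hat{R}_{[1,t]}(h^*)$. Since $\hat{h}_t=\argmin_{h\in V_t}\hat{L}_t(h)$ and $h^*\in V_t$, this gives $\hat{R}_{[1,t]}(\hat{h}_t)-\hat{R}_{[1,t]}(h^*)\le 0$, which is precisely the inequality from which the passive analysis departs.

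With these two facts I would replay the chain of inequalities of Appendix~\ref{sec(app): passive learning} verbatim, with $n$ replaced by $t$: decompose $R_*(\hat{h}_t)-R^*$ into a corruption term bounded by $2\frac{C_t}{t}\rho_*(\hat{h}_t,h^*)$ via Lemma~\ref{lem: corruption related  2}, and an empirical term; apply Bernstein together with $\rho_*(h,h')\le 2\max\{R_*(h)-R^*,2R^*\}$; then solve the resulting quadratic to obtain the first displayed line. The assumption $C_t\le t/8$ then bounds the corruption contribution by $R^*/2$ (second line), and $\sqrt{R^*\cdot 8\log(|\calH|/\delta)/t}\le\tfrac12\left(R^*+8\log(|\calH|/\delta)/t\right)$ collapses the remaining terms into $\tfrac{26\log(|\calH|/\delta)}{t}+R^*$ (third line). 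I expect the main obstacle to be the identity of the third paragraph: everything else is a transcription of the passive proof, but that identity is what certifies that disagreement-based querying loses no information about within-version-space comparisons, and it is essential that it be invoked only for pairs lying in $V_t$, which is exactly why the hypothesis $h^*\in V_t$ is required.
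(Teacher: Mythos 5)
Your proposal is correct and takes essentially the same route as the paper: the paper's proof likewise reduces to Theorem~\ref{them: passive learning} (with $n$ replaced by $t$ and $\Ctotal$ by $C_t$) via the identity $\hat{R}_{[1,t]}(\hat{h}_t)-\hat{R}_{[1,t]}(h^*)=\hat{L}_t(\hat{h}_t)-\hat{L}_t(h^*)\le 0$, then solves the same quadratic inequality under $C_t\le t/8$. The only difference is that you spell out (correctly) the nested-version-space/disagreement-region argument justifying that identity, which the paper states without elaboration.
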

\begin{proof}
With probability at least $1-\delta$, by combine the same proof steps as in  Theorem~\ref{them: passive learning} and  the fact that $\hat{R}_{[1,t]}(\hat{h}_t) -\hat{R}_{[1,t]}(h^*) = \hat{L}_t(\hat{h}_t) - \hat{L}_t(h^*) \leq 0$, we can get the similar inequality as follows
\begin{align*}
    R_*(\hat{h}_t)
    &\leq 4\frac{C_t}{n}\max\{R_*(\hat{h}_t) - R^*, 2R^* \}
        + \sqrt{\max\{R_*(\hat{h}_t) - R^*, 2R^* \}\frac{4\log(|\calH|/\delta)}{t}} + \frac{\log(|\calH|/\delta)}{t}
\end{align*}
Then again by quadratic inequality and the assumption that $\frac{C_t}{t} \leq \frac{1}{8}$, we have
\begin{align*}
    R_*(\hat{h}_t) 
    &\leq \frac{22\log(|\calH|/\delta)}{t} +  4\frac{C_{t}}{n}R^* + \sqrt{R^* \frac{8\log(|\calH|/\delta)}{t}}
\end{align*}
\end{proof}
This lemma suggests that, as long as the corruptions are not significantly large. For example, in this theorem, $C_t \leq \frac{1}{8}t$. Then the learner can still easily identify the $\otil(\frac{1}{t} + R^*)$-optimal hypothesis even in the presence of corruptions. Therefore, we can guarantee that the best hypothesis always stay in active set $V_t$ after elimination. We show the detailed as follows.

Define $\calE_1, \calE_2$ as
\begin{align*}
    &\calE_1:=
    \left\{ \forall t \text{ that } \log(t) = \fN, (\avgR_t(h) - \avgR_t(h')) - (\hat{R}_t(h)-\hat{R}_t(h')) \leq \sqrt{\frac{2\beta_t\hat{\rho}_t(h,h')}{t}}+ \frac{\beta_t}{t}\right\} \\
    &\calE_2: = 
    \left\{ \forall t \text{ that } \log(t) = \fN, (\avgR_t(h) - \avgR_t(h')) - (\hat{R}_t(h)-\hat{R}_t(h')) \leq \sqrt{\frac{2\beta_t\rho_*(h,h')}{t}}+ \frac{\beta_t}{t}\right\}\\
    &\calE_3: = 
    \left\{ \forall t \text{ that } \log(t) = \fN, |\rho_*(h,h') - \hat{\rho}_t(h,h')| \leq \sqrt{\frac{2\beta_t\hat{\rho}_t(h,h')}{t}}+ \frac{\beta_t}{t}\right\}
\end{align*}
By (empirical) Bernstein inequality plus union bound, it is easy to see $\fP(\calE_1 \cap \calE_2\cap \calE_3) \geq 1-\delta$.

\textbf{First we show the correctness.}

For any $t$ that $\log(t) = \fN$, assume that $h^* \in V_t$, then we have
\begin{align*}
    \hat{L}_t(h^*) - \hat{L}_t(\hat{h}_t)
    &=\hat{R}_t(h^*) - \hat{R}_t(\hat{h}_t) \\
    &\leq \bar{R}_t(h^*) - \bar{R}_t(\hat{h}_t) + \sqrt{\frac{\beta_t\hat{\rho}_t(h^*,\hat{h}_t)}{t}} + \frac{\beta_t}{2t} \\
    &\leq R^* - R_*(\hat{h}_t) + \sqrt{\frac{2\beta_t\hat{\rho}_t(h^*,\hat{h}_t)}{t}} + \frac{\beta_t}{t} + \rho_*(h^*,\hat{h}_t)\frac{2C_t}{t} \\
    &\leq \sqrt{\frac{2\beta_t\hat{\rho}_t(h^*,\hat{h}_t)}{t}} + \frac{\beta_t}{t} + \rho_*(h^*,\hat{h}_t)\frac{2C_t}{t} \\
    &\leq \sqrt{\frac{2\beta_t\hat{\rho}_t(h^*,\hat{h}_t)}{t}} + \frac{\beta_t}{t} 
        +  \left(\hat{\rho}_t(h^*,\hat{h}_t)+\sqrt{\frac{2\beta_t\hat{\rho}_t(h^*,\hat{h}_t)}{t}} + \frac{\beta_t}{t}\right)\frac{2C_t}{t}\\
    &\leq \sqrt{\frac{2\beta_t\hat{\rho}_t(h^*,\hat{h}_t)}{t}} + \frac{3\beta_t}{2t} + \frac{1}{2}\hat{\rho}_t(h^*,\hat{h}_t)
\end{align*}

where the first and forth inequality comes from the event $\calE_1$ and $\calE_3$, the second inequality comes from Lemma~\ref{lem: corruption related  2}, the third inequality comes from the definition of $R^*$ and last inequality comes from 
$\sqrt{\frac{2\beta_t\hat{\rho}_t(h^*,\hat{h}_t)}{t}} \leq \frac{\hat{\rho}_t(h^*,\hat{h}_t)}{2} + \frac{\beta_t}{t}$ and the assumption that $\frac{C_t}{t} \leq \frac{1}{8}$.

According to the elimination condition~\ref{line: elimination condition} in Algo.~\ref{alg:robustcal}, this implies that $h^* \in V_{t+1}$. Therefore, by induction, we get that $h^* \in V_n$. By again using Lemma~\ref{lem: passive learning}, we can guarantee that
\begin{align*}
     R_*(h_{out}) - R^* \leq  \frac{22\log(|\calH|/\delta)}{n} +  \frac{4R^*\Ctotal}{n} + \sqrt{R^* \frac{8\log(|\calH|/\delta)}{n}}
\end{align*}

\textbf{Next we show the sample complexity.}
For any $t$ that $\log(t) = \fN$ and any $h \in V_t$, we have
\begin{align*}
    \Delta_h
    & = \left(\Delta_h - (\bar{R}_t(h)-\bar{R}_t(h^*)) \right)
    +\left((\bar{R}_t(h)-\bar{R}_t(h^*)) - (\hat{R}_t(h)-\hat{R}_t(h^*))\right)
    + (\hat{R}_t(h)-\hat{R}_t(h^*))\\
    &\leq \frac{2C_t}{t}\rho_*(h,h^*)
     + \sqrt{\frac{2\beta_t\rho_*(h,h^*)}{t}} + \frac{\beta_t}{t}
     +\hat{R}_t(h)-\hat{R}_t(\hat{h}_t) \\
    &\leq \frac{1}{4}\rho_*(h,h^*)
     + \sqrt{\frac{2\beta_t\rho_*(h,h^*)}{t}} + \frac{\beta_t}{t}
     + \sqrt{\frac{2\beta_t\hat{\rho}_t(h^*,\hat{h}_t)}{t}} + \frac{3\beta_t}{2t} + \frac{1}{2}\hat{\rho}_t(h^*,\hat{h}_t)\\
    &\leq \frac{19}{24}\rho_*(h,h^*)
     + \sqrt{\frac{2\beta_t\rho_*(h,h^*)}{t}}
     +\sqrt{\frac{2\beta_t\hat{\rho}_t(h,h^*)}{t}} 
     +\sqrt{\frac{2\beta_t\hat{\rho}_t(\hat{h}_t,h^*)}{t}}
     +\frac{6\beta_t}{t} \\
     &\leq \left(\frac{19}{24}+ \frac{25}{24\beta_4} \right)\rho_*(h,h^*)
        + \frac{13}{24\beta_4}\rho_*(\hat{h}_t,h^*) + \left(2\beta_4 + 6 + \frac{21}{2\beta_4}\right)\\
    &\leq \left(\frac{19}{24}+ \frac{25}{24\beta_4} \right)\Delta_h + \frac{13}{24\beta_4}\Delta_{\hat{h}_t} + \left(2\beta_4 + 6 + \frac{21}{2\beta_4}\right)\frac{\beta_t}{t}
    + 2\left(\frac{19}{24}+ \frac{25}{24\beta_4} + \frac{13}{24\beta_4}\right)R^*\\
    &\leq \left(\frac{19}{24}+ \frac{25}{24\beta_4} \right)\Delta_h + \left(2\beta_4 + 6 + \frac{169}{12\beta_4} + \frac{21}{2\beta_4}\right)\frac{\beta_t}{t}
    + 2\left(\frac{19}{24}+ \frac{25}{24\beta_4} + \frac{13}{24\beta_4} + \frac{13}{48\beta_4}\right)R^*
\end{align*}
where the first inequality comes from the event $\calE_2$ and the definition of $\hat{h}_t$, the second inequality comes from the elimination condition~\ref{line: elimination condition} in Algo.~\ref{alg:robustcal}. For the third and forth inequality, we use the fact $\sqrt{AB} \leq \frac{A+B}{2}$ multiple times and the last inequality comes from Lemma~\ref{lem: passive learning}. 

Finally, choose $\beta_4 = 25$ and solve this inequality, we get 
$
    \Delta_h \leq \frac{120 \beta_t}{t} + 12R^*
$

Therefore, we get the probability of query as 
\begin{align*}
    \fP\left(x_{t+1} \in \Dis(V_{t+1}) \right)
    & \leq \fP\left( \exists h \in V_{t+1}: h(x_t) \neq h^*(x_t), 
    \Delta_h \leq \frac{120 \beta_t}{t} + 12R^*\right) \\
    &\leq \fP\left( \exists h \in V_{t+1}: h(x_t) \neq h^*(x_t), 
    \rho_*(h,h^*) \leq 14R^* + \frac{120 \beta_t}{t}\right)\\
    &\leq \theta^*(14R^* + \frac{120 \beta_t}{t}) \left(14R^* + \frac{120 \beta_t}{t} \right)
\end{align*}
Therefore, we get the final prove by summing this probability over all the time.

\subsection{Why vanilla Robust CAL does not work?}
\label{sec(app): Why vanilla Robust CAL does not work?}

\begin{proposition}
When $R^* \gg 0$ and the corruptions are unknown to the learner, there exists an instance and an adversary such that the vanilla Robust CAL can never output the target hypothesis.
\end{proposition}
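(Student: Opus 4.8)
The plan is to build a two-hypothesis instance in which the optimal risk $R^*$ is bounded away from $0$, the optimal hypothesis $h^*$ beats its only competitor by a small margin, and a constant-rate oblivious adversary respecting $C_{[0,t]}\le t/8$ permanently flips the \emph{empirical} comparison, so that the shrinking vanilla slack can no longer protect $h^*$. Concretely, I would take $\calX=\{x_0\}$ a single atom with $\nu_*(x_0)=1$ and $\calH=\{h^*,h'\}$ with $h^*(x_0)=1$, $h'(x_0)=0$. Fix a small $\mu\in(0,\tfrac18)$ and set the base conditional $\eta_*^{x_0}=\tfrac12+\mu$, so that $R^*=R_*(h^*)=\tfrac12-\mu$, $R_*(h')=\tfrac12+\mu$, $\Delta_{h'}=2\mu$, and $\rho_*(h^*,h')=1$. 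Here $R^*=\tfrac12-\mu$ is bounded away from $0$ exactly because $\mu$ is small, which is where the hypothesis $R^*\gg 0$ enters.

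The adversary plays the constant corruption $\eta_t^{x_0}=\tfrac12-\mu'$ at every step, for some $\mu'>0$ with $\mu+\mu'\le\tfrac18$ (e.g.\ $\mu=\mu'=\tfrac1{16}$). Then $c_t=|\eta_*^{x_0}-\eta_t^{x_0}|=\mu+\mu'$, so $C_{[0,t]}/t\le\tfrac18$ for all $t$; the benign assumption of Theorem~\ref{prop: robust cal for known C} is satisfied. Since $h^*$ and $h'$ disagree everywhere, $\hat\rho_t(h^*,h')=1$ and every $x_t$ is queried, and the expected per-step loss difference is $\bar R_t(h^*)-\bar R_t(h')=(1-\eta_t^{x_0})-\eta_t^{x_0}=2\mu'>0$.

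Next I would track concentration. By the same Bernstein argument (with the same union bound over elimination epochs $\{t:\log(t)\in\fN\}$) used in the algorithm's own analysis, with probability at least $1-\delta$ one has $\hat L_t(h^*)-\hat L_t(h')\ge \mu'$ for all sufficiently large $t$, while the fluctuations are $O(\sqrt{\beta_t/t})\to 0$. In particular $\hat h_t=h'$, and the \emph{vanilla} elimination slack $o\!\left(\sqrt{2\beta_t\hat\rho_t(h^*,h')/t}+\beta_t/t\right)=o(\sqrt{\beta_t/t})$ tends to $0$ since $\beta_t$ is polylogarithmic. Hence at the first large epoch the vanilla condition $\hat L_t(h^*)-\hat L_t(\hat h_t)\le o(\sqrt{\beta_t/t}+\beta_t/t)$ is violated and $h^*$ is removed from the version space. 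Because the version spaces are nested, once $h^*$ is eliminated it never returns, so the algorithm can only output $h'$, whose excess risk is $\Delta_{h'}=2\mu>0$; thus vanilla RobustCAL never outputs $h^*$ and cannot reach accuracy below $2\mu$. For contrast, with the enlarged threshold the extra term $\tfrac12\hat\rho_t(h^*,h')=\tfrac12$ dominates the constant empirical gap $2\mu'\le\tfrac14$, so $h^*$ survives — precisely the effect the modification exploits.

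The main thing to get right is the quantitative interplay of three constants: the margin $2\mu$, the adversarial empirical shift $2\mu'$, and the budget bound $\mu+\mu'\le\tfrac18$. The construction works only when the margin is small relative to the budget, which forces $\mu<\tfrac18$ and hence $R^*>\tfrac38$; conversely, if $R^*$ were near $0$ then $\mu$ would be near $\tfrac12$, the margin would be near $1$, and the corruption effect $2\rho_*(h^*,h')\,C_{[0,t]}/t\le \tfrac14$ could never overturn it. Verifying this chain of inequalities, and confirming that the Bernstein concentration holds uniformly over all elimination epochs so that the elimination of $h^*$ is both triggered and permanent, is the crux of the argument and the only place any care is needed.
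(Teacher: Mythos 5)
Your construction does establish the literal claim: on your instance vanilla RobustCAL, with high probability, sees a persistent empirical gap $\hat L_t(h^*)-\hat L_t(h')\to 2\mu'$ that eventually exceeds the vanishing slack $\otil(\sqrt{\beta_t/t})$, eliminates $h^*$, and by nestedness of the version spaces can never output it; your bookkeeping of the constants ($c_t=\mu+\mu'\le\tfrac18$, $R^*=\tfrac12-\mu$ bounded away from $0$) is also sound. But you are missing the idea that gives the proposition its content, and which is the centerpiece of the paper's own proof: the adversary must \emph{stop} corrupting at some finite time $\tau$. In the paper's example the labels are corrupted only for $t\le\tau$ (at per-round level $1/32$), so $\Ctotal=\tau/32$ is fixed and $\Ctotal/n\to 0$ as $n$ grows; after $\tau$ the stream is clean, so passive ERM (Theorem~\ref{them: passive learning}) with $n\gg\tau$ recovers the target hypothesis — yet vanilla RobustCAL cannot, because it has already performed a hard, irreversible elimination of the best hypothesis. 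That contrast is exactly what isolates the failure to the vanilla elimination rule.

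In your construction the corruption runs forever at a constant rate, so the observed stream is i.i.d.\ with label probability $\tfrac12-\mu'$ — statistically indistinguishable from a \emph{clean} instance in which $h'$ is genuinely optimal. On such data no algorithm can be expected to output $h^*$; in particular your closing contrast is incorrect: with the enlarged threshold $h^*$ does survive in the version space, but the modified algorithm still outputs $\argmin_{h\in V_t}\hat L_t(h)=h'$, so modified RobustCAL (and CALruption, and passive ERM) "fail" on your instance exactly as the vanilla algorithm does. Your example therefore does not separate vanilla RobustCAL from anything else, and cannot support the Remark following Theorem~\ref{prop: robust cal for known C} (the necessity of enlarging the threshold) that this proposition exists to justify. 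The repair is small: keep your instance, but have the adversary corrupt only up to a sufficiently large finite $\tau$ and then stop. Elimination of $h^*$ occurs before $\tau$ with high probability, permanence does the rest, and the instance remains learnable by every other method once $n\gg\tau$ — which is the actual point of the counter-example.
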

\begin{proof}
Suppose $\calX = \{x_1,x_2,x_3\}$ where $\nu_*(x_1) = \xi_1 \gg 0, \nu_*(x_2) = \xi_2 \leq \frac{\xi_1}{64}$ and $\nu_*(x_3) = 1- \xi_1-\xi_2$. Here we further assume that $\nu$ is given to learner. For labels, we set $\eta_*^{x_1} = \frac{1}{2}, \eta_*^{x_2} = \eta_*^{x_2} = 1$. Now consider $h_1: h_1(x_1)= h_1(x_2) = h_1(x_3)=1$ and $h_2: h_2(x_1)= h_2(x_2)=0, h_2(x_3)=1$.
With some routine calculations, we can obtain that: 
\begin{align*}
    R^* = R_*(h_1) = \frac{1}{2}\xi_1,
    \quad 
    R_*(h_2) = \frac{1}{2}\xi_1 + \xi_2,
    \quad
    \rho_*(h_1,h_2) = \xi_1 + \xi_2
\end{align*}
Now suppose the adversary corrupts $\eta_*^{x_1}$ from $\frac{1}{2}$ to $\eta_s^{x_1} = \frac{15}{32}$ for all $s \leq \tau$ and will stop corrupting at certain time $\tau$. 
Consider this case $C_t \leq \frac{1}{32}t$, which satisfies our corruption assumption.

With such corruptions, we have that for any $t \leq \tau$,
\begin{align*}
    \bar{R}_t(h_1) = \frac{17}{32}\xi_1,
    \quad 
    \bar{R}_t(h_2) = \frac{15}{32}\xi_1 + \xi_2,
\end{align*}
Since $\bar{R}_t(h_2) \geq \bar{R}_t(h_1)$, so $h_2$ will never be eliminated before $\tau$. Next we show that $h_1$ can be eliminated before $\tau$.
Note that, when $\tau \geq O(\frac{1}{\xi_1})$, we can always find a proper $t \leq \tau$ such that
\begin{align*}
    \hat{R}_t(h_1) - \hat{R}_t(h_2)
    \geq \frac{1}{16}\xi_1 - \xi_2 - \otil\left(\sqrt{\frac{\xi_1+\xi_2}{t}} + \frac{1}{t}\right)
\end{align*}
In the non-corrupted setting, the confidence threshold of vanilla Robust CAL is always $\otil\left(\sqrt{\frac{\xi_1+\xi_2}{t}} + \frac{1}{t}\right)$, which can be smaller than $ \frac{1}{16}\xi_1 - \xi_2 - \otil\left(\sqrt{\frac{\xi_1+\xi_2}{t}} + \frac{1}{t}\right)$ for large enough $t$, so the above inequality shows that $h_1$ can be eliminated before $\tau$. 
This implies that, if our target accuracy $\varepsilon < \xi_2$, then the vanilla Robust CAL will never able to output the correct answer no matter how many unlabeled samples are given. On the other hand, in the passive learning, one can still output the target $h_1$ as long as $n \gg \tau$.
\end{proof}
\section{Analysis for Algo~\ref{alg:main}}
\label{sec(app): whole anaylisi for main algo}

\subsection{Notations}
Let $\calI_l$ denotes the epoch $l$, $C_l$ denotes $C_{\calI_l}$.

\subsection{Concentration guarantees on $\delta$-robust estimator}
In this section, we show the analysis by using the Catoni's estimator which is described in detail as below. Note that the same estimator has been used in previous works including \cite{wei2020taking,camilleri2021highdimensional,lee2021achieving}.

\begin{lemma}(Concentration inequality for Catoni's estimator \cite{wei2020taking})
\label{lem: concentratin bound for catoni -- original}
Let $\mathcal{F}_{0} \subset \cdots \subset \mathcal{F}_{n}$ be a filtration, and $X_{1}, \ldots, X_{n}$ be real random variables such that $X_{i}$ is $\mathcal{F}_{i}$ -measurable, $\mathbb{E}\left[X_{i} \mid \mathcal{F}_{i-1}\right]=\mu_{i}$ for some fixed $\mu_{i}$, and $\sum_{i=1}^{n} \mathbb{E}\left[\left(X_{i}-\mu_{i}\right)^{2} \mid \mathcal{F}_{i-1}\right] \leq V$ for some fixed $V .$ Denote $\mu \triangleq \frac{1}{n} \sum_{i=1}^{n} \mu_{i}$ and let $\widehat{\mu}_{n, \alpha}$ be the Catoni's robust mean
estimator of $X_{1}, \ldots, X_{n}$ with a fixed parameter $\alpha>0$, that is, $\widehat{\mu}_{n, \alpha}$ is the unique root of the function
$$
f(z)=\sum_{i=1}^{n} \psi\left(\alpha\left(X_{i}-z\right)\right)
$$
where
$$
\psi(y)=\left\{\begin{array}{ll}
\ln \left(1+y+y^{2} / 2\right), & \text { if } y \geq 0 \\
-\ln \left(1-y+y^{2} / 2\right), & \text { else }
\end{array}\right.
$$
Then for any $\delta \in(0,1)$, as long as $n$ is large enough such that $n \geq \alpha^{2}\left(V+\sum_{i=1}^{n}\left(\mu_{i}-\mu\right)^{2}\right)+2 \log (1 / \delta)$, we have with probability at least $1-2 \delta$,
\begin{align*}
\left|\widehat{\mu}_{n, \alpha}-\mu\right| 
&\leq \frac{\alpha\left(V+\sum_{i=1}^{n}\left(\mu_{i}-\mu\right)^{2}\right)}{n}+\frac{2 \log (1 / \delta)}{\alpha n}\\
&\leq \frac{\alpha\left(V+\sum_{i=1}^{n}\mu_{i}^{2}\right)}{n}+\frac{2 \log (1 / \delta)}{\alpha n}.    
\end{align*}
\end{lemma}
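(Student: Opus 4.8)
The plan is to follow Catoni's classical influence-function argument, adapted to the martingale filtration. Since $\psi$ is (strictly) non-decreasing, the function $f(z) = \sum_{i=1}^n \psi(\alpha(X_i - z))$ is non-increasing in $z$, and $\widehat{\mu}_{n,\alpha}$ is its unique root. Hence to certify the upper tail $\widehat{\mu}_{n,\alpha} \le \mu + b$ it suffices to show $f(\mu + b) \le 0 = f(\widehat{\mu}_{n,\alpha})$, and symmetrically to certify $\widehat{\mu}_{n,\alpha} \ge \mu - b$ it suffices to show $f(\mu - b) \ge 0$. Thus the entire problem reduces to high-probability control of the single scalar $f(z)$ evaluated at the two candidate points $z = \mu \pm b$.

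First I would exploit the defining bound $\psi(y) \le \ln(1 + y + y^2/2)$, which exponentiates to $\exp(\psi(\alpha(X_i - z))) \le 1 + \alpha(X_i - z) + \tfrac{\alpha^2}{2}(X_i - z)^2$. Taking the conditional expectation given $\mathcal{F}_{i-1}$, decomposing $(X_i - z)^2 = (X_i - \mu_i)^2 + 2(X_i - \mu_i)(\mu_i - z) + (\mu_i - z)^2$, and using $1 + x \le e^x$, I obtain
$$\mathbb{E}\big[\exp(\psi(\alpha(X_i - z))) \mid \mathcal{F}_{i-1}\big] \le \exp\!\left(\alpha(\mu_i - z) + \tfrac{\alpha^2}{2}(\sigma_i^2 + (\mu_i - z)^2)\right),$$
where $\sigma_i^2 = \mathbb{E}[(X_i - \mu_i)^2 \mid \mathcal{F}_{i-1}]$. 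This increment bound is exactly what makes
$$M_k = \exp\!\left(\sum_{i=1}^k \psi(\alpha(X_i - z)) - \alpha\sum_{i=1}^k (\mu_i - z) - \tfrac{\alpha^2}{2}\sum_{i=1}^k (\sigma_i^2 + (\mu_i - z)^2)\right)$$
a supermartingale with $\mathbb{E}[M_0] = 1$, hence $\mathbb{E}[M_n] \le 1$. A Chernoff/Markov bound on $M_n$ then gives, with probability at least $1 - \delta$, using $\sum_i(\mu_i - z) = n(\mu - z)$ and $\sum_i \sigma_i^2 \le V$,
$$f(z) \le \alpha n(\mu - z) + \tfrac{\alpha^2}{2}\Big(V + \sum_{i=1}^n(\mu_i - z)^2\Big) + \log(1/\delta).$$

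Next I would substitute $z = \mu + b$. Because $\sum_i(\mu_i - \mu) = 0$, the cross term vanishes and $\sum_i(\mu_i - \mu - b)^2 = D + nb^2$ with $D := \sum_i(\mu_i - \mu)^2$, so the right-hand side becomes $-\alpha n b + \tfrac{\alpha^2}{2}(V + D + nb^2) + \log(1/\delta)$. Choosing $b = \tfrac{\alpha(V + D)}{n} + \tfrac{2\log(1/\delta)}{\alpha n}$ and invoking the hypothesis $n \ge \alpha^2(V + D) + 2\log(1/\delta)$ to absorb the quadratic term $\tfrac{\alpha^2 n b^2}{2}$, this expression is $\le 0$, so $f(\mu + b) \le 0$ and the monotonicity reduction yields $\widehat{\mu}_{n,\alpha} \le \mu + b$. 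The lower tail follows identically from $\psi(y) \ge -\ln(1 - y + y^2/2)$ (equivalently, by rerunning the argument on $-X_i$), and a union bound over the two events delivers $|\widehat{\mu}_{n,\alpha} - \mu| \le b$ with probability at least $1 - 2\delta$. The second displayed inequality is then immediate from $D = \sum_i \mu_i^2 - n\mu^2 \le \sum_i \mu_i^2$.

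The main obstacle is the quadratic-in-$b$ bookkeeping: the $nb^2$ contribution has to be dominated, and verifying that the stated lower bound on $n$ is precisely what makes the chosen $b$ self-consistent requires a careful (if elementary) treatment of the two summands of $b$. The monotonicity-plus-root reduction and the supermartingale construction are standard; the only genuinely delicate points are confirming the exponential-moment bound for $\psi$ and threading the constants so that the final tolerance matches the claimed $\tfrac{\alpha(V+D)}{n} + \tfrac{2\log(1/\delta)}{\alpha n}$.
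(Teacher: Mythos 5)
Your proposal is correct. Note that the paper itself gives no proof of this lemma; it is imported verbatim from the cited reference (Wei and Luo, 2020), so there is no in-paper argument to compare against. Your reconstruction is the standard Catoni argument in its martingale form and is exactly the route taken in that reference: the monotonicity-plus-root reduction, the two-sided exponential-moment bound $e^{\psi(y)} \leq 1 + y + y^2/2$ (and its mirror for the lower tail), the supermartingale/Chernoff step with the random compensator $\sigma_i^2$ handled by $\mathcal{F}_{i-1}$-measurability, and the choice $b = \frac{\alpha(V+D)}{n} + \frac{2\log(1/\delta)}{\alpha n}$, whose self-consistency check reduces precisely to the hypothesis $n \geq \alpha^2(V+D) + 2\log(1/\delta)$ — all of this checks out, including the final crude bound $D \leq \sum_i \mu_i^2$.
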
 . 

\begin{lemma}[Concentration inequality in our case]
\label{lem: concentratin bound for catoni}
For any fixed epoch $l$ and any pair of classifier $h,h' \in \calH$, as long as $N_l \geq 4\log(1/\delta)$, with probability at least $1-\delta$, we have
\begin{align*}
    |(\hat{R}_l(h)-\hat{R}_l(h')) - W_l^{h,h'}|
    \leq \sqrt{ \frac{10\log(1/\delta)\hat{\rho}_l(h,h')}{N_l \min_{x \in \Dis(h,h')} q_l^x}}
\end{align*}
where $\hat{R}_l(h) = \frac{1}{|\calI_l|}\sum_{t \in \calI} \E_{y \sim \Bernouli(\eta_t^{x_t})} \left[\one\{h(x_t) \neq y\}\right]$ (restate)
\end{lemma}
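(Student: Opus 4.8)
The plan is to instantiate the generic Catoni bound (Lemma~\ref{lem: concentratin bound for catoni -- original}) with the centered importance-weighted loss differences. First I would fix the epoch $l$ and condition on the entire sequence of revealed points $\{x_t\}_{t \in \calI_l}$ together with all randomness prior to epoch $l$. This is legitimate because the query probabilities $q_l^x$ were computed at the end of epoch $l-1$ and $\hat{\rho}_l(h,h')$ is a deterministic function of $\{x_t\}_{t \in \calI_l}$, so every quantity on the right-hand side of the target inequality is frozen and it suffices to control the remaining randomness in the query indicators $Q_t \sim \Bernouli(q_l^{x_t})$ and the labels $y_t \sim \Bernouli(\eta_t^{x_t})$; a bound holding conditionally for every realization of $\{x_t\}$ transfers to an unconditional one. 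With this conditioning I set $X_t = \hat{\ell}_t(h) - \hat{\ell}_t(h')$ for $t \in \calI_l$ and take the natural filtration in which $(Q_t,y_t)$ is revealed at step $t$; since $x_t$ is conditioned on, each conditional mean $\mu_t = \E[X_t \mid \calF_{t-1}]$ is a fixed number, as the lemma requires.

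Next I would carry out the three elementary computations that feed the bound. For unbiasedness, since $Q_t$ and $y_t$ are independent given $x_t$ with $\E[Q_t \mid x_t] = q_l^{x_t}$, the importance weight cancels and $\mu_t = \E_{y \sim \Bernouli(\eta_t^{x_t})}[\one\{h(x_t)\neq y\} - \one\{h'(x_t) \neq y\}]$, whence $\frac{1}{N_l}\sum_{t \in \calI_l}\mu_t = \hat{R}_l(h) - \hat{R}_l(h')$, so the lemma's target mean is exactly what $W_l^{h,h'}$ estimates. For the second moment, using $Q_t^2 = Q_t$ and the identity $(\one\{h(x_t)\neq y\} - \one\{h'(x_t)\neq y\})^2 = \one\{x_t \in \Dis(h,h')\}$ (the difference is $\pm 1$ exactly on the disagreement region and $0$ elsewhere, for every value of $y$), I get $\E[X_t^2 \mid \calF_{t-1}] = \one\{x_t \in \Dis(h,h')\}/q_l^{x_t}$. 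The key algebraic step is then the exact cancellation
\[
V + \sum_{t}(\mu_t - \mu)^2 = \sum_t \E[X_t^2 \mid \calF_{t-1}] - N_l \mu^2 \le \sum_t \frac{\one\{x_t \in \Dis(h,h')\}}{q_l^{x_t}} \le \frac{N_l\, \hat{\rho}_l(h,h')}{\min_{x \in \Dis(h,h')} q_l^x} =: V_0,
\]
where I take $V = \sum_t \E[(X_t-\mu_t)^2 \mid \calF_{t-1}]$, and the last inequality replaces each $q_l^{x_t}$ by the minimum over the disagreement region and counts $\sum_t \one\{x_t \in \Dis(h,h')\} = N_l \hat{\rho}_l(h,h')$.

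Finally I would optimize the free parameter $\alpha$. Choosing $\alpha = \sqrt{2\log(1/\delta)/V_0}$ balances the two terms $\frac{\alpha V_0}{N_l}$ and $\frac{2\log(1/\delta)}{\alpha N_l}$, turning the bound into $\frac{2\sqrt{2}}{N_l}\sqrt{V_0 \log(1/\delta)} = \sqrt{8\, \hat{\rho}_l(h,h')\log(1/\delta)/(N_l \min_{x\in\Dis(h,h')} q_l^x)}$, which sits below the target $\sqrt{10\,(\cdot)}$ with room to absorb the factor-of-two loss in the log incurred when converting the lemma's $1-2\delta$ into $1-\delta$. Crucially, this $\alpha$ makes the admissibility condition $N_l \ge \alpha^2(V + \sum_t(\mu_t-\mu)^2) + 2\log(1/\delta)$ collapse to exactly $N_l \ge 4\log(1/\delta)$, matching the hypothesis. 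The main obstacle I anticipate is getting the variance step right: one must resist the weaker $V + \sum_t \mu_t^2$ bound appearing in Lemma~\ref{lem: concentratin bound for catoni -- original} and instead use the exact identity $V + \sum_t(\mu_t-\mu)^2 = \sum_t\E[X_t^2\mid\calF_{t-1}] - N_l\mu^2$ to avoid an extra constant, and one should separately dispatch the degenerate case $\hat{\rho}_l(h,h') = 0$, where every $X_t = 0$ and both sides of the claim vanish.
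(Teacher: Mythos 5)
Your proof is correct and takes essentially the same route as the paper's: both instantiate the generic Catoni lemma with $X_t = \hat{\ell}_t(h)-\hat{\ell}_t(h')$, compute the conditional mean and the second-moment bound $\one\{x_t\in\Dis(h,h')\}/q_l^{x_t}$, relax it to $N_l\hat{\rho}_l(h,h')/\min_{x\in\Dis(h,h')}q_l^x$, and tune $\alpha$ so that the bound lands under $\sqrt{10\,(\cdot)}$ and the admissibility condition reduces to $N_l \geq 4\log(1/\delta)$. Your refinements --- the explicit conditioning on $\{x_t\}_{t\in\calI_l}$, the exact identity for $V+\sum_t(\mu_t-\mu)^2$ in place of the paper's looser $V+\sum_t\mu_t^2$ relaxation, and the attention to the $1-2\delta$ conversion (which the paper silently ignores) --- are constant-level improvements on the same argument, not a different method.
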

\begin{proof}
First we calculate the expectation and variance of $(\hat{\ell}_t(h) - \hat{\ell}_t(h'))$ for each $t \in \calI_l$,
\begin{align*}
    \E_{y \sim \Bernouli(\eta_t^{x_t})} \E_{Q_t} \left[ \hat{\ell}_t(h) - \hat{\ell}_t(h')\right]
     &= \E_{y \sim \Bernouli(\eta_t^{x_t})} \left[\one\{h(x_t) \neq y\} - \one\{h'(x_t) \neq y\}\right] \\
     &\leq \one\{ h(x_t) \neq h'(x_t)\}
\end{align*}
and, 
\begin{align*}
    \Var_t\left(\hat{\ell}_t(h) - \hat{\ell}_t(h') \right)
    &\leq \E_{y \sim \Bernouli(\eta_t^{x_t})} \E_{Q_t} \left[ \left(\hat{\ell}_t(h) - \hat{\ell}_t(h') \right)^2\right] \\
    &=\E_{y \sim \Bernouli(\eta_t^{x_t})} \E_{Q_t} \left[ \frac{\one\{h(x_t) \neq h'(x_t)\}}{(q_l^{x_t})^2}\right]\\
    &= \frac{\one\{h(x_t) \neq h'(x_t)\}}{q_l^{x_t}}\\
    &\leq \frac{\one\{h(x_t) \neq h'(x_t)\}}{\min_{x' \in \Dis(h,h') }q_l^{x'}}
\end{align*}

Then according to the Lemma~\ref{lem: concentratin bound for catoni -- original}, we have
\begin{align*}
    &|(\hat{R}_l(h) -\hat{R}_l(h')) - W_l^{h,h'}|\\
    &\leq \frac{\alpha_l^{h,h'}\left(\frac{\sum_t \one\{ h(x_t) \neq h'(x_t)\}}{\min_{x' \in \Dis(h,h') }q_l^{x'}} + \sum_t \one\{ h(x_t) \neq h'(x_t)\} \right)}{N_l} + \frac{2\log(1/\delta)}{\alpha_l^{h,h'} N_l}\\
    &\leq \frac{2\alpha_l^{h,h'}\hat{\rho}_l(h,h')}{\min_{x' \in \Dis(h,h') }q_l^{x'}}  + \frac{2\log(1/\delta)}{\alpha_l^{h,h'} N_l} \\
    &= \sqrt{ \frac{10\log(1/\delta)\hat{\rho}_l(h,h')}{N_l \min_{x \in \Dis(h,h')} q_l^x}}
\end{align*}
The last one comes from choosing $\alpha_l^{h,h'} = \sqrt{\frac{2\log(1/\delta)\min_{x \in \Dis(h,h')} q_l^x}{5N_l\hat{\rho}_l(h,h')}}$ and also it is easy to verify that 
\begin{align*}
    &(\alpha_l^{h,h'})^2\left(\frac{N_l\hat{\rho}_l(h,h')}{\min_{x' \in \Dis(h,h') }q_l^{x'}} + \sum_t ((R_*(h)-R_*(h'))-(R_t(h)-R_t(h')))^2 \right) + 2\log(1/\delta)\\
    &\leq 4\log(1/\delta) \leq N_l.
\end{align*}
\end{proof}

\subsection{High probability events}
Define the event $\calE_{gap}$ as 
\begin{align*}
    \calE_{gap}
    :=\left\{\forall l, \forall h,h'\in \calH, |(\hat{R}_l(h)-\hat{R}_l(h')) - W_l^{h,h'}|
    \leq \sqrt{ \frac{10\beta_3\hat{\rho}_l(h,h')}{N_l \min_{x \in \Dis(h,h')} q_l^x}} \right\},
\end{align*}
and event $\calE_{dis1},\calE_{dis2}$ as 
\begin{align*}
    &\calE_{dis1}: =
    \left\{ \forall l, \forall h,h'\in \calH, |\hat{\rho}_l(h,h') - \rho_*(h,h')|
    \leq \sqrt{\frac{\beta_3\hat{\rho}_l(h,h')}{N_l}} + \frac{\beta_3}{N_l} \right\}\\
    &\calE_{dis2}:= \left\{ \forall l, \forall h,h'\in \calH,|\hat{\rho}_l(h,h') - \rho_*(h,h')|
    \leq \sqrt{\frac{\beta_3\rho_*(h,h')}{N_l}} + \frac{\beta_3}{N_l}\right\}.
\end{align*}
By condition~\ref{condition: robust estimator} of $\delta$-robust estimator in Algo~\ref{alg:main}, the (empirical) Bernstein inequality and the union bounds, we have easily get $\fP(\calE_{gap} \cap \calE_{dis1}\cap \calE_{dis2}) \geq 1-\delta$ as shown in the following lemmas.

\begin{lemma}
    $\fP(\calE_{est}) \geq 1-\delta/3$
\end{lemma}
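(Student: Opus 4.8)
The event $\calE_{est}$ here is the gap-estimation event $\calE_{gap}$, and the plan is a straightforward union bound of the single-epoch, single-pair concentration guarantee already recorded in Lemma~\ref{lem: concentratin bound for catoni}. The only bookkeeping is to choose the per-event failure probability so that (i) the generic $\log(1/\delta)$ appearing in that lemma is replaced by the uniform constant $\beta_3$ used in the definition of $\calE_{gap}$, and (ii) the total failure probability, summed over all epochs and all ordered hypothesis pairs, does not exceed $\delta/3$.

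First I would fix an epoch $l$ and an ordered pair $(h,h')\in\calH^2$ and apply Lemma~\ref{lem: concentratin bound for catoni} with per-event failure probability $\delta' = e^{-\beta_3}$, so that $\log(1/\delta')=\beta_3$. Since the query distribution $q_l^x$ is computed at the end of epoch $l-1$, it is measurable with respect to the history up to the start of epoch $l$; conditioning on that history freezes the $q_l^x$, and the cited concentration then applies with the in-epoch variance bound established inside its proof. The precondition $N_l\ge 4\log(1/\delta')=4\beta_3$ holds uniformly in $l$, because $N_l=\beta_1\epsilon_l^{-2}\ge\beta_1$ and $\beta_1$ is a large multiple of $\beta_3$ by its value in the Initialize step. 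Because $\log(1/\delta')=\beta_3$, the resulting bound is exactly
\begin{align*}
    |(\hat{R}_l(h)-\hat{R}_l(h')) - W_l^{h,h'}|
    \leq \sqrt{ \frac{10\beta_3\,\hat{\rho}_l(h,h')}{N_l\,\min_{x \in \Dis(h,h')} q_l^x}},
\end{align*}
that is, the per-$(l,h,h')$ clause of $\calE_{gap}$, and it fails with conditional (hence unconditional) probability at most $\delta'$.

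Next I would take a union bound. There are at most $\lfloor\log(n)\rfloor$ epochs and at most $|\calH|^2$ ordered pairs, so the probability that some clause fails is at most $\lfloor\log(n)\rfloor\,|\calH|^2\,e^{-\beta_3}$. Substituting $\beta_3 = 2\log(\tfrac32\lfloor\log(n)\rfloor|\calH|^2/\delta)$ gives $e^{-\beta_3}=\big(\delta/(\tfrac32\lfloor\log(n)\rfloor|\calH|^2)\big)^2$, whence the total failure probability is at most $\tfrac{4\delta^2}{9\lfloor\log(n)\rfloor|\calH|^2}\le \delta/3$ (using $\delta\le 1$ and $\lfloor\log(n)\rfloor|\calH|^2\ge 2$, which holds whenever there is anything to estimate). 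This yields $\fP(\calE_{gap})\ge 1-\delta/3$; the quadratic dependence of $e^{-\beta_3}$ on the union-bound count, arising from the leading factor $2$ in $\beta_3$, is precisely what supplies the slack.

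This lemma is essentially routine once Lemma~\ref{lem: concentratin bound for catoni} is available, so the only points needing care are light: verifying that the precondition $N_l\ge 4\beta_3$ holds for every epoch (immediate from $N_l\ge\beta_1\gg\beta_3$), and matching $\beta_3$ to $\log(1/\delta')$ so the fixed threshold in $\calE_{gap}$ dominates the per-event bound. The genuinely delicate issue — that the threshold is itself a data-dependent random quantity through $\hat{\rho}_l$ and $\min_x q_l^x$, and that the Catoni parameter $\alpha_l^{h,h'}$ depends on these same quantities — is already absorbed into the cited per-event lemma via conditioning on the pre-epoch history; for the present statement it therefore suffices to invoke that lemma and sum.
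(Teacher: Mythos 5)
Your proof is correct and follows essentially the same route as the paper's: the paper likewise obtains $\calE_{gap}$ (called $\calE_{est}$ in the lemma) by invoking the per-epoch, per-pair robust-estimator guarantee of Condition~\ref{condition: robust estimator} (instantiated via the Catoni lemma, Lemma~\ref{lem: concentratin bound for catoni}) and union bounding over the $|\calH|^2$ hypothesis pairs and the $O(\log n)$ epochs. Your write-up merely makes explicit what the paper leaves implicit — the choice $\delta' = e^{-\beta_3}$, the check $N_l \geq 4\beta_3$, and the conditioning that freezes $q_l^x$ — all of which are consistent with the paper's one-line argument.
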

\begin{proof}
    We prove this by condition~\ref{condition: robust estimator} in Algo~\ref{alg:main} and the union bound over $|\calH|^2$ number of hypothesis pairs and $\frac{1}{2}\lfloor\log(n) \rfloor$ number of epochs.
\end{proof}

\begin{lemma}
    $\fP(\calE_{gap1}) \geq 1-\delta/3,\fP(\calE_{gap2}) \geq 1-\delta/3$
\end{lemma}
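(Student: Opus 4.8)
The plan is to prove each of the two bounds by a concentration argument for the empirical disagreement rate $\hat\rho_l(h,h')=\frac1{N_l}\sum_{t\in\calI_l}\one\{h(x_t)\neq h'(x_t)\}$, followed by a union bound over hypothesis pairs and epochs. These two events are exactly the disagreement-concentration events defined above, so I identify $\calE_{gap1}=\calE_{dis1}$ (the empirical-variance form) and $\calE_{gap2}=\calE_{dis2}$ (the true-variance form); the preceding lemma already handled the robust gap-estimator event $\calE_{est}=\calE_{gap}$ via the Catoni machinery of Lemma~\ref{lem: concentratin bound for catoni}. The key structural observation that separates these two events from $\calE_{est}$ is that \emph{corruption plays no role here}: under the oblivious adversary model each $x_t$ is drawn i.i.d.\ from the uncorrupted marginal $\nu_*$ and is independent of whether $y_t$ was corrupted, so within a fixed epoch the indicators $\{\one\{h(x_t)\neq h'(x_t)\}\}_{t\in\calI_l}$ are i.i.d.\ $\mathrm{Ber}(\rho_*(h,h'))$ and $\E[\hat\rho_l(h,h')]=\rho_*(h,h')$ with no corruption-bias term. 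Thus vanilla Bernstein and empirical Bernstein suffice, and no importance-weighting or robust estimator is needed.

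Fix a pair $(h,h')$ and a prescheduled epoch $l$ of length $N_l$. For $\calE_{gap2}$ I would apply the standard Bernstein inequality to the centered indicators, using the true-variance bound $\mathrm{Var}(\one\{h(x_t)\neq h'(x_t)\})=\rho_*(1-\rho_*)\le\rho_*(h,h')$, to obtain $|\hat\rho_l(h,h')-\rho_*(h,h')|\le\sqrt{\beta_3\rho_*(h,h')/N_l}+\beta_3/N_l$ with probability at least $1-\delta'$. For $\calE_{gap1}$ I would instead invoke the empirical Bernstein inequality, bounding the empirical variance by $\hat\rho_l(h,h')$, which replaces $\rho_*$ by $\hat\rho_l$ inside the square root and yields $|\hat\rho_l(h,h')-\rho_*(h,h')|\le\sqrt{\beta_3\hat\rho_l(h,h')/N_l}+\beta_3/N_l$. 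The only arithmetic to check is that the leading empirical-Bernstein term $\sqrt{2\hat V\log(1/\delta')/N_l}$ collapses into the stated $\sqrt{\beta_3\hat\rho_l/N_l}$ once the factor-$2$ slack in $\beta_3=2\log(\tfrac32\lfloor\log n\rfloor|\calH|^2/\delta)$ is used, and that the additive $O(\log(1/\delta')/N_l)$ remainder is absorbed into $\beta_3/N_l$.

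Finally I would take the union bound. Each event ranges over all $|\calH|^2$ ordered pairs $(h,h')$ and over the $\le\frac12\lfloor\log n\rfloor$ prescheduled epochs, so choosing the per-instance failure probability $\delta'=\delta/\big(3|\calH|^2\cdot\tfrac12\lfloor\log n\rfloor\big)$ makes $\log(1/\delta')\le\tfrac12\beta_3$, which is exactly the budget baked into $\beta_3$. Summing failure probabilities then gives $\fP(\calE_{gap1})\ge1-\delta/3$ and $\fP(\calE_{gap2})\ge1-\delta/3$, and together with the preceding lemma for $\calE_{est}$ this delivers $\fP(\calE_{est}\cap\calE_{gap1}\cap\calE_{gap2})\ge1-\delta$.

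I expect no serious obstacle: the whole point is that the instance's $x$-marginal is never corrupted, so $\hat\rho_l$ is a clean i.i.d.\ Bernoulli average and textbook concentration applies verbatim, unlike the loss/gap estimates that required Catoni to survive importance-weighting and label corruption. The only mildly delicate points are (i) matching the constants between the standard empirical-Bernstein statement and the tidy form written in $\calE_{dis1}$, and (ii) keeping the union-bound cardinalities ($|\calH|^2$ pairs times $O(\log n)$ epochs, split three ways with $\calE_{est}$) consistent with the chosen value of $\beta_3$.
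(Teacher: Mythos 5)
Your proposal is correct and follows essentially the same route as the paper, whose entire proof is ``(empirical) Bernstein inequality plus a union bound over $|\calH|^2$ hypothesis pairs and $\frac{1}{2}\lfloor\log(n)\rfloor$ epochs''; your identification $\calE_{gap1}=\calE_{dis1}$, $\calE_{gap2}=\calE_{dis2}$ resolves the paper's naming mismatch in the intended way. Your added observation that the $x$-marginal $\nu_*$ is never corrupted, so the disagreement indicators are clean i.i.d.\ Bernoulli$(\rho_*(h,h'))$ variables needing no robust estimator, is exactly the (implicit) justification the paper's one-line proof relies on.
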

\begin{proof}
    We prove this by (empirical) Bernstein inequality in Algo~\ref{alg:main} and the union bound over $|\calH|^2$ number of hypothesis pairs and $\frac{1}{2}\lfloor\log(n) \rfloor$ number of epochs.
\end{proof}

\subsection{Gap estimation accuracy}
In this section, we show that $\hDelta_h^l$ is close to $\Delta_h$ for all $l,h$. To prove this, we first show some auxiliary lemmas as follows.

\begin{lemma}[Estimation accuracy for $\hat{\calD}_l$]
\label{lem: est helper}
On event $\calE_{gap}$, for any fixed epoch $l$, for any fixed pair $h,h' \in \calH$, suppose $j = \max\{ i | h,h' \in V_l^i\}$, we have
\begin{align*}
    &|(R_{\hat{\calD}_l}(h) - R_{\hat{\calD}_l}(h')) - (R_*(h) - R_*(h'))|\\
    &\leq \frac{1}{16}\left(\max\{ \hDelta_h^{l-1},\hDelta_{h'}^{l-1} \} + \epsilon_{l} \right)
        +\frac{4C_l}{N_l}R^* + \frac{2C_l}{N_l}\max\{ \Delta_h,\Delta_{h'}\}
\end{align*}
\end{lemma}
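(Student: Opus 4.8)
The plan is to insert two intermediaries — the empirical corrupted gap $\hat{R}_l(h)-\hat{R}_l(h')$ and the robust estimate $W_l^{h,h'}$ — between the estimated gap $R_{\hat{\calD}_l}(h)-R_{\hat{\calD}_l}(h')$ and the true gap $R_*(h)-R_*(h')$, and to control each piece separately. Concretely I would split
\begin{align*}
    &|(R_{\hat{\calD}_l}(h)-R_{\hat{\calD}_l}(h'))-(R_*(h)-R_*(h'))| \\
    &\leq |(R_{\hat{\calD}_l}(h)-R_{\hat{\calD}_l}(h'))-(\hat{R}_l(h)-\hat{R}_l(h'))| + |(\hat{R}_l(h)-\hat{R}_l(h'))-(R_*(h)-R_*(h'))|,
\end{align*}
where the first term is ``statistical'' and the second is ``corruption''. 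The corruption term is routine: writing $\bar{R}_{\calI_l}(h)=\frac{1}{N_l}\sum_{t\in\calI_l}R_t(h)$, Lemma~\ref{lem: corruption related  2} gives $|(\bar{R}_{\calI_l}(h)-\bar{R}_{\calI_l}(h'))-(R_*(h)-R_*(h'))|\leq 2\rho_*(h,h')\frac{C_l}{N_l}$, while $\hat{R}_l$ is the empirical-in-$x$ average whose expectation (over $x_t\sim\nu_*$, which are independent of the corruption) equals $\bar{R}_{\calI_l}$, so the residual $x$-sampling fluctuation is of order $\sqrt{\beta_3\rho_*(h,h')/N_l}$ and can be absorbed into the statistical term. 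Bounding $\rho_*(h,h')\leq\rho_*(h,h^*)+\rho_*(h',h^*)\leq \Delta_h+\Delta_{h'}+4R^*$ then converts $2\rho_*(h,h')\frac{C_l}{N_l}$ into corruption terms of the form $\frac{C_l}{N_l}R^*$ and $\frac{C_l}{N_l}\max\{\Delta_h,\Delta_{h'}\}$ with the stated constants.

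The crux is the statistical term, which I would control by an argmin-optimality argument. The key point is that the distribution $\bar{\calD}_l$ with the empirical $x$-marginal over $\calI_l$ and per-example parameter $\bar{\eta}^x$ equal to the time-average of $\eta_t^x$ over $\{t\in\calI_l:x_t=x\}$ is a feasible competitor in Line~\ref{line: estimate model}, and, because $\E_{y\sim\Bernouli(\cdot)}[\one\{h(x)\neq y\}]$ is affine in the label parameter, it satisfies $R_{\bar{\calD}_l}(h)=\hat{R}_l(h)$ for every $h$. Evaluating the objective minimized in Line~\ref{line: estimate model} at $\bar{\calD}_l$ and applying $\calE_{gap}$ pairwise shows its value is at most $\sqrt{10\beta_3/N_l}$; since $\hat{\calD}_l$ is the minimizer, the same bound holds at $\hat{\calD}_l$, i.e. for every pair $|(R_{\hat{\calD}_l}(h)-R_{\hat{\calD}_l}(h'))-W_l^{h,h'}|\leq\sqrt{10\beta_3\hat{\rho}_l(h,h')/(N_l\min_{x\in\Dis(h,h')}q_l^x)}$. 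A second application of $\calE_{gap}$ to $W_l^{h,h'}$ together with the triangle inequality then bounds the statistical term by $2\sqrt{10\beta_3\hat{\rho}_l(h,h')/(N_l\min_{x\in\Dis(h,h')}q_l^x)}$.

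It remains to turn this into a multiple of $\max\{\hDelta_h^{l-1},\hDelta_{h'}^{l-1}\}+\epsilon_l$, where the query-probability design enters. For $x\in\Dis(h,h')$ we have $(h,h')\in\calZ(x)$, so the query rule forces $\min_{x\in\Dis(h,h')}q_l^x\geq\frac{\beta_1\hat{\rho}_{l-1}(h,h')}{N_l}\epsilon_{k(h,h',l)}^{-2}$, which cancels the $N_l$ and the $\hat{\rho}_l$ factors and leaves $2\sqrt{(10\beta_3/\beta_1)(\hat{\rho}_l/\hat{\rho}_{l-1})}\,\epsilon_{k(h,h',l)}$. Because $\hDelta_h^{l-1}\geq\epsilon_{l-1}$ always and $\epsilon_i=2^{-i}$, the layer index $k(h,h',l)=\max\{i:h,h'\in V_l^i\}$ (defined in Line~\ref{line: contruct hypothesis layers}) satisfies $\epsilon_{k(h,h',l)}\leq 2\max\{\hDelta_h^{l-1},\hDelta_{h'}^{l-1}\}$; bounding the ratio $\hat{\rho}_l/\hat{\rho}_{l-1}=O(1)$ via $\calE_{dis1},\calE_{dis2}$ and using $\beta_1=32\cdot 640\,\beta_3$ makes the prefactor small enough that this is at most $\frac{1}{16}(\max\{\hDelta_h^{l-1},\hDelta_{h'}^{l-1}\}+\epsilon_l)$, as required.

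The main obstacle I expect is the argmin-optimality step: one must exhibit the competitor $\bar{\calD}_l$ whose risks reproduce the corrupted empirical risks \emph{exactly}, verify it is admissible, and check that the min--max objective evaluated there is controlled \emph{uniformly} over all $|\calH|^2$ pairs by $\calE_{gap}$ alone. The secondary nuisances are bookkeeping the $x$-marginal fluctuation and the ratio $\hat{\rho}_l/\hat{\rho}_{l-1}$ so that both are genuinely dominated by the statistical term, and tracking the constants tightly enough (via the choice of $\beta_1$ relative to $\beta_3$) that the statistical error lands below the $\frac{1}{16}$ threshold rather than merely within a constant of it.
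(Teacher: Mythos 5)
Your proposal reproduces the paper's own proof nearly step for step: the same triangle-inequality decomposition through $\hat{R}_l$ and $W_l^{h,h'}$, the same argmin-optimality step (what the paper compresses into ``by the definition of $\hat{\calD}_l$'' is exactly your competitor $\bar{\calD}_l$ with $R_{\bar{\calD}_l}(h)=\hat{R}_l(h)$ for all $h$, which you rightly make explicit), the same cancellation of $\min_{x\in\Dis(h,h')}q_l^x$ against the query rule, the same layer-index argument relating $\epsilon_{k(h,h',l)}$ to $\max\{\hDelta_h^{l-1},\hDelta_{h'}^{l-1}\}+\epsilon_l$, and the same corruption bound via Lemma A.2 and $\rho_*(h,h')\leq 2R^*+\Delta_h+\Delta_{h'}$.

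The one step of yours that is false as stated is the one you file under ``secondary nuisances'': the claim that $\hat{\rho}_l(h,h')/\hat{\rho}_{l-1}(h,h')=O(1)$ on $\calE_{dis1}\cap\calE_{dis2}$. When $\rho_*(h,h')\lesssim\beta_3/N_{l-1}$, these events permit $\hat{\rho}_{l-1}(h,h')=0$ (hence $q_l^x=0$ for every $x\in\Dis(h,h')$) while $\hat{\rho}_l(h,h')>0$, so no uniform constant bounds the ratio. You have in fact located a genuine inconsistency in the paper: its proof of this lemma lower-bounds $\min_{x\in\Dis(h,h')}q_l^x$ by $\frac{\beta_1\hat{\rho}_l(h,h')}{N_l}\epsilon_j^{-2}$, i.e.\ with the \emph{current} epoch's empirical disagreement so that the cancellation is exact, whereas the algorithm defines $q_l^x$ through $\hat{\rho}_{l-1}$; the paper never confronts the mismatch you tried to patch. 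The correct patch is a case split, not a ratio bound. If $\rho_*(h,h')\geq 4\beta_3/N_{l-1}$, the concentration events give $\hat{\rho}_{l-1}(h,h')\geq\rho_*(h,h')/4$ and $\hat{\rho}_l(h,h')\leq 2\rho_*(h,h')$, so the ratio is bounded by an absolute constant and your argument goes through with a slightly larger prefactor. If instead $\rho_*(h,h')<4\beta_3/N_{l-1}$, then $\hat{\rho}_l(h,h')=O(\beta_3/N_l)$, and since the $x$-marginal of $\hat{\calD}_l$ is the empirical distribution on $\calI_l$ (the paper's Remark below Algorithm~\ref{alg:main}), both $|R_{\hat{\calD}_l}(h)-R_{\hat{\calD}_l}(h')|\leq\hat{\rho}_l(h,h')$ and $|R_*(h)-R_*(h')|\leq\rho_*(h,h')$ are $O((\beta_3/\beta_1)\epsilon_l^2)$, so the left-hand side is already far below $\tfrac{1}{16}\epsilon_l$ and the lemma holds trivially for such pairs, with no appeal to $\calE_{gap}$ at all. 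A last bookkeeping note: even granting exact cancellation, $2\sqrt{10\beta_3/\beta_1}\,\epsilon_j$ with $\epsilon_j\leq 2(\max\{\hDelta_h^{l-1},\hDelta_{h'}^{l-1}\}+\epsilon_l)$ and $\beta_1=32\cdot 640\,\beta_3$ gives a prefactor of roughly $0.09>1/16$, so neither your constants nor the paper's quite close the claimed $\tfrac{1}{16}$; this is repaired by taking $\beta_1$ a larger constant multiple of $\beta_3$ and does not affect anything downstream.
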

\begin{proof}

Firstly we show that, for any pair $h,h' \in \calH$ we have
\begin{align*}
    &|(R_{\hat{\calD}_l}(h) - R_{\hat{\calD}_l}(h')) - (R_*(h) - R_*(h'))|\\
    &\leq |(R_{\hat{\calD}_l}(h) - R_{\hat{\calD}_l}(h')) - W_l^{h,h'}| + |W_l^{h,h'} - (\hat{R}_l(h)-\hat{R}_l(h'))| 
        + |(\hat{R}_l(h)-\hat{R}_l(h')) - (R_*(h) - R_*(h'))| \\
    &\leq \max_{h_1,h_2 \in \calH}|\left((R_{\hat{\calD}_l}(h_1) - R_{\hat{\calD}_l}(h_2)) - W_l^{h_1,h_2}| \sqrt{\frac{\min_{x\in\Dis(h_1,h_2)}q_l^x}{\hat{\rho}_l(h_1,h_2)}}\right) \sqrt{\frac{\hat{\rho}_l(h,h')}{\min_{x\in\Dis(h,h')}q_l^x}}\\
        &\quad + |W_l^v - (\hat{R}_l(h)-\hat{R}_l(h'))| 
        + |(\hat{R}_l(h)-\hat{R}_l(h')) - (R_*(h) - R_*(h'))|\\
    &\leq \max_{h_1,h_2 \in \calH}|\left((\hat{R}_l(h_1)-\hat{R}_l(h_2)) - W_l^{h_1,h_2}| \sqrt{\frac{\min_{x\in\Dis(h_1,h_2)}q_l^x}{\hat{\rho}_l(h_1,h_2)}}\right) \sqrt{\frac{\hat{\rho}_l(h,h')}{\min_{x\in\Dis(h,h')}q_l^x}}\\
        &\quad + |W_l^v - (\hat{R}_l(h)-\hat{R}_l(h'))| 
        + |(\hat{R}_l(h)-\hat{R}_l(h')) - (R_*(h) - R_*(h'))|\\
    &\leq 2\max_{h_1,h_2 \in \calH}|\left((\hat{R}_l(h_1)-\hat{R}_l(h_2)) - W_l^{h_1,h_2}| \sqrt{\frac{\min_{x\in\Dis(h_1,h_2)}q_l^x}{\hat{\rho}_l(h_1,h_2)}}\right) \sqrt{\frac{\hat{\rho}_l(h,h')}{\min_{x\in\Dis(h,h')}q_l^x}}\\
        &\quad + |(\hat{R}_l(h)-\hat{R}_l(h')) - (R_*(h) - R_*(h'))|\\
    &\leq  2\sqrt{ \frac{10\beta_3}{N_l}} \sqrt{\frac{\hat{\rho}_l(h,h')}{\min_{x\in\Dis(h,h')}q_l^x}} + |(\hat{R}_l(h)-\hat{R}_l(h')) - (R_*(h) - R_*(h'))|
\end{align*}
The third inequality comes from the definition of $\hat{\calD}_l$ and the last inequality comes from the Condition~\ref{condition: robust estimator} of $\delta$-robust estimator in Algo.~\ref{alg:main}.

For the first term, for any $x \in \Dis(h,h')$, by the definition of $q_l^x$ in line~\ref{line: end calcualting $q$} and the fact that $(h,h') \in \calZ(x)$, we have that,
\begin{align*}
    q_l^x \geq \frac{\beta_1\hat{\rho}_l(h,h')}{N_l}\epsilon_{j}^{-2}. \quad
    \text{, where $j = \max\{i \in [l-1]\mid h,h' \in V_l^i\}$}
\end{align*}
So we can further lower bound the $\min_{x\in\Dis(h,h')}q_l^x$ by 
\begin{align*}
    \min_{x\in\Dis(h,h')}q_l^x 
    \geq \frac{\beta_1\hat{\rho}_l(h,h')}{N_l}\epsilon_{j}^{-2}
    \quad
    \text{, where $j = \max\{i \in [l-1]\mid h,h' \in V_l^i\}$}
\end{align*}
and therefore upper bound the first term as
\begin{align*}
    2\sqrt{  \frac{10\beta_3}{N_l}} \sqrt{\frac{\hat{\rho}_l(h,h')}{\min_{x\in\Dis(h,h')}q_l^x}}
    \leq 2\sqrt{\frac{10\beta_3}{\beta_1}}\epsilon_j.
\end{align*}

For the first term, by the definition of $q_l^x$ in line~\ref{line: end calcualting $q$} and the fact that $(h,h') \in \calZ(x)$, we have that, for any fixed $x$, 
\begin{align*}
    q_l^x \geq \frac{\beta_1\hat{\rho}_l(h,h')}{N_l}\epsilon_{j}^{-2}. \quad
    \text{, where $j = \max\{i \in [l-1]\mid h,h' \in V_l^i\}$}
\end{align*}

For the second term, by the definition of corruptions, we have
\begin{align*}
    &|(\hat{R}_l(h)-\hat{R}_l(h')) - (R_*(h) - R_*(h'))|\\
    &\leq|(\hat{R}_l(h)-\hat{R}_l(h')) - (\avgR_l(h)-\avgR_l(h'))| + |(\avgR_l(h)-\avgR_l(h')) - (R_*(h) - R_*(h'))|\\
    &\leq 2\sqrt{\frac{\beta_3}{N_l}}+\frac{2C_l}{N_l}\rho_*(h,h')\\
    &\leq 2\sqrt{\frac{\beta_3}{\beta_1}}\epsilon_l+\frac{2C_l}{N_l}\left(\rho_*(h,h^*) + \rho_*(h',h^*)\right)\\
    &\leq 2\sqrt{\frac{\beta_3}{\beta_1}}\epsilon_l+\frac{4C_l}{N_l}R^* + \frac{2C_l}{N_l}\max\{ \Delta_h,\Delta_{h'}\}
\end{align*}
where the second inequality comes from Bernstein inequality and Lemma~\ref{lem: corruption related  2}.

Finally we are going to make the connection between $\epsilon_j$ and the $\hDelta_h^{l-1},\hDelta_{h'}^{l-1}$. Note that if $j < {l-1}$, by definition of $j$, we must have
$h,h' \not in  V_l^{j+1}$. By the definition that $\forall h \notin V_{l+1}^i, \hDelta_h^l \geq \epsilon_i$, we have
\begin{align*}
    \max\{ \hDelta_h^{l-1},\hDelta_{h'}^{l-1} \} > \epsilon_{j+1} = \frac{\epsilon_j}{2}.
\end{align*}
and if $j = {l-1}$, we directly have $\frac{\epsilon_j}{2} \leq \epsilon_l$.
Therefore, we have $\frac{\epsilon_j}{2} \leq \max\{ \hDelta_h^{l-1},\hDelta_{z'}^{l-1} \} + \epsilon_{l}$.
\end{proof}

\begin{lemma}[Upper bound of the estimated gap]
\label{lem: upper bound of the estimated gap}
On event $\calE_{gap}$, for any fixed epoch $l$, suppose its previous epoch satisfies that, for all $h \in \calH$,
\begin{align}
    &\Delta_h \leq \frac{3}{2}\hDelta_h^{l-1} + \frac{3}{2}\epsilon_{l-1} + 3g_{l-1},\\
    &\hDelta_h^{l-1} \leq 2\left( \Delta_h + \epsilon_{l-1} + g_{l-1} \right),
\end{align}
then we have,
\begin{align*}
    \hDelta_h^l \leq 2\left( \Delta_h + \epsilon_l + g_l \right)
\end{align*}
where 

. 
\end{lemma}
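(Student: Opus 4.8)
The plan is to unfold the definition of $\hDelta_h^l$ from Line~\ref{line: estimate gap 3} and control its non-trivial branch. If the maximum defining $\hDelta_h^l$ is attained by $\epsilon_l$, then $\hDelta_h^l=\epsilon_l\le 2(\Delta_h+\epsilon_l+g_l)$ holds trivially since $\Delta_h,g_l\ge 0$, so I assume the second argument is active and write $\hDelta_h^l = \big(R_{\hat\calD_l}(h)-R_{\hat\calD_l}(\hat{h}_*^l)\big) - \beta_2\hDelta_{\hat{h}_*^l}^{l-1}$. First I would apply Lemma~\ref{lem: est helper} to the pair $(h,\hat{h}_*^l)$ to replace the estimated risk difference by the true one, $R_{\hat\calD_l}(h)-R_{\hat\calD_l}(\hat{h}_*^l)\le (R_*(h)-R_*(\hat{h}_*^l)) + \mathrm{(err)} = \Delta_h-\Delta_{\hat{h}_*^l}+\mathrm{(err)}$, where the error splits into three families: a $\tfrac{1}{16}\max\{\hDelta_h^{l-1},\hDelta_{\hat{h}_*^l}^{l-1}\}$ term, an $O(\epsilon_l)$ term, and the two corruption terms $\tfrac{4C_l}{N_l}R^*$ and $\tfrac{2C_l}{N_l}\max\{\Delta_h,\Delta_{\hat{h}_*^l}\}$.

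The heart of the argument is to absorb the two $\hDelta^{l-1}$ appearances using the inductive hypotheses. On the positive $\tfrac{1}{16}\max\{\hDelta_h^{l-1},\hDelta_{\hat{h}_*^l}^{l-1}\}$ term I would invoke the second hypothesis applied to $h$ and to $\hat{h}_*^l$, namely $\hDelta_{h}^{l-1}\le 2(\Delta_{h}+\epsilon_{l-1}+g_{l-1})$, turning it into $\tfrac18\max\{\Delta_h,\Delta_{\hat{h}_*^l}\}$ plus lower-order $\epsilon_{l-1},g_{l-1}$ contributions. On the subtracted $-\beta_2\hDelta_{\hat{h}_*^l}^{l-1}$ term I would instead rearrange the first hypothesis into the lower bound $\hDelta_{\hat{h}_*^l}^{l-1}\ge \tfrac{2}{3}\Delta_{\hat{h}_*^l}-\epsilon_{l-1}-2g_{l-1}$, which converts $-\beta_2\hDelta_{\hat{h}_*^l}^{l-1}$ into a negative multiple of $\Delta_{\hat{h}_*^l}$. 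The point is that the genuinely negative terms $-\Delta_{\hat{h}_*^l}$ and $-\tfrac{2\beta_2}{3}\Delta_{\hat{h}_*^l}$ must dominate the positive $\Delta_{\hat{h}_*^l}$ contributions created by the $\max$ term and the corruption term, so that every occurrence of $\Delta_{\hat{h}_*^l}$ cancels and only a coefficient $\le 2$ in front of $\Delta_h$ survives; this is exactly where the prescribed constant $\beta_2=\tfrac{5}{32}$ is used, via a short case distinction on whether $\Delta_h$ or $\Delta_{\hat{h}_*^l}$ realizes the maximum.

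It then remains to match the residual $\epsilon$- and corruption terms against $2\epsilon_l+2g_l$. Collecting the $\epsilon_{l-1},\epsilon_l$ contributions and using $\epsilon_{l-1}=2\epsilon_l$ bounds them by $2\epsilon_l$. For the corruption and $g_{l-1}$ terms I would use the recursion $g_l=\tfrac14 g_{l-1}+\tfrac{2}{\beta_1}\epsilon_l^2 C_l\,(\cdots)$ that follows from the definition of $g_l$ together with $\epsilon_l=\tfrac12\epsilon_{l-1}$, and split on the corruption magnitude to reproduce the two indicators in $g_l$. When $\tfrac{2C_l}{N_l}\le\tfrac1{16}$, the identity $N_l=\beta_1\epsilon_l^{-2}$ makes $\tfrac{4C_l}{N_l}R^*$ coincide with the $2R^*$-weighted new contribution to $g_l$, while $\tfrac{2C_l}{N_l}\max\{\Delta_h,\Delta_{\hat{h}_*^l}\}\le\tfrac1{16}\max\{\Delta_h,\Delta_{\hat{h}_*^l}\}$ is folded back into the gap bookkeeping of the previous paragraph; when $\tfrac{2C_l}{N_l}>\tfrac1{16}$, I bound $R^*\le 1$ and $\Delta_h,\Delta_{\hat h_*^l}\le 1$ so that both corruption terms are absorbed by the indicator-one new contribution to $g_l$, using that a large $\tfrac{C_l}{N_l}$ forces this contribution to be correspondingly large.

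The main obstacle I anticipate is precisely this final constant bookkeeping: the proof only closes because $\beta_2$, the leading factor $2$, the threshold $\tfrac1{16}$, and the recursion coefficient $\tfrac14$ are tuned so that the negative $\Delta_{\hat{h}_*^l}$ mass exactly covers the positive $\Delta_{\hat{h}_*^l}$ mass and both corruption regimes fit under $2g_l$. Keeping the $\Delta_h$-coefficient at or below $2$ across both the $\Delta_h\ge\Delta_{\hat{h}_*^l}$ and $\Delta_h<\Delta_{\hat{h}_*^l}$ cases, while simultaneously respecting the corruption-regime split, is the delicate part; the concentration event $\calE_{gap}$ enters only through Lemma~\ref{lem: est helper} and requires no further invocation here.
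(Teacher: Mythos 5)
Your proposal has the same skeleton as the paper's proof: dispose of the trivial branch, expand the nontrivial branch of $\hDelta_h^l$, invoke Lemma~\ref{lem: est helper} on the pair $(h,\hat{h}_*^l)$, absorb the $\hDelta^{l-1}$ occurrences with the two inductive hypotheses, and close via the recursion $g_l=\tfrac14 g_{l-1}+\tfrac{2C_l}{N_l}\bigl(2R^*\one\{\tfrac{2C_l}{N_l}\le\tfrac1{16}\}+\one\{\tfrac{2C_l}{N_l}>\tfrac1{16}\}\bigr)$. The one real difference of route: the paper immediately discards $-\Delta_{\hat{h}_*^l}$ (bounding $R_*(h)-R_*(\hat{h}_*^l)\le\Delta_h$), converts the positive $\Delta_{\hat{h}_*^l}$ occurrences \emph{upward} into $\hDelta_{\hat{h}_*^l}^{l-1}$ via hypothesis (1), and cancels them against $-\beta_2\hDelta_{\hat{h}_*^l}^{l-1}$ using $\tfrac1{16}+\tfrac3{32}=\tfrac5{32}\le\beta_2$; you keep $-\Delta_{\hat{h}_*^l}$ and convert every $\hDelta^{l-1}$ \emph{downward} into true gaps, cancelling in ``$\Delta$-space.'' Your direction is legitimate and arguably more forgiving: the role of $\beta_2$ flips from a lower bound into the upper-bound constraint $\tfrac18+2\beta_2\le\tfrac12$ coming from the $g_{l-1}$ budget (satisfied by $\beta_2=\tfrac5{32}$), and in fact, since you retain $-\Delta_{\hat{h}_*^l}$, you could simply drop $-\beta_2\hDelta_{\hat{h}_*^l}^{l-1}\le 0$ and not use hypothesis (1) at all, because $-\Delta_{\hat{h}_*^l}$ already dominates the $\tfrac3{16}\max\{\Delta_h,\Delta_{\hat{h}_*^l}\}$ created by hypothesis (2) and the small-corruption term. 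Your small-corruption bookkeeping checks out: coefficients $\tfrac{19}{16}$ on $\Delta_h$, $\tfrac58$ on $\epsilon_l$, $\tfrac7{16}$ on $g_{l-1}$, and $\tfrac{4C_l}{N_l}R^*$, against budgets $2$, $2$, $\tfrac12$, $\tfrac{8C_l}{N_l}R^*$ respectively.

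The gap is in your large-corruption case, exactly the constant bookkeeping you flagged as delicate. Bounding $R^*\le1$ and $\max\{\Delta_h,\Delta_{\hat{h}_*^l}\}\le1$ \emph{separately} turns the corruption error into $\tfrac{4C_l}{N_l}+\tfrac{2C_l}{N_l}=\tfrac{6C_l}{N_l}$, but the budget that $2g_l$ provides in this regime is only $\tfrac{4C_l}{N_l}$ beyond $\tfrac12 g_{l-1}$, of which you already spend $\tfrac7{16}g_{l-1}$; the leftover slack ($\tfrac1{16}g_{l-1}$, the unused $\epsilon_l$ budget, and the unused negative $\Delta_{\hat{h}_*^l}$ mass, which is at most a constant less than one) cannot absorb a surplus of order $\tfrac{C_l}{N_l}$, which a fully corrupted epoch drives up to a constant. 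Keeping $\tfrac{2C_l}{N_l}\max\{\Delta_h,\Delta_{\hat{h}_*^l}\}$ inside the gap bookkeeping instead fails too, since its coefficient can reach $2$. The fix is to combine rather than separate:
\begin{align*}
\frac{4C_l}{N_l}R^*+\frac{2C_l}{N_l}\max\{\Delta_h,\Delta_{\hat{h}_*^l}\}
=\frac{2C_l}{N_l}\left(R^*+\max\{R_*(h),R_*(\hat{h}_*^l)\}\right)
\le \frac{4C_l}{N_l},
\end{align*}
or, sharper, return to the quantity $\tfrac{2C_l}{N_l}\rho_*(h,\hat{h}_*^l)\le\tfrac{2C_l}{N_l}$ appearing inside the proof of Lemma~\ref{lem: est helper}. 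Note the paper makes the same move silently: its indicator ``Corruption Term'' does not follow from the \emph{statement} of Lemma~\ref{lem: est helper} either, only from the $\rho_*\le1$ bound internal to that lemma's proof. With this one-line repair your argument closes in both regimes.
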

\begin{proof}
According to the definition of $\hDelta_h^l$, If $\inprod{h-\hat{h}_*^l,\hat{\theta}_l} - \beta_2 \hat{\Delta}_{\hat{h}_*^l}^{l-1} \leq \epsilon_l$, then the above trivially holds, Otherwise, we have
\begin{align*}
    \hDelta_h^l
    & = R_{\hat{\calD}_l}(h) -\left( R_{\hat{\calD}_l}(\hat{h}_*^l) + \beta_2 \hat{\Delta}_{\hat{h}_*^l}^{l-1} \right)\\
    & = \left((R_{\hat{\calD}_l}(h) - R_{\hat{\calD}_l}(\hat{h}_*^l)) - (R_*(h)-R_*(\hat{h}_*^l))\right) 
        + (R_*(h)-R_*(\hat{h}_*^l)) -\beta_2\hat{\Delta}_{\hat{h}_*^l}^{l-1} \\
    &\leq \left((R_{\hat{\calD}_l}(h) - R_{\hat{\calD}_l}(\hat{h}_*^l)) - (R_*(h)-R_*(\hat{h}_*^l))\right) 
        + \Delta_h -\beta_2\hat{\Delta}_{\hat{h}_*^l}^{l-1} \\
    & \leq \frac{1}{16}\left(\max\{ \hDelta_h^{l-1},\hDelta_{\hat{h}_*^l}^{l-1} \} + \epsilon_{l} \right)
        + \frac{1}{16} \max\{\Delta_h,\Delta_{\hat{h}_*^l}\}
        + \Delta_h - \beta_2 \hat{\Delta}_{\hat{h}_*^l}^{l-1} \\
       &\quad + \underbrace{\frac{4C_l}{N_l}R^*\one\{\frac{2C_l}{N_l} \leq \frac{1}{16}\} + \frac{2C_l}{N_l}\one\{\frac{2C_l}{N_l} > \frac{1}{16}\}}_\text{Corruption Term}\\
    & = \frac{1}{16}(\hDelta_h^{l-1} + \epsilon_{l})
        +\frac{1}{16}\Delta_h
       +\frac{1}{16}\hDelta_{\hat{h}_*^l}^{l-1}
        + \frac{1}{16}\Delta_{\hat{h}_*^l}
       -\beta_2 \hat{\Delta}_{\hat{h}_*^l}^{l-1}
       + \Delta_h + \text{Corruption Term }\\ 
    &\leq \left(\frac{1}{16}(\hDelta_h^{l-1} + \epsilon_{l})
        +\frac{1}{16}\Delta_h
        + \Delta_h \right)
         + \left(\frac{1}{16}\hDelta_{\hat{h}_*^l}
      +\frac{3}{32}\hDelta_{\hat{h}_*^l}^{l-1}
      -\beta_2 \hat{\Delta}_{\hat{h}_*^l}^{l-1}\right)
      +\frac{3}{32}(\epsilon_{l-1} + 2g_{l-1}) + \text{Corruption Term }
        \\
    &\leq \left(\frac{1}{16}(\hDelta_h^{l-1} + \epsilon_{l})
        +\frac{1}{16}\Delta_h+ \Delta_h \right)
        +\frac{3}{32}(\epsilon_{l-1} + 2g_{l-1})
        + \text{Corruption Term }
        \label{line: use beta assump}\\
    & = \frac{1}{16}\hDelta_h^{l-1}
        + \left(1+ \frac{1}{16}\right)\Delta_h 
        +\frac{1}{4}\epsilon_l + 4R^*\frac{C_l}{N_l}
             +\frac{3}{16}g_{l-1}\\
    & \leq 2(\Delta_h + \epsilon_l + g_l)
\end{align*}
Here the first inequality comes from the definition of $h^*$, the second inequality comes from Lemma~\ref{lem: est helper}, the third inequality comes from the the assumption (1) and the penultimate inequality  comes from the fact that $\beta_2 \geq \frac{5}{32}$. Finally, the last inequality comes from assumption (2).

\end{proof}

\begin{lemma}[Lower bound of the estimated gap]
\label{lem: lower bound of the estimated gap}
On event $\calE_{gap}$, for any fixed epoch $l$, suppose the following holds, for all $h \in \calH$,
\begin{align}
    \hDelta_h^{l-1} \leq 2\left( \Delta_h + \epsilon_{l-1} + g_{l-1} \right),
\end{align}
then we have,
\begin{align*}
    \Delta_h \leq \frac{3}{2}\hDelta_h^l + \frac{3}{2}\epsilon_l + 3g_l
\end{align*}
\end{lemma}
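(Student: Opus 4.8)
The plan is to upper bound $\Delta_h$ by routing through the estimated reference hypothesis $\hat{h}_*^l$, mirroring the proof of Lemma~\ref{lem: upper bound of the estimated gap} but run in the opposite direction. First I would write
\[
\Delta_h = \big(R_*(h) - R_*(\hat{h}_*^l)\big) + \Delta_{\hat{h}_*^l},
\]
and split on whether the clip in the definition of $\hDelta_h^l$ is active. When $\hDelta_h^l > \epsilon_l$ we have the exact identity $R_{\hat{\calD}_l}(h) - R_{\hat{\calD}_l}(\hat{h}_*^l) = \hDelta_h^l + \beta_2 \hDelta_{\hat{h}_*^l}^{l-1}$, while when the clip is active the same quantity is at most $\epsilon_l + \beta_2\hDelta_{\hat{h}_*^l}^{l-1} \leq \hDelta_h^l + \beta_2\hDelta_{\hat{h}_*^l}^{l-1}$ (using $\hDelta_h^l \geq \epsilon_l$). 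Applying Lemma~\ref{lem: est helper} to the pair $(h,\hat{h}_*^l)$ then converts the true risk difference $R_*(h)-R_*(\hat{h}_*^l)$ into this estimated one plus the estimation error $\tfrac{1}{16}(\max\{\hDelta_h^{l-1},\hDelta_{\hat{h}_*^l}^{l-1}\}+\epsilon_l) + \tfrac{4C_l}{N_l}R^* + \tfrac{2C_l}{N_l}\max\{\Delta_h,\Delta_{\hat{h}_*^l}\}$.

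The crux, and the step I expect to be the main obstacle, is bounding the gap $\Delta_{\hat{h}_*^l}$ of the estimated reference hypothesis; the whole purpose of the penalized selection rule for $\hat{h}_*^l$ is to keep this gap controllably small even under corruption. Here I would use optimality of $\hat{h}_*^l$ against $h^*$, namely $R_{\hat{\calD}_l}(\hat{h}_*^l)+\beta_2\hDelta_{\hat{h}_*^l}^{l-1}\leq R_{\hat{\calD}_l}(h^*)+\beta_2\hDelta_{h^*}^{l-1}$, so that $R_{\hat{\calD}_l}(\hat{h}_*^l)-R_{\hat{\calD}_l}(h^*)\leq \beta_2\hDelta_{h^*}^{l-1}$. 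Invoking the lemma's hypothesis $\hDelta_{h^*}^{l-1}\leq 2(\Delta_{h^*}+\epsilon_{l-1}+g_{l-1})=2(\epsilon_{l-1}+g_{l-1})$ (since $\Delta_{h^*}=0$) together with Lemma~\ref{lem: est helper} on the pair $(\hat{h}_*^l,h^*)$ turns this into $\Delta_{\hat{h}_*^l}\leq \order(\epsilon_{l-1}+g_{l-1})$, after a short self-bounding that moves the $\tfrac{1}{16}\hDelta_{\hat{h}_*^l}^{l-1}$ and $\tfrac{2C_l}{N_l}\Delta_{\hat{h}_*^l}$ contributions (both again controlled by the assumption) to the left-hand side. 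The delicate point is that the very penalty $\beta_2\hDelta^{l-1}$ that guarantees $\hat{h}_*^l$ competes with $h^*$ must itself be bounded on the $\hat{h}_*^l$ side, so the hypothesis of the lemma has to be applied simultaneously to $h$, $h^*$, and $\hat{h}_*^l$.

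Finally I would collect the terms. After substituting the assumption $\hDelta_h^{l-1}\leq 2(\Delta_h+\epsilon_{l-1}+g_{l-1})$ and the bound on $\Delta_{\hat{h}_*^l}$, the estimation error $\tfrac{1}{16}\max\{\hDelta_h^{l-1},\hDelta_{\hat{h}_*^l}^{l-1}\}$ and the corruption piece $\tfrac{2C_l}{N_l}\max\{\Delta_h,\Delta_{\hat{h}_*^l}\}$ each produce a small multiple of $\Delta_h$ on the right; moving it to the left raises the leading coefficient on $\hDelta_h^l$ from $1$ to $\tfrac{1}{1-x}\leq \tfrac{3}{2}$. The remaining constants are regrouped using $\epsilon_{l-1}=2\epsilon_l$, $N_l=\beta_1\epsilon_l^{-2}$, and $\beta_2=\tfrac{5}{32}$, together with the case split $\{\tfrac{2C_l}{N_l}\leq\tfrac{1}{16}\}$ versus its complement that is baked into the definition of $g_l$: the corruption contribution is exactly the increment advancing $g_{l-1}$ to $g_l$, and since $g_{l-1}\leq 4g_l$ (as $g_l=\tfrac{2}{\beta_1}\epsilon_l^2\sum_{s\le l}(\cdots)\geq \tfrac14 g_{l-1}$), the accumulated $g$-terms collapse into at most $3g_l$ and the $\epsilon$-terms into $\tfrac{3}{2}\epsilon_l$, yielding $\Delta_h\leq \tfrac{3}{2}\hDelta_h^l+\tfrac{3}{2}\epsilon_l+3g_l$ as claimed.
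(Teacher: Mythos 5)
Your route is structurally sensible in its individual steps (the handling of the clip in the definition of $\hDelta_h^l$, the use of optimality of $\hat{h}_*^l$ against $h^*$, and the $g$-increment bookkeeping $g_l=\tfrac14 g_{l-1}+\kappa$ are all correct), but the final step --- ``the $\epsilon$-terms collapse into $\tfrac32\epsilon_l$ and the $g$-terms into $3g_l$'' --- is asserted rather than computed, and for your route it is false. Your detour through $\hat{h}_*^l$ forces two payments the paper never makes: (i) Lemma~\ref{lem: est helper} is applied twice, to $(h,\hat{h}_*^l)$ and to $(\hat{h}_*^l,h^*)$, instead of once; and (ii) your chain carries the term $\bigl(1+2\beta_2+\tfrac{3}{16}\bigr)\Delta_{\hat{h}_*^l}=\tfrac32\Delta_{\hat{h}_*^l}$ (the $1$ from your decomposition, $2\beta_2$ from $\beta_2\hDelta^{l-1}_{\hat{h}_*^l}$ via the lemma's hypothesis, $\tfrac{3}{16}$ from the estimation-error maxima). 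The best bound your own argument can give for $\Delta_{\hat{h}_*^l}$ is of order $\beta_2(\epsilon_{l-1}+2g_{l-1})+(\text{estimation error})$, and this cannot be improved below order $\epsilon_l$: the penalized selection only guarantees $\hat{h}_*^l$ is good up to penalty differences, which can genuinely be $\beta_2\epsilon_{l-1}=\tfrac{5}{16}\epsilon_l$ even with exact estimation. Tallying your chain carefully gives $\Delta_h\leq\tfrac{16}{13}\hDelta_h^l+b\,\epsilon_l+c\,g_l$ with $b\approx 3.5$ and $c\approx 8$, not $b=\tfrac32$, $c=3$. There is no slack to absorb this: the paper's one-hop proof already lands at exactly $\tfrac{16}{13}\hDelta_h^l+\tfrac{19}{13}\epsilon_l+\tfrac{36}{13}g_l$, leaving only $\tfrac{1}{26}\epsilon_l$ and $\tfrac{3}{13}g_l$ below the stated constants, so even one extra invocation of Lemma~\ref{lem: est helper} (costing at least an extra $\tfrac1{16}\epsilon_l$ plus one extra corruption increment $\kappa$, which can equal $g_l$) already overflows the budget after dividing by $\tfrac{13}{16}$. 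These constants are load-bearing: the proof of Lemma~\ref{lem: upper bound of the estimated gap} consumes this lower bound with coefficient exactly $\tfrac32$ on $\hDelta^{l-1}$ (to balance $\tfrac{3}{32}+\tfrac{2}{32}\leq\beta_2=\tfrac{5}{32}$) and with $\tfrac32,3$ on the additive terms to conclude $\hDelta_h^l\leq 2(\Delta_h+\epsilon_l+g_l)$; with $c\approx 8$ that lemma's $g_l$ coefficient exceeds $2$, the mutual induction in Lemma~\ref{lem: upper and lower bound} no longer closes, and the degradation compounds over the $\Theta(\log n)$ epochs.

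The idea you are missing is that the paper never bounds $\Delta_{\hat{h}_*^l}$ at all. Since $\hat{h}_*^l$ minimizes $R_{\hat{\calD}_l}(\cdot)+\beta_2\hDelta^{l-1}_{(\cdot)}$, the reference point can be swapped \emph{inside} the definition of $\hDelta_h^l$ before any estimation error is paid:
\begin{align*}
    \hDelta_h^l
    \;\geq\; R_{\hat{\calD}_l}(h)-\bigl(R_{\hat{\calD}_l}(\hat{h}_*^l)+\beta_2\hDelta^{l-1}_{\hat{h}_*^l}\bigr)
    \;\geq\; R_{\hat{\calD}_l}(h)-\bigl(R_{\hat{\calD}_l}(h^*)+\beta_2\hDelta^{l-1}_{h^*}\bigr).
\end{align*}
Then Lemma~\ref{lem: est helper} is applied only once, to the pair $(h,h^*)$, where $\Delta_{h^*}=0$ kills the corruption-times-gap term and the hypothesis gives $\hDelta^{l-1}_{h^*}\leq 2(\epsilon_{l-1}+g_{l-1})$. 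Collecting terms yields $\hDelta_h^l\geq\tfrac{13}{16}\Delta_h-\tfrac{19}{16}\epsilon_l-\tfrac{9}{16}g_{l-1}-\kappa$, and solving gives the claimed constants. Your instinct that optimality of $\hat{h}_*^l$ against $h^*$ is the crux is exactly right --- but it must be used to eliminate $\hat{h}_*^l$ from the comparison entirely, not to produce a separate bound on $\Delta_{\hat{h}_*^l}$ that is then multiplied by $\tfrac32$ and fed back through a second estimation step.
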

\begin{proof}
\begin{align*}
    \hDelta_h^l
    & \geq R_{\hat{\calD}_l}(h) -\left( R_{\hat{\calD}_l}(h^*) + \beta_2 \hat{\Delta}_{h^*} \right)\\
    & = \left((R_{\hat{\calD}_l}(h) - R_{\hat{\calD}_l}(h^*)) - (R_*(h)-R^*)\right) 
        + \Delta_h -\beta_2\hat{\Delta}_{h^*}^{l-1} \\
    & \geq -\frac{1}{16}(\hDelta_h^{l-1} + \epsilon_{l})
        -\frac{1}{16}\Delta_h 
        - \frac{1}{16}\hDelta_{h^*}^{l-1}
      -\frac{1}{16}\Delta_{h^*}
       -\beta_2 \hat{\Delta}_{h^*}^{l-1}
       + \Delta_h\\
        &\quad - \underbrace{\left(4R^*\frac{C_l}{N_l}\one\left\{\frac{2C_l}{N_l}\leq \frac{1}{16}\right\} + \frac{C_l}{N_l}\one\left\{\frac{2C_l}{N_l}> \frac{1}{16}\right\}\right)}_\text{Corruption Term}\\
    &= -\frac{1}{16}(\hDelta_h^{l-1} + \epsilon_{l})
        -\frac{1}{16}\Delta_h
        - \frac{1}{16}\hDelta_{h^*}
       -\beta_2 \hat{\Delta}_{h^*}^{l-1}
       + \Delta_h- \text{Corruption Term}\\
    &\geq-\frac{1}{16}(2\Delta_h + 2\epsilon_{l-1}+2g_{l-1} + \epsilon_{l})
        +\Delta_h
        - (\frac{1}{16}+\beta_2)(2\epsilon_{l-1}+2g_{l-1})
        - \text{Corruption Term}\\
    & \geq \frac{13}{16}\Delta_h - \frac{38}{32}\epsilon_l - \frac{18}{32}g_{l-1} 
    - 4R^*\frac{C_l}{N_l} 
    - \text{Corruption Term}
    \\
    & \geq \frac{13}{16}\Delta_h - \frac{38}{32}\epsilon_l - \frac{18}{8}g_l
\end{align*}
Here the first inequality comes from the definition of $\hat{h}_*^l$, the second inequality comes from Lemma~\ref{lem: est helper}. and the third inequality comes from the upper bound of the estimated gap in Lemma~\ref{lem: upper bound of the estimated gap}.
\end{proof}

\textbf{Now we are ready to prove the final key lemma, which shows that such upper bound and lower bound for $\hDelta_h^l$ holds for all $l$ and $h$.}

\begin{lemma}[Upper bound and lower bound for all estimation]
\label{lem: upper and lower bound}
  On event $\calE_{gap}$, for any epoch $l$, for all $h \in \calH$,
  \begin{align}
      &\hDelta_h^l \leq 2\left( \Delta_h + \epsilon_l + g_l \right) \\
      &\Delta_h \leq \frac{3}{2}\hDelta_h^l + \frac{3}{2}\epsilon_l + 3g_l
  \end{align}
\end{lemma}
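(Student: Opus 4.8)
The plan is to prove both inequalities simultaneously by induction on the epoch index $l$, using the two preparatory lemmas (Lemma~\ref{lem: upper bound of the estimated gap} and Lemma~\ref{lem: lower bound of the estimated gap}) as the engine of the inductive step. Everything takes place on the event $\calE_{gap}$, which is assumed throughout, so the entire argument inherits its probability guarantee from that event.

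For the base case $l=0$, I would simply substitute the initialization values. Since $\hDelta_h^0 = 0$, $\epsilon_0 = 1$, and $g_0 = 0$ (the defining sum for $g_l$ is empty at $l=0$), the upper bound $\hDelta_h^0 = 0 \leq 2(\Delta_h + \epsilon_0 + g_0)$ holds trivially because the right-hand side is at least $2$. For the lower bound I would use only the crude fact that $\Delta_h = R_*(h) - R^* \in [0,1]$, so that $\Delta_h \leq 1 \leq \tfrac{3}{2} = \tfrac{3}{2}\hDelta_h^0 + \tfrac{3}{2}\epsilon_0 + 3g_0$.

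For the inductive step, assume both inequalities hold at epoch $l-1$ for every $h \in \calH$. I would first invoke Lemma~\ref{lem: upper bound of the estimated gap}, whose hypotheses are exactly the two induction assumptions at epoch $l-1$, to conclude the upper bound $\hDelta_h^l \leq 2(\Delta_h + \epsilon_l + g_l)$ at epoch $l$. I would then invoke Lemma~\ref{lem: lower bound of the estimated gap}, whose single hypothesis is the upper-bound inequality (assumption (2)) at epoch $l-1$, already available from the induction hypothesis, to conclude the lower bound $\Delta_h \leq \tfrac{3}{2}\hDelta_h^l + \tfrac{3}{2}\epsilon_l + 3g_l$ at epoch $l$. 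Since both bounds now hold at epoch $l$, the induction closes and the claim follows for all $l$.

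The genuine content of this lemma lives in the two preparatory lemmas rather than in the bookkeeping here, so the only subtlety I would flag is their dependency structure: the upper-bound lemma consumes both induction hypotheses from epoch $l-1$, whereas the lower-bound lemma consumes only the upper-bound hypothesis (via the substitution $\hDelta_h^{l-1} \leq 2(\Delta_h + \epsilon_{l-1} + g_{l-1})$ inside its proof, applied in particular at $h$ and at $h^*$, where $\Delta_{h^*}=0$). Because neither conclusion at epoch $l$ is needed as an input to the other, the two lemmas can be applied in either order at each step, and the induction is free of circularity. The main obstacle, such as it is, is confirming that the base case is non-vacuous and that the constants $\epsilon_l$ and $g_l$ on the two sides line up exactly with the statements of the sub-lemmas; no new estimation or concentration argument is required beyond what those lemmas already supply.
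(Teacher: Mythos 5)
Your proposal is correct and follows essentially the same route as the paper: a joint induction on $l$ whose inductive step invokes Lemma~\ref{lem: upper bound of the estimated gap} (consuming both epoch-$(l-1)$ bounds) and Lemma~\ref{lem: lower bound of the estimated gap} (consuming only the upper bound), with exactly the dependency structure you describe. The only cosmetic difference is that you anchor the base case at $l=0$ using the initialization $\hDelta_h^0=0$, $\epsilon_0=1$, $g_0=0$, whereas the paper anchors it at $l=1$ by verifying the upper bound directly ($\hDelta_h^1 \leq 1 \leq 2\epsilon_1 \cdot 2$) and obtaining the lower bound from Lemma~\ref{lem: lower bound of the estimated gap} with the trivial fact $\hDelta_h^0 \leq 2(\Delta_h+\epsilon_0+g_0)$; both choices are sound.
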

\begin{proof}
We prove this by induction. 

For the base case where $l=1$. we can easily have the following
\begin{align*}
    \hDelta_h^{1}
    \leq 1
    \leq 2\Delta_h + 2\epsilon_{1} + 2g_l
\end{align*}
and also, by using Lemma~\ref{lem: lower bound of the estimated gap} and the fact that 
$\hDelta_h^{0} \leq 2(\Delta_h + \epsilon_{0} + g_{0})$, it is easy to get 
\begin{align*}
    \Delta_h \leq \frac{3}{2}\hDelta_h^{1} + \frac{3}{2}\epsilon_{1} + 3g_{1}
\end{align*}
So the target inequality holds for $l=1$.

Suppose the target inequality holds for $l'-1$ where $l' \geq 2$, then by Lemma~\ref{lem: upper bound of the estimated gap}, we show that the first target inequality holds for $l'$. Also by Lemma~\ref{lem: lower bound of the estimated gap}, we show that the second target inequality holds for $l'$. Therefore, we finish the proof. 

\end{proof}

\subsection{Auxiliary lemmas}

\begin{lemma}
\label{lem: upper bound the difference of classifiers}
For any epoch $l$ and layer $j$, we have
\begin{align*}
    \max_{h \in V_l^j} \rho_*(h,h^*)
    \leq 2R^* + 3\epsilon_j + 3g_{l-1}
\end{align*}
\end{lemma}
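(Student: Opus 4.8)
The plan is to combine the lower-bound half of Lemma~\ref{lem: upper and lower bound} with the elementary relationship between the disagreement pseudo-distance $\rho_*$ and the excess risk $\Delta_h$, so this is essentially a two-line argument once the layer definition is unpacked.

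First I would unpack the definition of the layer $V_l^j$. By the construction in line~\ref{line: contruct hypothesis layers} of Algorithm~\ref{alg:main}, the set $V_l^j$ is exactly $\{h \in \calH : \hDelta_h^{l-1} \leq \epsilon_j\}$, and these layers are built for indices $j \in \{0,1,\dots,l-1\}$; in particular $j \leq l-1$. Hence any $h \in V_l^j$ satisfies $\hDelta_h^{l-1} \leq \epsilon_j$.

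Next I would apply the second (lower-bound) inequality of Lemma~\ref{lem: upper and lower bound} at epoch index $l-1$, which reads $\Delta_h \leq \tfrac{3}{2}\hDelta_h^{l-1} + \tfrac{3}{2}\epsilon_{l-1} + 3g_{l-1}$. Substituting $\hDelta_h^{l-1} \leq \epsilon_j$ gives $\Delta_h \leq \tfrac{3}{2}\epsilon_j + \tfrac{3}{2}\epsilon_{l-1} + 3g_{l-1}$. Since $\epsilon_i = 2^{-i}$ is decreasing and $j \leq l-1$, we have $\epsilon_{l-1} \leq \epsilon_j$, so the first two terms collapse and yield $\Delta_h \leq 3\epsilon_j + 3g_{l-1}$.

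Finally I would pass from excess risk to disagreement via the triangle inequality for the $0$--$1$ loss: pointwise $\one\{h(x) \neq h^*(x)\} \leq \one\{h(x) \neq y\} + \one\{h^*(x) \neq y\}$, so taking expectation under $\calD_*$ gives $\rho_*(h,h^*) \leq R_*(h) + R^* = \Delta_h + 2R^*$. Combining with the bound on $\Delta_h$ and taking the maximum over $h \in V_l^j$ produces $\max_{h \in V_l^j} \rho_*(h,h^*) \leq 2R^* + 3\epsilon_j + 3g_{l-1}$, as claimed. The argument is short and involves no real obstacle; the only things to watch are the index bookkeeping (verifying $j \leq l-1$ so that $\epsilon_{l-1} \leq \epsilon_j$) and invoking the correct, shifted epoch index $l-1$ when applying Lemma~\ref{lem: upper and lower bound}.
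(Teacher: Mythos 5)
Your proposal is correct and follows essentially the same route as the paper's own proof: both rest on the pointwise bound $\rho_*(h,h^*) \leq R_*(h) + R^* = 2R^* + \Delta_h$, the lower-bound half of Lemma~\ref{lem: upper and lower bound} applied at epoch $l-1$, and the layer definition $\hDelta_h^{l-1} \leq \epsilon_j$ together with $\epsilon_{l-1} \leq \epsilon_j$. The only difference is cosmetic ordering of the three steps, so nothing further is needed.
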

\begin{proof}
\begin{align*}
    \max_{h \in V_l^j} \rho_*(h,h^*)
    &\leq 2R^* + \max_{h \in V_l^j} \Delta_h\\
    &\leq 2R^* 
        + \max_{h \in V_l^j} \left( \frac{3}{2}\hDelta_h^{l-1} + \frac{3}{2}\epsilon_{l-1} + 3g_{l-1}\right) \\
    &\leq 2R^* + 3\epsilon_j + 3g_{l-1}
\end{align*}
The first inequality comes from the fact the
$\rho_*(h,h^*) \leq R_*(h) + R^* = 2R^* + \Delta_h$, the second inequality comes form the lower bound in Lemma~\ref{lem: upper and lower bound} and the last inequality is by the definition of $ V_l^j$.
\end{proof}


\subsection{Main proof for Theorem~\ref{them: main}}
\label{sec(app): main proof for main algo}
Here we assume $\log_4(\frac{n}{\beta_1}) \notin \fN$ and there are no corruptions in the last unfinished epoch $\lceil \log_4(\frac{n}{\beta_1}) \rceil$. This will not effect the result but will make the proof easier. Given that events $\calE_{gap},\calE_{dis1}$  and $\calE_{dis2}$, then we have the following proofs.

\textbf{ First we deal with the sample complexity.} 

For any $t \in \calI_l$,the probability of $x_t$ being queried ($Q_t$) is
\begin{align*}
    \E[Q_t]
    & = \sum_{x \in \calX} P(x_t = x) q_l^x\\
    & = \sum_{x \in \calX} P(x_t = x) \max_{(h,h') \in \calZ(x)}\frac{\beta_1\hat{\rho}_{l-1}(h,h')}{N_l}\epsilon_{k(h,h',l)}^{-2}\\
    & \leq \frac{\beta_1}{N_l}\sum_{x \in \calX}  P(x_t = x) \max_{(h,h') \in \calZ(x)}\rho_*(h,h')\epsilon_{k(h,h',l)}^{-2} \\
        &\quad + 4\frac{\beta_1}{N_l}\sum_{x \in \calX}  P(x_t = x)\sqrt{\rho_*(h,h')\epsilon_{k(h,h',l)}^{-2}} 
        + \frac{4\beta_1}{N_l}\\
    & \leq 5\frac{\beta_1}{N_l}\sum_{x \in \calX}  P(x_t = x) \max_{(h,h') \in \calZ(x)}\rho_*(h,h')\epsilon_{k(h,h',l)}^{-2}
        + 8\frac{\beta_1}{N_l}\\
    & = 5\frac{\beta_1}{N_l}\sum_{x \in \calX}  P(x_t = x) \rho_*(h_1^x,h_2^x)\epsilon_{j^x}^{-2} + 8\frac{\beta_1}{N_l}\\
    & \leq 5\frac{\beta_1}{N_l}\sum_{x \in \calX}  P(x_t = x) \max_{h_3,h_4 \in V_l^{j^x}} \rho_*(h_3,h_4)\epsilon_{j^x}^{-2} + 8\frac{\beta_1}{N_l}\\
    & \leq 10\frac{\beta_1}{N_l}\sum_{x \in \calX}  P(x_t = x) \max_{h \in V_l^{j^x}} \rho_*(h,h^*)\epsilon_{j^x}^{-2} + 8\frac{\beta_1}{N_l} \\
    & \leq 10\frac{\beta_1}{N_l}\sum_{x \in \calX}  P(x_t = x) \left( 2R^*\epsilon_{j^x}^{-2} + 3\epsilon_{j^x}^{-1} + 3g_{l-1}\epsilon_{j^x}^{-2} \right) + 8\frac{\beta_1}{N_l} \\
    & = 10\frac{\beta_1}{N_l} \sum_{i=1}^{l-1} \left( 2R^*\epsilon_{i}^{-2} + 3\epsilon_{i}^{-1} + 3g_{l-1}\epsilon_{i}^{-2} \right)\sum_{x \in \calX}  P(x_t = x) \one\{j^x=i\}  + 8\frac{\beta_1}{N_l} \\
    &\leq 10\frac{\beta_1}{N_l} \sum_{i=0}^{l-1} \left( 2R^*\epsilon_{i}^{-2} + 3\epsilon_{i}^{-1} + 3g_{l-1}\epsilon_{i}^{-2} \right) \fP( x \in   \Dis(V_l^i) )+ 8\frac{\beta_1}{N_l}
\end{align*}
Here $(h_1^x,h_2^x)= \argmax_{(h,h') \in \calZ(x)}\rho_*(h,h')\epsilon_{k(h,h',l)}^{-2}$ and $j^x = k(h_1^x,h_2^x,l)$. The first inequality comes from the event $\calE_{dis2}$, the second inequality comes from the fact that 
$\sqrt{\rho_*(h,h')\epsilon_{k(h,h',l)}^{-2}} \leq \rho_*(h,h')\epsilon_{k(h,h',l)}^{-2} + 1$
and penultimate inequality comes from the Lemma~\ref{lem: upper bound the difference of classifiers}.


Now we can use the standard techniques to bound $\fP( x \in   \Dis(V_l^i) )$ as follows
\begin{align*}
    \fP( x \in   \Dis(V_l^i) )
    & = \fP\left(\exists h,h' \in V_l^i: h(x)\neq h'(x) \right)\\
    & \leq \fP\left(\exists h \in V_l^i: h(x)\neq h^*(x) \right) \\
    &\leq \fP\left(\exists h \in \calH: h(x)\neq h^*(x), \rho_*(h,h^*) \leq 2R^* + 3\epsilon_i + 3g_{l-1} \right)\\
    &\leq \theta^*(2R^* + 3\epsilon_i + g_{l-1}) \left( 2R^* + 3\epsilon_i + 3g_{l-1}\right)
\end{align*}
where again the first inequality comes from Lemma~\ref{lem: upper bound the difference of classifiers}.

Combine with the above result, we get the expected number of queries inside a complete epoch $l$ as,
\begin{align*}
    \sum_{t \in \calI_l} \E[Q_t]
    &= 10\beta_1 \sum_{i=0}^{l-1} \theta^*(2R^* + 3\epsilon_i + g_{l-1})\\
        &\quad  * \left(4(R^*)^2\epsilon_i^{-2} + 12 R^*\epsilon_i^{-1} + 12 R^*g_{l-1}\epsilon_i^{-2} + 18g_{l-1}\epsilon_i^{-1} + 9g_{l-1}^2\epsilon_i^{-2} + 9\right) \\
    &\leq 20\beta_1 \theta^*(2R^* + 3\epsilon_{l-1} + g_{l-1})\\
        &\quad  *\left(4(R^*)^2\epsilon_l^{-2} + 12 R^*\epsilon_l^{-1} + \frac{24}{\beta_1} R^*\bar{C}_{l-1} + \frac{36}{\beta_1}\bar{C}_{l-1}\epsilon_{l-1} + \frac{36}{\beta_1^2}\bar{C}_{l-1}^2\epsilon_{l-1}^2 + 9\right)\\
    &\leq 20\beta_1 \theta^*(2R^* + 3\epsilon_{l-1} + g_{l-1})
      *\left(4(R^*)^2\epsilon_l^{-2} + 12 R^*\epsilon_l^{-1} + \frac{132}{\beta_1}\bar{C}_{l-1} + 10\right)
\end{align*}
where the second inequality comes from the fact that 
$g_l = \frac{2}{\beta_1}\epsilon_l^2\bar{C}_{l} $  and the third inequality comes from that fact that $\bar{C}_{l-1} \leq \sum_{s=1}^{l-1} C_s \leq 2\beta_1 \epsilon_{l-1}^{-2}$.

Summing over all $L = \lceil \frac{1}{2}\log(n/\beta_1) \rceil$ number of epochs, we have that, for any $n$,
\begin{align*}
    &\text{Query complexity}\\
    &\leq \sum_{l=1}^{L}\sum_{t \in \calI_l} \E[Q_t] \\
    &\leq 40\beta_1 \theta^*(2R^* + 3\epsilon_{L-1}+ g_{L-1})\left(4(R^*)^2\epsilon_{L}^{-2} + 12 R^*\epsilon_{L}^{-1}\right)\\
        &\quad + 40\beta_1 \theta^*(2R^* + 3\epsilon_{L-1}+ g_{L-1})L\left(\frac{132}{\beta_1}\bar{C}_{total} + 10\right)\\
    &=40\beta_1 \theta^*(2R^* + 3\epsilon_{L-1}+ g_{L-1})
        \left(4(R^*)^2\frac{n}{\beta_1} + 12 R^*\sqrt{\frac{n}{\beta_1}} + 5\log(n/\beta_1)\right)\\
        &\quad +2450 \theta^*(2R^* + 3\epsilon_{L-1}+ g_{L-1})\log(n/\beta_1)\bar{C}_{total}\\
    &=\theta^*(2R^* + 3\epsilon_{L-1}+ g_{L-1})  \left(160(R^*)^2n + 480 R^*\sqrt{n \beta_1} + 200\beta_1\log(n/\beta_1)\right)\\
        &\quad+2450 \theta^*(2R^* + 3\epsilon_{L-1}+ g_{L-1})\log(n/\beta_1)\bar{C}_{total}\\
    &\leq \order\left(\theta^*(R^* + 3\sqrt{\frac{\beta_1}{n}} + \frac{\bCtotal}{n})\left((R^*)^2n + \log(n/\beta_1)\right)\beta_1 \right) \\
        &\quad+ \order\left(\theta^*(R^* + 3\sqrt{\frac{\beta_1}{n}} + \frac{\bCtotal}{n})\log(n/\beta_1)\bar{C}_{total}\right) 
\end{align*}
where the last inequality comes from the following lower bound,
\begin{align*}
    3\epsilon_{L-1}+ g_{L-1}
    = 3\epsilon_{L-1}  + \frac{2}{\beta_1}\bCtotal \epsilon_{L-1}^2
    \geq 3\sqrt{\frac{\beta_1}{n}} + \frac{2\bCtotal}{n}
\end{align*}

\textbf{Now we will deal with the correctness. } By Lemma~\ref{lem: upper and lower bound}, we have
\begin{align*}
    \Delta_{h_{out}}
    &\leq \frac{3}{2}\hDelta_{h_{out}}^{L-1} + \frac{3}{2}\epsilon_{L-1} + 3g_{L-1} \\
    &\leq 3\epsilon_{L-1} + 3g_{L-1} \\
    &\leq 6\sqrt{\frac{2\beta_1}{n}}+ 3g_{L-1} \\
    &\leq 6\sqrt{\frac{2\beta_1}{n}} + 24\frac{\bar{C}_{total}}{n}
\end{align*}
where the second inequality comes from the definition of $\hout$ and $V_L^{L-1}$ and the third and last inequality is just by replacing the value of $\epsilon_{L-1}$ and $g_{L-1}$. 
\textbf{Finally, we can written this result in the $\varepsilon$-accuracy form.} Set $6\sqrt{\frac{2\beta_1}{n}} := \varepsilon$, we have $n = \frac{72\beta_1}{\varepsilon^2}$.

\end{document}